\documentclass{article}

\usepackage{fullpage}
\usepackage{amsfonts}
\usepackage{graphicx}
\usepackage[utf8]{inputenc}
\usepackage[T1]{fontenc}
\usepackage{url}
\usepackage{booktabs}
\usepackage{nicefrac}
\usepackage{microtype}
\usepackage{subfigure}
\usepackage{hyperref}
\usepackage[normalem]{ulem}
\usepackage{amsmath}
\usepackage{amssymb}
\usepackage{amsthm}
\usepackage{mathtools}
\usepackage[square,numbers,compress]{natbib}
\usepackage{amsopn}
\usepackage{bm}
\usepackage[capitalize,noabbrev]{cleveref}

\newcommand{\heaviside}{H}
\newcommand{\uniform}{U}

\def\vmu{{\bm{\mu}}}

\def\vlambda{{\bm\lambda}}
\def\vlam{{\bm\lambda}}
\def\vsig{{\bm\sigma}}

\def\va{{\bm{a}}}
\def\vb{{\bm{b}}}
\def\vc{{\bm{c}}}

\def\ve{{\bm{e}}}

\def\vh{{\bm{h}}}

\def\vm{{\bm{m}}}

\def\vq{{\bm{q}}}

\def\vu{{\bm{u}}}
\def\vv{{\bm{v}}}
\def\vw{{\bm{w}}}
\def\vx{{\bm{x}}}
\def\vy{{\bm{y}}}
\def\vz{{\bm{z}}}
\def\T{\top}

\def\mA{{\bm{A}}}
\def\mB{{\bm{B}}}
\def\mC{{\bm{C}}}
\def\mD{{\bm{D}}}

\def\mH{{\bm{H}}}

\def\mM{{\bm{M}}}

\def\mP{{\bm{P}}}

\def\mU{{\bm{U}}}
\def\mV{{\bm{V}}}
\def\mW{{\bm{W}}}

\newcommand{\mheaviside}[1]{\mH_{#1}}

\def\gD{{\mathcal{D}}}

\def\gI{{\mathcal{I}}}
\def\gJ{{\mathcal{J}}}

\def\gM{{\mathcal{M}}}
\def\gN{{\mathcal{N}}}

\def\gS{{\mathcal{S}}}
\def\Sc{\gS}

\def\gU{{\mathcal{U}}}
\def\gV{{\mathcal{V}}}
\def\gW{{\mathcal{W}}}
\def\gX{{\mathcal{X}}}

\newcommand{\R}{\mathbb{R}}
\newcommand{\st}{~~\mathrm{s.t.}~~}
\newcommand{\mathand}{~~\mathrm{and}~~}

\DeclareMathOperator*{\nullspace}{null}
\DeclareMathOperator{\sign}{sign}
\DeclareMathOperator{\rank}{rank}
\DeclareMathOperator{\vspan}{span}
\newcommand{\funcrank}[1]{\operatorname{rank}_I(#1)}
\newcommand{\epsfuncrank}[1]{\operatorname{rank}_{I,\varepsilon}(#1)}

\DeclareMathOperator{\range}{range}
\DeclareMathOperator{\Tr}{Tr}

\newcommand{\mixedvar}[2]{\mathcal{MV}\left(#1,#2\right)}
\newcommand{\mixedvarsv}[2]{\sigma_{#2}(#1)}
\newcommand{\gradcov}[1]{\mC_{#1}}

\newcommand{\setofnns}[1]{\mathcal{N}_2\left(#1\right)}
\newcommand{\densitysupp}{\mathcal{X}}
\newcommand{\interpcost}{\mathcal{I}}

\ifpdf
  \DeclareGraphicsExtensions{.eps,.pdf,.png,.jpg}
\else
  \DeclareGraphicsExtensions{.eps}
\fi

\theoremstyle{plain}
\newtheorem{theorem}{Theorem}[section]
\newtheorem{prop}[theorem]{Proposition}
\newtheorem{lemma}[theorem]{Lemma}
\newtheorem{corollary}[theorem]{Corollary}

\theoremstyle{definition}
\newtheorem{definition}[theorem]{Definition}
\newtheorem{example}[theorem]{Example}

\theoremstyle{remark}

\title{ReLU Neural Networks with Linear Layers are Biased Towards Single- and Multi-Index Models}

\author{Suzanna Parkinson\thanks{Committee on Computational and Applied Mathematics, University of Chicago, Chicago, IL 
(sueparkinson@uchicago.edu).}
\and Greg Ongie\thanks{Department of Mathematical and Statistical Sciences, Marquette University, Milwaukee, WI 
(gregory.ongie@marquette.edu).}
\and Rebecca Willett\thanks{Department of Statistics, Department of Computer Science, and Committee on Computational and Applied Mathematics, University of Chicago, Chicago, IL.}}

\begin{document}

\maketitle

\begin{abstract}
Neural networks often operate in the overparameterized regime, in which there are far more parameters than training samples, allowing the training data to be fit perfectly. That is, training the network effectively learns an interpolating function, and properties of the interpolant affect predictions the network will make on new samples. This manuscript explores how properties of such functions learned by neural networks of depth greater than two layers.
Our framework considers a family of networks of varying depths that all have the same {\em capacity} but different {\em representation costs}. 
The representation cost of a function induced by a neural network architecture is the minimum sum of squared weights needed for the network to represent the function; it reflects the function space bias associated with the architecture.
Our results show that adding additional linear layers to the input side of a shallow ReLU network yields a representation cost favoring functions 
 with low \emph{mixed variation} – that is, it has limited variation in directions orthogonal to a low-dimensional subspace and can be well approximated by a single- or multi-index model. 
This bias occurs because minimizing the sum of squared weights of the linear layers is equivalent to minimizing a low-rank promoting Schatten quasi-norm of a single ``virtual'' weight matrix.
Our experiments confirm this behavior in standard network training regimes. 
They additionally show that linear layers can improve generalization and the learned network 
is well-aligned with the true latent low-dimensional linear subspace when data is generated using a multi-index model.
\end{abstract}

\section{Introduction}
\label{sec:intro}
An outstanding problem in understanding the generalization properties of overparameterized neural networks is characterizing 
the inductive bias of various architectures -- i.e., characterizing the types of predictors learned when training networks with the capacity to represent large families of functions. 
Past work has explored this problem through the lens of {\em representation costs}.
Specifically, the representation cost of a function $f$ is the minimum sum of squared network weights necessary for the network to represent $f$. Representation costs are key to understanding how overparameterized neural networks trained with limited data are able to generalize well. For instance, imagine training a neural network to interpolate a set of training samples using weight decay regularization (i.e., $\ell^2$-regularization on the network weights); the corresponding interpolant will have low representation cost. Different network architectures are associated with different representation costs, so the network architecture will influence which interpolating function is learned, which can have a profound effect on test performance. The following key question then arises:
\textbf{How does network architecture affect which functions have minimum representation cost?} 

In this paper, we describe the representation cost associated with deep fully-connected networks having $L$ layers in which the first $L-1$ layers have linear activations and the final layer has a ReLU activation. As detailed in \Cref{sec:related}, networks related to this class play an important role in both theoretical studies of neural network generalization properties and experimental efforts.
{This is a particularly important family to study because
adding linear layers does not change the capacity or expressivity of a network, even though the number of parameters may change. This means that different behaviors for networks of different depths solely reflect the role of depth and not of capacity. \textbf{In effect, this framework isolates the effects of depth from those of expressivity.}

We show that adding linear layers to a ReLU network while using $\ell_2$-regularization (weight decay) is equivalent to fitting a two-layer ReLU network with 
nuclear or Schatten norm regularization on the innermost weight matrix and $\ell^2$-regularization on the outermost weights. 
The associated function space inductive bias corresponds to a notion of latent low-dimension structure that has close connections to multi- and single-index models, 
as illustrated in \Cref{fig:first_example}.
Specifically, we relate the function space inductive bias to the singular value spectrum of the expected gradient outer product (EGOP) matrix, where gradients are taken
with respect to the neural network inputs. We prove that the representation cost is bounded in terms of the \textit{mixed variation} and \textit{index rank} of a function, which are properties defined in terms of the EGOP singular values.
Our bounds imply that networks minimizing the representation cost must have an EGOP with low effective rank, where the rank decreases as more linear layers are added. Our numerical experiments on synthetic data show that with a moderate number of linear layers, the principal subspace of the learned function's EGOP is low-dimensional and closely approximates the principal subspace of the data-generating function's EGOP, which improves in- and out-of-distribution generalization.

\begin{figure}[ht!]
    \centering
    \subfigure[$L=2$ layers]{
    \includegraphics[width=0.3\columnwidth]{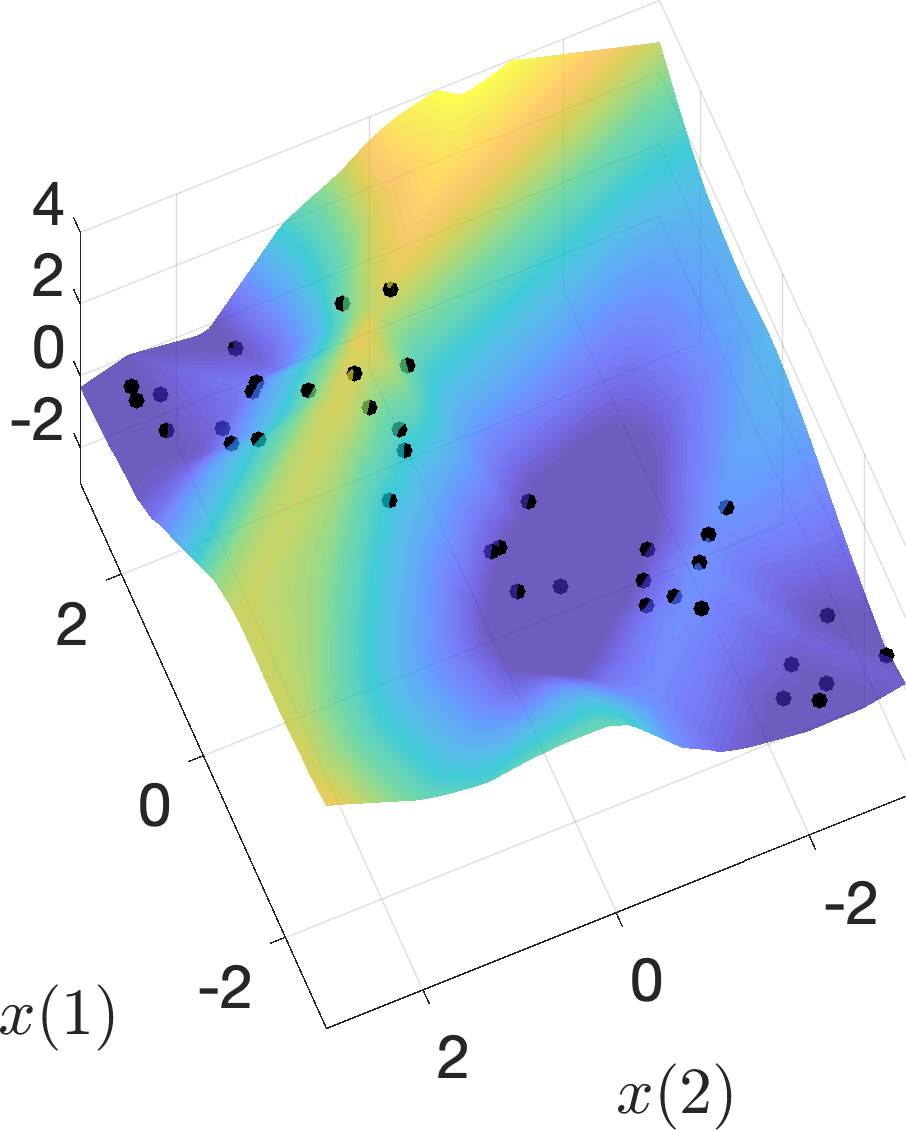}}  \hfill
    \subfigure[$L=3$ layers]{
    \includegraphics[width=0.3\columnwidth]{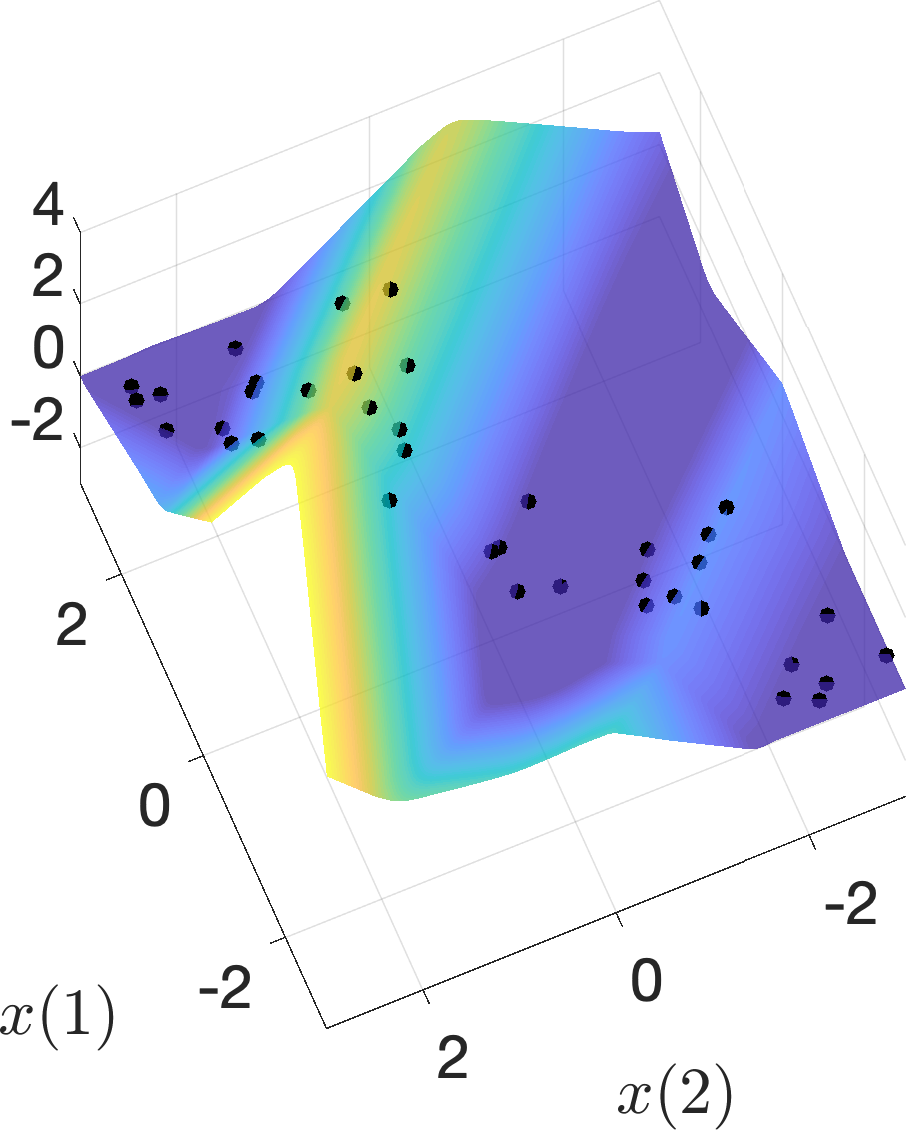}} \hfill
    \subfigure[$L=4$ layers]{
    \includegraphics[width=0.3\columnwidth]{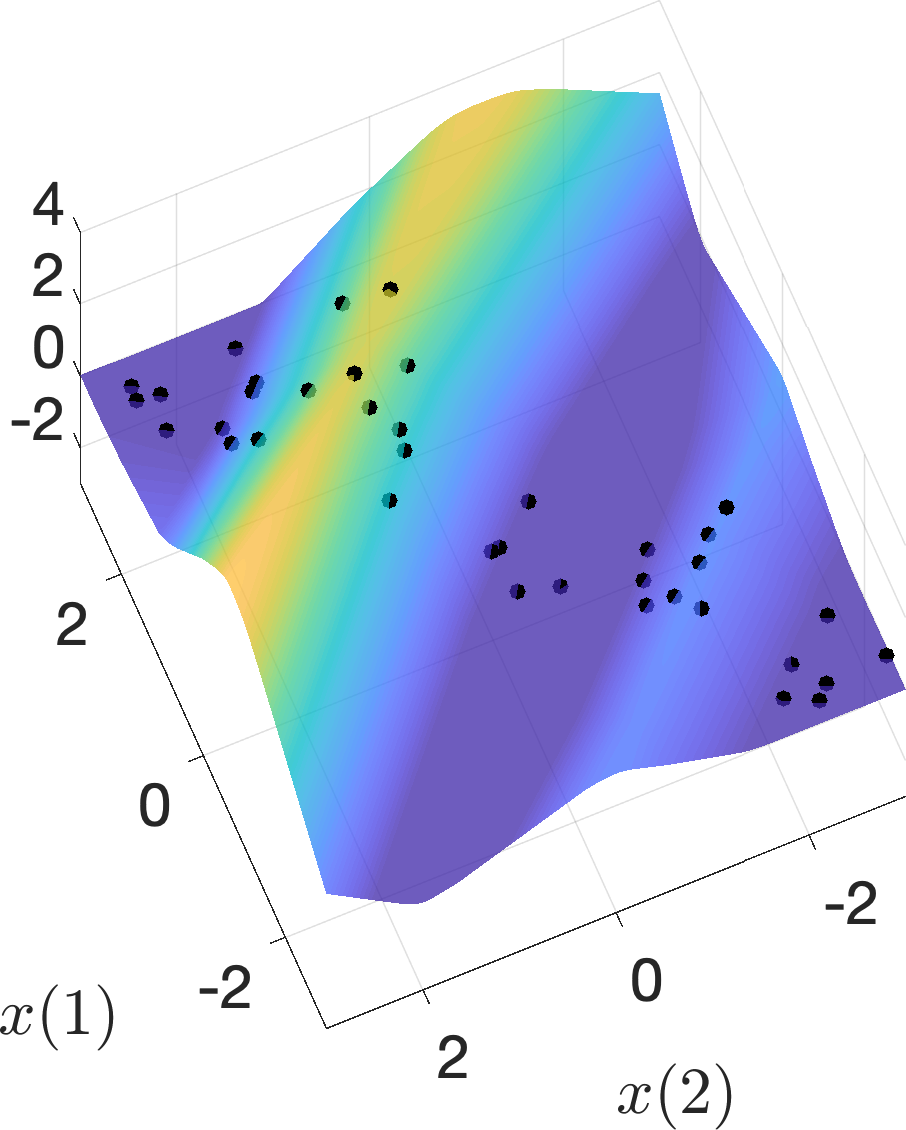}}
    \caption{\textbf{Numerical evidence that weight decay promotes unit alignment with more linear layers.} Neural networks with $L-1$ linear layers followed by one ReLU layer were trained using SGD with $\ell_2$-regularization (weight decay) to close to zero training loss on the training samples, as shown in black. Pictured in (a)-(c) are the resulting interpolating functions shown as surface plots. 
    Our theory predicts that as the number of linear layers increases, the learned interpolating function will become closer to constant in directions orthogonal to a low-dimensional subspace on which a parsimonious interpolant can be defined.}
    \label{fig:first_example}
\end{figure}

\paragraph{Contributions}
Our theoretical results show that adding linear layers to a shallow ReLU network trained with weight decay regularization results in global minimizers with low-dimensional structure, and empirically, the phenomenon persists in practical training settings in which we may not find the global minimizer. These theoretical results do not depend on the data-generating function having low-dimensional structure, contrary to past work focused on learning single- and multi-index models. Furthermore, when the data-generating function has approximate low-dimensional structure and the sample size is moderate, linear layers improve generalization in our experiments. More specifically, this
manuscript makes the following contributions:
\begin{itemize}
\item Formalizes the notions of the mixed variation and index rank of a function, establishing connections with single- and multi-index models.
\item Characterizes the representation cost as a function of the number of linear layers, and bounds this cost in terms of the function's mixed variation and index rank. 
\item Bounds the effective index rank of models that interpolate data with minimal representation cost.
\item Demonstrates empirically that training models with linear layers using standard training and optimization approaches yields models with low effective index rank and strong generalization performance. That is, linear layers are a useful form of regularization that promotes low-rank structure, which in turn can improve generalization.
\end{itemize}

\subsection{Related work}
\label{sec:related}

\paragraph{Representation costs}
In neural networks, it has been argued that ``the size [magnitude] of the weights is more
important than the size [number of weights or parameters] of the
network"  \cite{Bartlett1997}, an idea reinforced by \cite{Neyshabur14,Zhang16} and yielding insight into the generalization performance of overparameterized neural networks \citep{lyu2019gradient,nacson2019lexicographic,neyshabur2015norm,wei2018margin}. Networks trained with weight decay regularization seek weights with minimal norm required to represent a function that accurately fits the training data. Therefore, minimal norm solutions and the corresponding \emph{representation cost} of a function play an important role in generalization performance. 

The representation costs associated with shallow (i.e., two-layer) networks have been studied extensively.
In \cite{bach2017breaking}, Bach studies the variation norm, which corresponds to the representation cost associated with infinitely wide two-layer networks.
A number of papers by E, Wojtowytsch, and collaborators study the set of functions represented by finite-norm, infinitely wide two-layer networks, known as Barron space \cite{ma2019priori,e2022representation,e2022barron,wojtowytsch2024optimal}.
Savarese et al.\ \cite{savarese2019infinite} and Boursier and Flammarion \cite{Boursier_Flammarion_2023} provide a function space description of representation cost of univariate functions in the case of two-layer ReLU networks; 
Ongie et al.\ \cite{ongie2019function} extend this analysis to scalar-valued multivariate functions, while Shenouda et al.\ \cite{shenouda2023vector} consider the case of vector-valued outputs.
Parhi and Nowak \cite{parhi2021banach}, Bartolucci et al. \cite{bartolucci2023understanding}, and Unser \cite{unser2023ridges} provide representer theorems, which show that for a class of variational problems regularized using the infinite-width two-layer representation cost
there exist solutions realizable as finite-width ReLU networks. 
A line of work from Ergen and Pilanci explores the connection between two-layer representation costs and convex formulations of network training \cite{pilanci2020neural,ergen2020implicit,ergen2021revealing}.
Work by Mulayoff et al.\ \cite{mulayoff2021implicit} and Nacson et al.\ \cite{nacson2022implicit} connects the function space representation costs of two-layer ReLU networks to the stability of SGD minimizers. 
Several works by Ma, Siegel and Xu \cite{ma2022uniform,siegel2024sharp,siegel2023characterization} study shallow neural networks with ReLU$^k$ activations.

There have also been several efforts to understand the representation costs of deep non-linear networks.
Notably, Parhi and Nowak \cite{parhi2022kinds} examine deep ReLU networks with one additional \emph{linear} layer between ReLU layers, and relate the corresponding representation cost to a compositional version of the two-layer representation cost; however, an explicit characterization of the associated function space inductive bias is not given in this work. 
Jacot \cite{jacot2022implicit,jacot2024bottleneck} connects the representation costs of deep ReLU networks in the limit as the number of layers goes to infinity with certain notions of nonlinear function rank; see \Cref{sec:index-rank def} for more discussion. 
Chen \cite{chen2024neural} studies a different way to generalize Barron spaces to deep nonlinear networks as an infinite union of reproducing kernel Hilbert spaces.
Ergen and Pilanci \cite{ergen2021revealing} characterize representation cost minimizers associated with deep nonlinear networks but place strong assumptions on the data distribution (i.e., rank-1 or orthonormal training data). Additionally, recent work studies representation cost minimizers in the context of parallel deep ReLU architectures \cite{wang2022parallel}, depth-4 networks on one-dimensional data \cite{zeger2024library}, 
and path-norm regularization in place of $\ell^2$-regularization \cite{ergen2024path}.

\paragraph{Linear layers}
The inductive bias associated with fitting deep \emph{linear} networks has been studied extensively.
Gunasekar et al.\ \cite{gunasekar2018implicit} show that $L$-layer linear networks with diagonal structure 
(i.e., all weight matrices are diagonal)
induces a non-convex implicit regularization over network weights corresponding to the $\ell^q$ norm of the outer layer weights for $q = 2/L$, and similar conclusions hold for deep linear convolutional networks. Wang et al.\ \cite{wang2022parallel} show that for deep, fully-connected linear networks the associated representation cost reduces to the Schatten-$q$ penalty on a virtual single hidden layer weight matrix.
Additionally, Dai et al.\  \cite{dai2021representation} examine the representation costs of deep linear networks under various connectivity constraints from a function space perspective. Several works, including those by Ji and Telgarsky \cite{ji2018gradient}, Pesme et al.\ \cite{pesme2021implicit}, Wang and Jacot \cite{wang2024implicit}, and Even et al.\ \cite{even2023sgd}, study the (stochastic) gradient descent path of deep linear networks.

The role of linear layers in \textit{nonlinear} networks has also been explored in a number of works.
In \cite{golubeva2020wider}, Golubeva et al.\ study the role of network width when the number of parameters is held fixed; they specifically look at increasing the width without increasing the number of parameters by adding linear layers. 
This procedure seems to help with generalization
performance (as long as the training error is controlled). 
Khodak et al.\ \cite{khodak2020initialization} study how to initialize and regularize linear layers in nonlinear networks and conclude that low-rank structure emerges empirically.
One of the main contributions of this paper is an understanding of why this low-rank structure emerges and how it can improve generalization.
\sloppypar
The effect of linear layers on training speed was previously examined by Ba and Caruana \cite{ba2013deep} and Urban et al.\ \cite{urban2016deep}. Arora et al.\ \cite{arora2018optimization} consider implicit acceleration in deep nets and claim that depth induces a momentum-like term in training deep linear networks with SGD. The implicit regularization of gradient descent has been studied in the context of matrix and tensor factorization problems \cite{arora2019implicit,gunasekar2018implicit,razin2020implicit,razin2021implicit}. Similar to this work, low-rank representations play a key role in their analysis. Linear layers have also been shown to help uncover latent low-dimensional structure in dynamical systems \citep{zeng2023autoencoders}. Linear layers also play in important role in attention mechanisms and transformers \cite{vaswani2017attention}; the factoring of the key-query product matrix into two matrices can be interpreted as a linear layer, and several works have explored using linear layers to fine-tune large language models for downstream tasks \cite{hu2022lora,yaras2024compressible}.

\paragraph{Single- and multi-index models}
Multi-index models are functions of the form 
\begin{equation}
f(\vx) = g(\langle \vv_1,\vx \rangle,\langle \vv_2,\vx \rangle,\ldots,\langle \vv_r,\vx \rangle) = g(\mV^\top \vx)
\label{eq:MIM}
\end{equation}
for $\vx \in \R^d$, for some matrix $\mV := \begin{bmatrix}\vv_1 & \ldots & \vv_r\end{bmatrix} \in \mathbb{R}^{d \times r}$ with linearly independent columns,
and an unknown \textit{link function} $g:\mathbb{R}^r \rightarrow \mathbb{R}^D$. The $r$-dimensional subspace spanned by the columns of $\mV$ is often called the \emph{central subspace} associated with $f$.
Single-index models correspond to the special case where $r=1$.
(Like most work on single-index models, in this paper we assume that the output dimension $D=1$, but we generalize to the case that $D>1$ in \Cref{sec:vector valued functions}.)
Multiple works have explored learning such models (i.e., learning both the central subspace and the link function) in high dimensions \citep{bach2017breaking,ganti2017learning,ganti2015matrix,gollakota2024agnostically,kakade2011efficient,liu2020learning,xia2008multiple,yin2008successive,zhu2006fourier}. 
The link function $g$ has an $r$-dimensional domain, so the sample complexity of learning these models depends primarily on $r$ even when the dimension $d$ of the inputs is large. 
As noted in \cite{liu2020learning}, the minimax mean squared error rate for general functions $f$ defined on a $d$-dimensional input space that are $s$-H\"older smooth is $n^{-\frac{2s}{2s+d}}$, while for functions with a rank-$r$ central subspace, the minimax rate is $n^{-\frac{2s}{2s+r}}$. The difference between these rates implies that for $r \ll d$, a method that can adapt to the central subspace can achieve far smaller function estimation errors (and hence better generalization) than a non-adaptive method.

Several recent papers \cite{bach2017breaking,bietti2022learning,Damian_Lee_Soltanolkotabi_2022,mousavi2022neural,ardeshir2023intrinsic} provide bounds on generalization errors when learning single- and multi-index models using shallow neural networks. 
Bach \cite{bach2017breaking} describes learning single- or multi-index models in a function space optimization framework with the two-layer representation cost serving as a regularizer and shows that shallow neural networks can achieve the minimax estimation rate. However, this does not preclude the possibility of linear layers improving constants in generalization rates, which can have a significant impact when sample sizes are moderate.
Damien et al.\ \cite{Damian_Lee_Soltanolkotabi_2022}, Bietti et al.\ \cite{bietti2022learning}, and Mousavi et al.\ \cite{mousavi2022neural} focus on shallow networks trained via specialized variations of gradient descent or gradient flow. Contrary to the present paper, some of these works explicitly enforce single-index structure during training: Bietti et al.\ \cite{bietti2022learning} by constraining the inner weights of all hidden nodes to have the same weight vector, and Mousavi-Hosseini et al.\ \cite{mousavi2022neural} by initializing all weights to be equal and noting that gradient-based updates of the weights will maintain this symmetry.
Finally, as a negative result, Ardeshir et al.\ \cite{ardeshir2023intrinsic} prove that two-layer ReLU networks regularized with the two-layer representation cost are not well-suited to learning the parity function, which is a single-index model, suggesting that the inductive bias of the two-layer representation cost is incompatible with learning certain types of single-index models.

\paragraph{Expected Gradient Outer Products (EGOP) of neural networks}
There are several empirical works highlighting low-rank structures emerging during the training of overparameterized neural networks and hypothesizing about the role of this structure in the generalization performance of overparameterized models \cite{huh2022low,khodak2020initialization,radhakrishnan2024mechanism}.
For example, Radhakrishnan et al.\ \cite{radhakrishnan2024mechanism} examine the Expected Gradient Outer Product (EGOP) of a fitted model; specifically, for a model $f(\mathbf{x})$ the EGOP is 
\begin{equation}
\mathbb{E}_X[\nabla f(X) \nabla f(X)^\T].
\end{equation}
Their empirical study highlights how the EGOP of trained neural networks correlates with features salient to the learning task. Our work theoretically characterizes how the EGOP is influenced by linear layers in the network. Further connections between the EGOP and neural network models are explored in \cite{beaglehole2024average,radhakrishnan2024linear}. The EGOP is also central to the active subspaces dimensionality reduction technique \cite{constantine2015active,constantine2014active}, and was originally studied in the context of multi-index regression \cite{samarov1993exploring,hristache2001structure,wu2010learning,trivedi2014consistent,yuan2023efficient}.

\subsection{Outline}\label{subsec:outline}
In \Cref{sec:def} we formally define the neural network architectures we study and their representation costs. In \Cref{sec:lowrank} we define the index rank and mixed variation of a function. Our main theoretical results are in \Cref{sec:function_space}, where we connect the representation cost with index rank and mixed variation. The numerical experiments in \Cref{sec:experiments} show that our theory is predictive of practice when the data comes from a low-index-rank function. We discuss the implications and limitations of our results in \Cref{sec:discussion}.
Another expression for the representation cost can be found in \Cref{sec:simplify}. Most technical details are reserved for the remainder of the appendix. Of note, a generalization of the results to vector-valued functions can be found in \Cref{sec:vector valued functions}.

\subsection{Notation}\label{subsec:notation}
For a vector $\va\in \R^K$, we use $\|\va\|_p$ to denote its $\ell^p$ norm and $a_k$ to denote the $k$-th entry. For a matrix $\mW$, we use $\|\mW\|_{op}$ to denote the operator norm, $\|\mW\|_F$ to denote the Frobenius norm, $\|\mW\|_*$ to denote the nuclear norm (i.e., the sum of the singular values), and for $q > 0$ we use
$\|\mW\|_{\Sc^q}$ to denote the Schatten-$q$ quasi-norm (i.e., the $\ell^q$ quasi-norm of the singular values of $\mW$).
We let $\sigma_k(\mW)$ denote the $k$-th largest singular value of $\mW$ and $\vw_k$ denote row $k$ of $\mW$. Given a vector $\vlambda \in \R^{K}$, 
the matrix $\mD_{\vlambda} \in \R^{K \times K}$ is a diagonal matrix with the entries of $\vlambda$ along the diagonal. 
We write $\vlambda > 0$ to indicate that $\vlambda$ has all positive entries. 
For the weighted $L_2$-norm of a function $f:\R^d \rightarrow \R$ with respect to a probability distribution $\rho$ we write $\|f\|_{L_2(\rho)}$. 
We use $N(\mu,\sigma^2)$ for the normal distribution with mean $\mu$ and standard deviation $\sigma$ and $U(\Omega)$ for the uniform distribution over a set $\Omega$.
Finally, we use $[t]_+ = \max\{0,t\}$ to denote the ReLU activation, whose application to vectors is understood entrywise.

\section{Problem Formulation}\label{sec:def}
Let $\densitysupp \subseteq \R^d$ be either a bounded convex set with a nonempty interior or all of $\R^d$. 
Let $\setofnns{\densitysupp}$ denote the space of functions $f:\mathcal{X}\rightarrow \R$ expressible as a two-layer ReLU network having input dimension $d$; we allow the width $K$ of the single hidden layer to be unbounded. Every function in $\setofnns{\densitysupp}$ is described (non-uniquely) by a collection of weights $\theta = (\mW,\va,\vb,c)$:
\begin{align}
h_\theta^{(2)}(\vx) & = \va^\T[\mW\vx + \vb]_+ + c = \sum_{k=1}^K a_k[\vw_k^\T\vx + b_k]_+ + c
\end{align}
for some $K \in \mathbb N$, $\mW \in \R^{K\times d}, \va \in \R^{K}, \vb \in \R^K$, and $c\in \R$. We denote the set of all such parameter vectors $\theta$ by $\Theta_2$.

In this work, we consider a re-parameterization of networks in $\setofnns{\densitysupp}$. Specifically, we replace the linear input layer $\mW$  with $L-1$ linear layers:
\begin{align}
\label{eq:L layers nn model}
    h_\theta^{(L)}(\vx) & = \va^\T[\mW_{L-1}\cdots\mW_2\mW_1 \vx + \vb]_+ + c
\end{align}
where now $\theta = (\mW_1,\mW_2,...,\mW_{L-1},\va,\vb,c)$. Again, we allow the widths of all layers to be arbitrarily large. Let $\Theta_L$ denote the set of all such parameter vectors. With any $\theta \in \Theta_L$  we associate the $\ell_2$-regularization penalty:
\begin{equation}
    C_L(\theta) = \frac{1}{L}\left(\|\va\|_2^2 + \|\mW_{1}\|_F^2 + \cdots +  \|\mW_{L-1}\|_F^2\right),
\end{equation}
i.e., the squared Euclidean norm of all non-bias weights\footnote{Similar to \cite{ongie2019function}, we do not regularize the bias terms in our definition of the cost $C_L$. This simplifies the theoretical analysis; for example, our formulation makes the representation cost translation invariant, a property that is lost when one regularizes the bias terms. Though, we note, regularizing biases may change the inductive bias. For example, as shown in \cite{Boursier_Flammarion_2023}, regularizing the biases in univariate shallow ReLU networks yields unique interpolating representation cost minimizers, while uniqueness is not guaranteed when bias is unregularized.}. This type of regularization penalty is also known as \emph{weight decay} in the machine learning literature \cite{hanson1988comparing,loshchilov2017decoupled}.

Given training pairs $\{(\vx_i,y_i)\}_{i=1}^n$ with $\vx_i \in \densitysupp$ and $y_i \in \R$, and a loss function $\ell(\cdot,\cdot):\R\times \R \rightarrow [0,\infty)$, consider the problem of finding an $L$-layer network that minimizes the 
 $\ell^2$-regularized empirical risk:
\begin{equation}\label{eq:opt1}
\min_{\theta \in \Theta_L} \frac{1}{n}\sum_{i=1}^n \ell( h^{(L)}_\theta(\vx_i), y_i) + \lambda C_L(\theta),
\end{equation}
where $\lambda > 0$ is a regularization parameter. 
We may recast \eqref{eq:opt1} as an optimization problem in function space: for any $f \in \setofnns{\densitysupp}$, define its $L$-layer \emph{representation cost} $R_L(f)$ by
\begin{equation}\label{eq:RLdef}
    R_L(f) = \inf_{\theta\in \Theta_L} C_L(\theta)\st f = h^{(L)}_\theta|_\densitysupp.
\end{equation}
Then \eqref{eq:opt1} is equivalent to the function space optimization problem
\begin{equation}\label{eq:opt2}
\min_{f\in \setofnns{\densitysupp}}\frac{1}{n}\sum_{i=1}^n \ell( f(\vx_i), y_i) + \lambda R_L(f).
\end{equation}
Therefore, $R_L$ is the function space regularizer induced by the parameter space regularizer $C_L$. 

In practice, the regularization strength parameter $\lambda$ in \eqref{eq:opt2} is often taken to be sufficiently small such that the empirical risk dominates the overall cost during the early phases of training. In this case, any minimizer $f$ of \eqref{eq:opt2} will satisfy $\ell(f(\vx_i),y_i) \approx 0$ for all $i \in [n]$. Assuming this implies $f(\vx_i) \approx y_i$, we see that $f$ approximately interpolates the training data while achieving low $R_L$ cost. This motivates us to consider the minimum $R_L$-cost interpolation problem:
\begin{equation}\label{eq:opt3}
\min_{f\in \setofnns{\densitysupp}}R_L(f)\st f(\vx_i)= y_i \;\; \forall i \in [n].
\end{equation}
Informally, \eqref{eq:opt3} can be thought of as the limit of \eqref{eq:opt2} as the regularization strength $\lambda \rightarrow 0$.
\textit{One goal of this paper is to describe how the set of global minimizers to \eqref{eq:opt3} changes with $L$, providing insight into the role of linear layers in nonlinear ReLU networks.}

\subsection{Simplifying the representation cost}
Earlier work, such as \cite{savarese2019infinite}, has shown that the two-layer representation cost reduces to
\begin{align}\label{eq:R2}
R_2(f) =& \inf_{\theta \in \Theta_2} \sum_{k=1}^K |a_k| \st \|\vw_k\|_2 =1,\quad\forall k\in [K]~\text{and}~f = h^{(2)}_\theta.
\end{align}
This shows that minimizing the $2$-layer representation cost is equivalent to minimizing the $\ell^1$-norm of the outer-layer weights in the network, subject to a unit norm constraint on the inner-layer weights. Since minimizing the $\ell^1$ norm promotes sparsity, this suggests functions realizable as a sparse linear combination of ReLU units will have low $R_2$-cost, a perspective explored in many recent works \cite{bach2017breaking,savarese2019infinite,ongie2019function,parhi2021banach,Boursier_Flammarion_2023}.
\textit{A key goal of this paper is to characterize the representation cost $R_L$ for different numbers of linear layers $L \geq 3$, and identify which functions have low $R_L$ cost.
} 

As a step in this direction, we first prove the general $R_L$ cost can be re-cast as an optimization over two-layer networks, but where the cost associated with the inner-layer weight matrix $\mW$ changes with $L$:

\begin{lemma}\label{lem:schatten} 
Suppose $f\in \setofnns{\densitysupp}$. Then 
\begin{equation}\label{eq:nucnormex}
R_L(f) = \inf_{\theta \in \Theta_2} \tfrac{1}{L}\|\va\|_2^2 + \tfrac{L-1}{L}\|\mW\|^{q}_{\Sc^{q}}\st f = h_\theta^{(2)}|_\densitysupp
\end{equation}
where  $q:=2/(L-1)$ and $\|\mW\|_{\Sc^q}$ is the Schatten-$q$ quasi-norm, i.e., the $\ell^q$ quasi-norm of the singular values of $\mW$.
\end{lemma}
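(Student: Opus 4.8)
The plan is to reduce the $L$-layer optimization to an optimization over two-layer networks in two stages: first absorbing the cost of the $L-1$ linear layers into a single matrix factor via the arithmetic-geometric mean inequality, and then identifying the resulting infimum over factorizations as a Schatten quasi-norm.

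First I would note that for any $\theta \in \Theta_L$ with $\mW := \mW_{L-1}\cdots\mW_1$, the function $h_\theta^{(L)}$ equals the two-layer network $h_{\theta'}^{(2)}$ with $\theta' = (\mW, \va, \vb, c)$. So the infimum in $R_L(f)$ can be computed by first fixing $\mW$ (and $\va, \vb, c$) and then minimizing $\frac{1}{L}(\|\va\|_2^2 + \sum_{j=1}^{L-1}\|\mW_j\|_F^2)$ over all factorizations $\mW = \mW_{L-1}\cdots\mW_1$. The $\|\va\|_2^2$ term decouples, so the key subproblem is
\[
\gamma_L(\mW) := \inf\left\{ \sum_{j=1}^{L-1}\|\mW_j\|_F^2 \;:\; \mW = \mW_{L-1}\cdots\mW_1 \right\}.
\]
By the AM-GM inequality, $\frac{1}{L-1}\sum_j \|\mW_j\|_F^2 \ge \left(\prod_j \|\mW_j\|_F^2\right)^{1/(L-1)} \ge \|\mW\|_{\Sc^{2/(L-1)}}^{2/(L-1)}$, where the last step uses submultiplicativity of Schatten quasi-norms under products (or can be checked directly on singular values). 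Conversely, one constructs an explicit near-optimal factorization: writing the SVD $\mW = \mU\mSigma\mV^\top$, distribute the singular values as $\mSigma^{1/(L-1)}$ across $L-1$ factors (inserting $\mU$, $\mV^\top$, and identity-like padding appropriately, allowing the intermediate widths to be as large as needed, which is permitted since widths are unbounded). This shows $\gamma_L(\mW) = (L-1)\|\mW\|_{\Sc^{2/(L-1)}}^{2/(L-1)}$, and substituting back gives exactly \eqref{eq:nucnormex} with $q = 2/(L-1)$.

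The main obstacle is the lower bound direction: carefully justifying that $\prod_{j=1}^{L-1}\|\mW_j\|_F \ge \|\mW\|_{\Sc^{2/(L-1)}}^{1/(L-1)}$ for every factorization $\mW = \mW_{L-1}\cdots\mW_1$. This requires a singular-value inequality relating the Frobenius norms of factors to the Schatten quasi-norm of the product; the cleanest route is to prove it for two factors first (i.e., $\|\mA\mB\|_{\Sc^1} = \|\mA\mB\|_* \le \|\mA\|_F\|\mB\|_F$, which is classical) and then induct, at each stage peeling off one factor. One must be somewhat careful because Schatten-$q$ quasi-norms for $q < 1$ fail the triangle inequality, but the multiplicative (submultiplicativity) inequalities needed here still hold; I would state the relevant singular-value inequality as a separate small lemma. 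A secondary technical point is that the infimum defining $\gamma_L$ may not be attained when $\mW$ is rank-deficient (one needs to drive some intermediate factors' "extra" singular directions to zero), but since $R_L$ is defined as an infimum this causes no issue — the near-optimal factorizations above suffice to get the upper bound, and the lower bound holds for all factorizations regardless of attainment.
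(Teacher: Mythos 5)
Your reduction is exactly the paper's: collapse the linear layers into $\mW=\mW_{L-1}\cdots\mW_1$, decouple $\|\va\|_2^2$, and identify $\inf\sum_{j}\|\mW_j\|_F^2$ over all factorizations with $(L-1)\|\mW\|_{\Sc^{2/(L-1)}}^{2/(L-1)}$ --- the only difference is that the paper simply cites this variational characterization of the Schatten quasi-norm (known for two factors and established in the literature for $\ell\ge 3$ factors), whereas you sketch its proof via AM--GM, an SVD-based factorization, and a Schatten--H\"older/singular-value submultiplicativity lemma, all of which is sound. One small slip to fix: in your last paragraph the inequality to be proved should read $\prod_{j}\|\mW_j\|_F\ge\|\mW\|_{\Sc^{2/(L-1)}}$ (no $1/(L-1)$ exponent on the right-hand side), consistent with the correct chain you wrote earlier.
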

\begin{proof}
 The result is a direct consequence of the following variational characterization of the Schatten-$q$ quasi-norm for $q = 2/P$ with $P$ a positive integer:
\begin{equation}
\|\mW\|^{2/P}_{\Sc^{2/P}} = \min_{\mW = \mW_1\mW_2\cdots\mW_{P}} \frac{1}{P}\left(\|\mW_1\|_F^2 + \|\mW_2\|_F^2 + \cdots +  \|\mW_P\|_F^2\right),
\end{equation}
where the minimization is over all matrices $\mW_1,...,\mW_P$ of compatible dimensions. The case $P=2$ is well-known (see, e.g., \cite{srebro2004maximum}). The general case for $P \geq 3$ is established in \cite[Corollary 3]{shang2020unified}. See also \cite[Proposition 2]{wang2022parallel}.
\end{proof}
Note that Schatten-$q$ quasi-norms with $0 < q \leq 1$ are a widely used surrogate for the rank penalty \cite{nie2012low,shang2017bilinear,shang2020unified}. Intuitively, this shows that minimizing the $R_L$-cost promotes functions realizable by shallow networks having low-rank weight matrices $\mW$, or equivalently, a multi-index model with a low-dimensional central subspace.

However, one deficiency of the characterization of the $R_L$ cost given in \eqref{eq:nucnormex} is that the objective varies under different sets of parameters realizing the same function. In particular, trivial re-scalings of inner- and outer-layer weight pairs lead to different objective values. In \Cref{sec:simplify}, we derive a scale invariant form of the $R_L$-cost, similar to the characterization of the $R_2$-cost given in \eqref{eq:R2}. This characterization is used to prove our main results in \Cref{sec:function_space}.

\section{Index rank and mixed variation}
\label{sec:lowrank}

We will see that adding linear layers induces a representation cost that favors functions well-approximated by a low-dimensional multi-index model, in which case we say the function has low \emph{index rank} or low \emph{mixed variation}.
In this section, we formalize the notions of the index rank and the mixed variation of a function as well as their connections to related concepts in the literature. 

\subsection{Low-index-rank functions}\label{sec:index-rank def}
We define the index rank of a function using its expected gradient outer product (EGOP), a tool used in \emph{multi-index regression} \cite{samarov1993exploring,hristache2001structure,wu2010learning,trivedi2014consistent,yuan2023efficient} as well as the \emph{active subspaces} literature \cite{constantine2014active,constantine2015active}.  
Given a function $f: \densitysupp \rightarrow \R$ whose gradient $\nabla f$ exists almost everywhere on $\gX$, the EGOP matrix $\mC_f \in \R^{d\times d}$ is defined by
\begin{equation}
\gradcov{f} := \mathbb{E}_{X}[\nabla f(X) \nabla f(X)^\T] = \int_\densitysupp \nabla f(\vx) \nabla f(\vx)^\T \rho(\vx)\, d\vx,
\label{eq:EGOP}\end{equation}
where $\rho$ is a probability density function defined over $\gX$. For technical convenience, throughout the paper we assume that $\rho$ is strictly positive, i.e., $\rho(\vx) > 0$ for all $\vx \in \gX$. 
Note that the EGOP matrix (and all related definitions in the sequel) depend on the density $\rho$, but for ease of presentation we suppress this dependency.

An eigendecomposition of the EGOP reveals directions in which the function has large (or small) variation on average. To see this, suppose $\vv$ is a unit-norm eigenvector of $\mC_f$ with eigenvalue $\lambda$.
Then observe that
\[
\lambda = \vv^\top \mC_f \vv = \int_\gX (\vv^\top \nabla f(\vx))^2\rho(\vx)d\vx = \|\partial_{\vv} f\|_{L^2(\rho)}^2,
\]
where $\partial_{\vv} f := \vv^\top \nabla f$ denotes the directional derivative of $f$ in the direction of $\vv$. 
This shows that eigenvectors of $\mC_f$ with large eigenvalues correspond to directions for which the directional derivative of $f$ is large in a $L^2(\rho)$-norm sense. On the other hand, eigenvectors with zero eigenvalue correspond to directions for which the directional derivative of $f$ vanishes almost everywhere on $\gX$, which implies $f$ is constant in these directions, i.e., $f(\vx) = f(\vx + \sigma \vv)$ for almost all $\vx \in \gX$ and $\sigma \in \R$. In particular, if the EGOP $\mC_f$ is low-rank, this implies $f$ is constant in directions orthogonal to a low-dimensional subspace. This observation motivates the following definition:

\begin{definition}[Index rank]
\label{def:rank}
We define the \emph{index rank} of a function, denoted $\funcrank{f}$, as the rank of its EGOP matrix
$\gradcov{f}$. 
\end{definition}

Additionally, we use the term \emph{principal subspace} to refer to the range of $\mC_f$, which coincides with the span of eigenvectors of $\mC_f$ associated with non-zero eigenvalues. Therefore, the index rank of a function coincides with the dimension of its principal subspace. 

Our definition of index rank is closely related to multi-index models \eqref{eq:MIM}. 
To see this, note that if $f:\densitysupp \rightarrow \R$ is a multi-index model of the form $f(\vx) = g(\mV^\top \vx)$, then $\nabla f(\vx) = \mV \nabla g(\mV^\top \vx)$ and so 
\begin{equation}
\gradcov{f} = \mV \mathbb{E}_X\left[ \nabla g(\mV^\top X) \nabla g(\mV^\top X)^\T \right] \mV^\top.
\end{equation}
This implies that the principal subspace of $f$ will lie within its central subspace, and will be equal to the central subspace if $\mathbb{E}_X\left[ \nabla g(\mV^\top X) \nabla g(\mV^\top X)^\T \right]$ is full rank.

We note that the index rank is distinct from other notions of nonlinear function rank proposed by Jacot in \cite{jacot2022implicit,jacot2024bottleneck}. 
Specifically, Jacot defines
the \emph{Jacobian rank} as 
$
\max_{\vx} \rank(Jf(\vx))$ 
where $Jf$ is the Jacobian of $f$ and the
\emph{bottleneck rank} as 
the smallest integer $k$ such that $f$ can be factorized as $f = h \circ g$
with inner dimension $k$ where $h$ and $g$ are continuous and piecewise linear.
These notions of nonlinear rank are connected to deep ReLU network representation costs.
All three notions of rank (index, Jacobian, and bottleneck) capture different kinds of nonlinear low-dimensional structure.}
Notably,
both the Jacobian and bottleneck ranks require that any function $f$ mapping to a scalar must be rank-1, regardless of any latent structure in $f$, and so only vector-valued functions can have rank greater than 1. In contrast, our definition assigns scalar-valued functions different ranks depending on the dimension of its principal subspace. We also discuss the extension of index rank to vector-valued functions in \Cref{sec:vector valued functions}.

Finally, we also note that learning an index-rank-$r$ function can be very different from the common practice of first reducing the dimension of the training features by projecting them onto the top $r$ principal components of the training features and then feeding the reduced-dimension features into a neural network; that is, the principal subspace of the EGOP may be quite different from the features' PCA subspace. Furthermore, as we detail in later sections, assuming the data is generated according to a multi-index model, the representation cost associated with adding linear layers promotes learning low-index-rank functions whose principal EGOP subspace is aligned with the central subspace; this is illustrated in \Cref{fig:PCA_compare}.  
\begin{figure}[ht!]
    \centering
    \subfigure[]{
    \includegraphics[width=0.20\columnwidth]{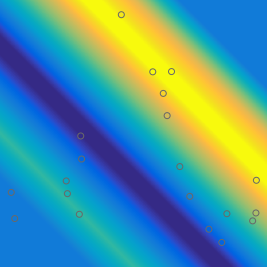}} \hfill
    \subfigure[]{
    \includegraphics[width=0.20\columnwidth]{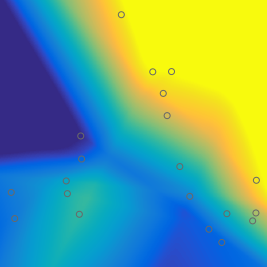}} \hfill
    \subfigure[]{
    \includegraphics[width=0.20\columnwidth]{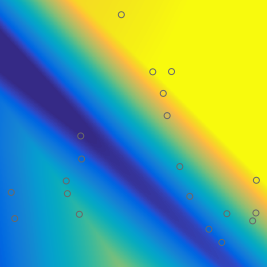}} \hfill
    \subfigure[]{
    \includegraphics[width=0.20\columnwidth]{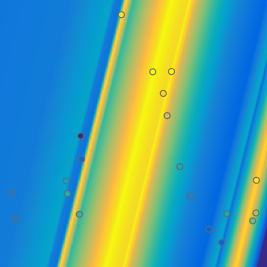}}
    \caption{\small \textbf{Illustration of learning a low-index-rank function.} (a) Heatmap of a rank-1 data generating function $f:\R^2 \rightarrow \R$ and locations of training samples. (b) Interpolant learned with $L=2$ layers, which does not exhibit index-rank-1 structure. (c) Interpolant learned with $L=4$ layers, which closely approximates the index-rank-1 structure of the data-generating function. (d) Result of performing PCA on training features to reduce their dimension to one, followed by learning with $L=2$ layers. Because the PCA subspace depends on the geometry of the training features and not on the geometry of the function, PCA cannot discover the correct principal subspace.} This illustration highlights how the addition of linear layers promotes learning single-index models with a central subspace that may differ significantly from the features' PCA subspace. 
    \label{fig:PCA_compare}
\end{figure}

\subsection{Mixed variation of a function}
\label{sec:mixed}
Performing an eigendecomposition on $\gradcov{f}$ and discarding small eigenvalues yields an eigenbasis for a low-dimensional subspace that captures directions along which $f$ has large variation. 
If the columns of a matrix $\mV \in \R^{d\times r}$ represent this eigenbasis, then 
$f(\vx) \approx f(\vx+\vu)$ for all 
$\vx\in \densitysupp$ and all
$\vu \in \range(\mV)^\perp$. 
Such functions are ``approximately low-index-rank''. In this section, we introduce a notion of \textit{mixed variation} to formalize and quantify this idea. 

\textit{Mixed variation function spaces} are informally defined in \citep{donoho2000high} to contain functions that are more regular in some directions than in others, and Parhi and Nowak
\cite{parhi2022near} provide examples of neural networks adapting to a type of mixed variation. In this paper, we formally define the mixed variation of a function
as follows:
\begin{definition}[Mixed variation] 
\label{def:mv}
For any $q > 0$, define the \emph{order $q$ mixed variation of $f$} to be the Schatten-$q$ (quasi-)norm of the matrix square-root of the EGOP:
\begin{equation}
\mixedvar{f}{q} := \|\gradcov{f}^{1/2}\|_{S^q}
\end{equation}
\end{definition}
Note that by defining the mixed variation in terms of the square root of the EGOP matrix we ensure the mixed variation is a 1-homogenous functional, i.e., $\mixedvar{\alpha f}{q} = |\alpha| \mixedvar{f}{q}$ for all $\alpha \in \R$. Also, since for any matrix $\mM$ we have $\|\mM\|_{S^q}^q \rightarrow \text{rank}(\mM)$ as $q\rightarrow 0$, we see that $\mixedvar{f}{q}^q \rightarrow \funcrank{f}$ as $q \rightarrow 0$. 

As illustrated in \Cref{fig:MV_illustration}, functions may be full-index-rank according to \Cref{def:rank} but still have small mixed variation when they are ``close'' to having lower index rank 
because they vary significantly more in one direction than another, consistent with the notions from \citep{donoho2000high,parhi2022near}.

\begin{figure}[ht!]
    \centering
    \subfigure[]{
    \includegraphics[width=0.20\columnwidth]{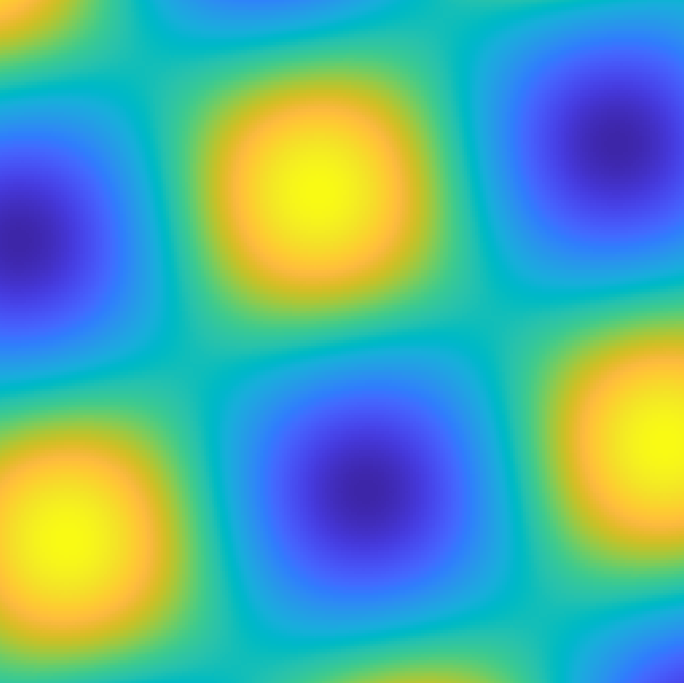}} \hfill
    \subfigure[]{
    \includegraphics[width=0.20\columnwidth]{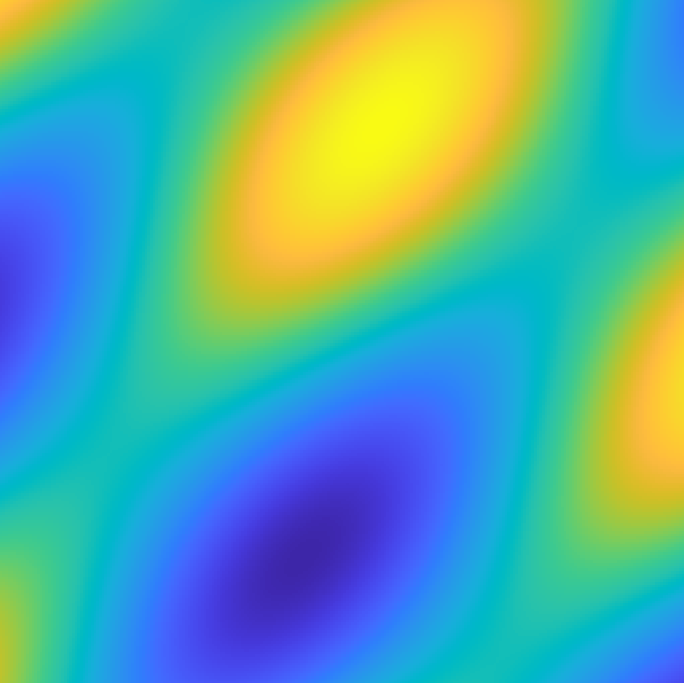}}\hfill
    \subfigure[]{
    \includegraphics[width=0.20\columnwidth]{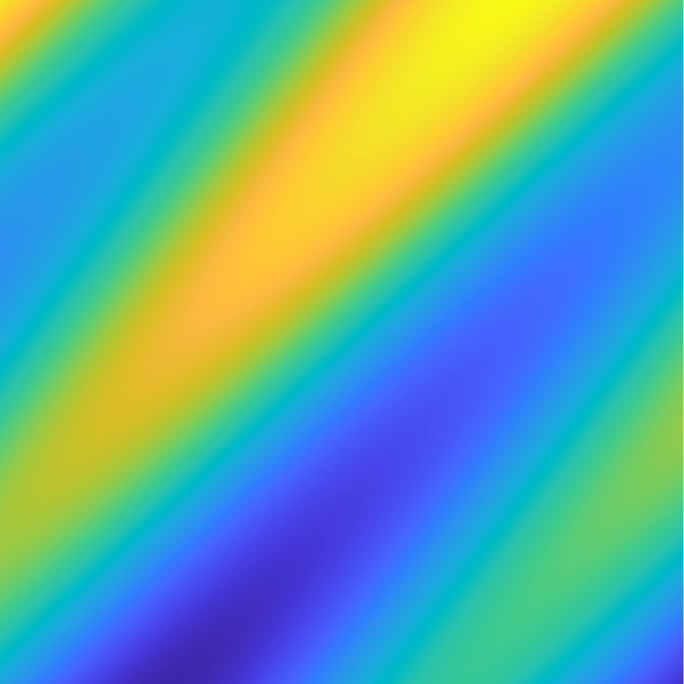}}\hfill
    \subfigure[]{
    \includegraphics[width=0.20\columnwidth]{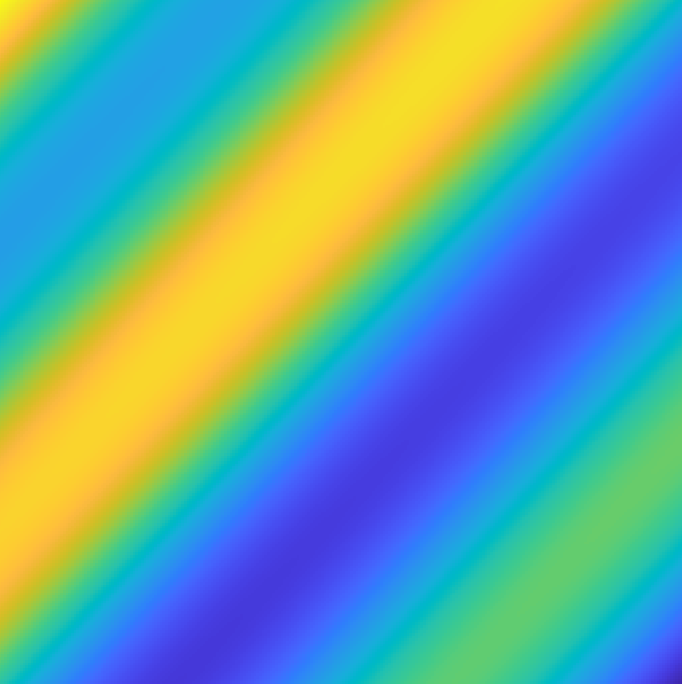}}
    \caption{\small \textbf{Illustration of four functions $f:\R^2 \rightarrow \R$ with mixed variation (\Cref{def:mv}) decreasing from left to right.} All four functions are index rank 2 according to \Cref{def:rank}, but the functions on the right with smaller mixed variation are closer to being index rank 1 
    because they vary significantly more in one direction than another.}
    \label{fig:MV_illustration}
\end{figure}

\section{The inductive bias of the $R_L$ cost}
\label{sec:function_space}
In this section, we show that minimizing the $R_L$ cost promotes learning functions that 
are nearly low-index-rank and are ``smooth'' along their principal subspace. 
Specifically, \Cref{thm:relationship between RL R2 index rank and mixed var} highlights how the relative importance of low dimensional structure versus smoothness changes with the number of linear layers. Thus, the number of linear layers in a model should be treated as a tunable hyperparameter at training time. 
In \Cref{cor:limiting cost theorem,cor:low high theorem} we further analyze how the $R_L$ cost increasingly prioritizes low-rank structure as $L$ increases. In \Cref{thm:effective rank bound} we provide bounds on the effective index rank of networks trained by minimizing the $R_L$ cost. Omitted proofs of the results in this section can be found in \Cref{app:proofs of function space results}.

\subsection{Index rank, mixed variation, and the $R_L$-cost}
We begin by establishing a theorem that relates the $R_L$ cost of a function $f$ to its index rank, mixed variation, and $R_2$ cost. This theorem underscores that low-rank structure and smoothness both influence the $R_L$ cost, but their relative importance depends on $L$. In this context, we measure low-rank structure by the index rank or mixed variation of a function, and we measure smoothness via the $R_2$ cost.
\begin{theorem}
\label{thm:relationship between RL R2 index rank and mixed var}
Let $f \in \setofnns{\densitysupp}$ and $L\geq 2$. Then
\begin{equation*}
    \max\left(
        \mixedvar{f}{\tfrac{2}{L}}^{2/L}, 
        R_2(f)^{2/L}
    \right)
    \leq R_L(f)
    \leq \funcrank{f}^{\frac{L-2}{L}} R_2(f)^{2/L}.
\end{equation*}
\end{theorem}
The proof of this theorem is given in \Cref{sec:proof of central lemma}.
The upper bound tells us that a function $f$ with both low index rank and low $R_2$ cost will have a low $R_L$ cost. Consider methods that explicitly learn a single-index or multi-index model to fit training data \citep{bietti2022learning,cohen2012capturing,ganti2015matrix,ganti2017learning,liu2020learning,mousavi2022neural,yin2008successive,zhu2006fourier}; such methods, by construction, ensure that $f$ has low index rank and has a smooth link function. Thus \Cref{thm:relationship between RL R2 index rank and mixed var} shows that such methods also control the $R_L$ cost of their learned functions.
Furthermore, the lower bound guarantees that if we minimize the $R_L$ cost during training, then the corresponding $R_2$ cost and mixed variation cannot be too high. That is, $R_L$-cost minimizers will be smooth in the $R_2$ sense, and will have low mixed variation.

Observe that the relative importance of the $R_2$ cost and the index rank or mixed variation in the bounds above changes with $L$: as $L$ increases, the terms $\gM\gV(f,\frac{2}{L})^{2/L}$ and $\funcrank{f}^{(L-2)/L}$ both tend towards $\funcrank{f}$, while $R_2^{2/L}$ tends to one.
This suggests that low-index-rank structure greatly influences the $R_L$ cost as $L$ increases. 
In fact, taking the limit as $L$ tends to infinity, we have the following direct corollary of \Cref{thm:relationship between RL R2 index rank and mixed var}:
\begin{corollary}\label{cor:limiting cost theorem}
    Let $f \in \setofnns{\densitysupp}$. Then
    \begin{equation}
        \lim_{L\rightarrow\infty} R_L(f) =\funcrank{f}.
    \end{equation}
\end{corollary}

Even without taking limits, 
given a low-index-rank function and a high-index-rank function, for large enough $L$
the low-index-rank function will have lower $R_L$ cost. This idea is formalized in the following corollary of \Cref{thm:relationship between RL R2 index rank and mixed var}.
\begin{corollary}\label{cor:low high theorem}
    For all $f_l, f_h \in \setofnns{\densitysupp}$ such that $\funcrank{f_l} < \funcrank{f_h}$, there is a value $L_0$ such that $L > L_0$ implies $R_L(f_l) < R_L(f_h)$.
\end{corollary}

Note that \Cref{cor:low high theorem} holds even when $R_2(f_h) < R_2(f_l)$. 
This has implications for interpolating $R_L$-cost minimizers. For example, suppose $f_l$ and $f_h$ both interpolate the training data, with $\funcrank{f_l} < \funcrank{f_h}$, but $f_h$ is an $R_2$-minimizing interpolant. Then \Cref{cor:low high theorem} implies there exists an $L_0$ such that for all $L > L_0$ we have have $R_L(f_h) > R_L(f_l)$, which implies $f_h$ \emph{cannot} be an $R_L$-minimizing interpolant for all $L \geq L_0$. In the next subsection, we describe this effect more quantitatively by providing bounds on the (effective) index rank of interpolating $R_L$-cost minimizers. 

\subsection{Trained networks have low effective index rank}\label{subsec:interp}
\Cref{thm:relationship between RL R2 index rank and mixed var} has implications for the decay of the singular values of EGOP of trained networks, and thus for their \emph{effective} index rank. In this section, we focus on networks that interpolate the data and minimize the $R_L$ cost, but generalize to other idealized learning rules based on finding (near-)global minimizers in \Cref{sec:extension of rank bounds}. 

To simplify the statement of our results, we first define the \emph{singular values of a function $f:\gX\rightarrow \R$}, as $\sigma_k(f) = \sigma_k(\mC_f^{1/2})$ for all $k\in[d]$, i.e., we identify the singular values of a function with the singular values of the square root of the EGOP matrix. Note that the index rank of $f$ is the number of non-zero singular values of $f$, while the order $q$ mixed variation of $f$ is the $\ell^q$ (quasi-)norm of the singular values of $f$. We also define the $\varepsilon$-effective index rank of $f$ in terms of its singular values as follows:

\begin{definition}[Effective index rank]
    Given a function $f:\densitysupp \rightarrow \R$ and a threshold $\varepsilon > 0$, define the $\varepsilon$-effective index rank of $f$, denoted by $\epsfuncrank{f}$, to be the number of singular values of $f$ larger than $\varepsilon$. That is, 
    \begin{equation}
    \epsfuncrank{f} := \left|\{k : \sigma_k(f) > \varepsilon\}\right|.
    \end{equation}
\end{definition}

Below, we bound the effective index rank of minimum $R_L$-cost interpolating solutions, which applies even when the data is not generated by a low-index-rank function. To do so, we define the \emph{interpolation cost} associated with a collection of training data:
\begin{definition}[Interpolation cost]
    Given training data $\gD = \{(\vx_i,y_i)\}_{i=1}^n$ and a rank cutoff $s$, define its rank-$s$ interpolation cost by
    \begin{equation}
    \interpcost_s(\gD) = \min_{f \in \setofnns{\densitysupp}} R_2(f)\st \funcrank{f}\leq s,~f(\vx_i) = y_i \; \forall i \in [n].
    \end{equation}
    i.e., $\interpcost_s(\gD)$ is the minimum $R_2$-cost needed to interpolate the data with a function of index rank at most $s$.
\end{definition}

Provided the training features $\{\vx_i\}_{i=1}^n$ are distinct, the interpolation cost $\gI_s(\gD)$ is always well-defined for all $s \in [d]$. This is because an interpolant of index-rank one always exists. See \Cref{fig:low rank interpolants exist} for an example, and \Cref{sec:rank-r interpolant always exists} for proof of this claim. 

Now we give our main theorem in this section, which shows that interpolants minimizing the $R_L$ cost have effective index ranks that decay with $L$.
\begin{theorem}[Effective index ranks of minimal-cost interpolants.]\label{thm:effective rank bound}
    Assume that $\hat{f}_L$ is an $R_L$-minimal interpolant of the training data $\gD$ for some $L\geq 2$ (i.e., $\hat{f}_L$ is a minimizer of \eqref{eq:opt3}).
    Then given any $\varepsilon > 0$, we have the following bound on the $\varepsilon$-effective index rank of $\hat{f}_L$:
    \begin{equation}\label{eq:effrankbnd}
        \epsfuncrank{\hat{f}_L} \le 
        \min_{s \in [d]}
        \left\lfloor 
        s 
        \left(\frac{\interpcost_s(\mathcal D)}{\varepsilon s}\right)^{\frac{2}{L}}
        \right\rfloor.
    \end{equation}
    Additionally, there exists an $\varepsilon^*>0$ independent of $L$ such that for all $0 < \varepsilon \leq \varepsilon^*$ we have $\epsfuncrank{\hat{f}_L} \geq 1$ for all $L \geq 2$.
\end{theorem}
Generalizations of \Cref{thm:effective rank bound} to interpolating functions that are near minimizers of $R_L$-cost and to functions that minimize the $R_L$-regularized empirical risk \eqref{eq:opt2} are given in \Cref{sec:extension of rank bounds}.

\begin{figure}[ht!]
\label{fig:low rank interpolants exist}
    \centering
    \includegraphics[width=\columnwidth]{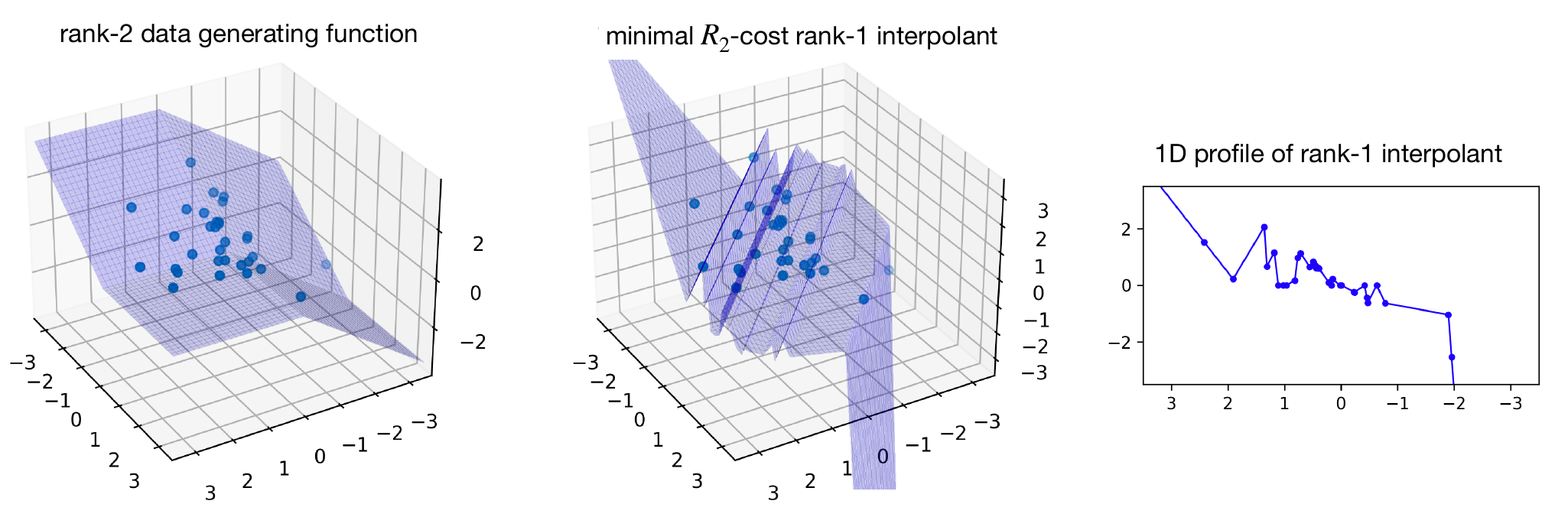}
    \caption{\textbf{Existence of rank deficient interpolants}. Left panel shows 32 training samples generated by the index-rank-2 function $f^*(x_1,x_2) = [x_1]_+-[x_2]_+$, for which $R_2(f^*) = 2$. Middle panel shows $f_{1}$, the estimated minimal $R_2$-cost index-rank-one interpolant of the training samples, for which $R_2(f_1) \approx 287.5$. Right panel shows the 1D profile of the rank-one interpolant in the middle panel.}
\end{figure}
The bounds in \Cref{thm:effective rank bound} have several implications in the case that the data is generated by a function $f^*$ that has index rank $r$ and finite $R_2$ cost. 
First, by considering the case $s=r$ in the bound \eqref{eq:effrankbnd} and using the fact that $\interpcost_r(\gD) \le R_2(f^*)$, we have the following direct corollary of \Cref{thm:effective rank bound}:
\begin{corollary}\label{cor:lowrank}
Suppose the training data $\gD$ is generated by a function $f^* \in \setofnns{\densitysupp}$ with $\funcrank{f^*} = r$.
Let $\hat{f}_L$ be an $R_L$-minimal interpolant of the training data $\gD$. Fix any $\varepsilon > 0$. Suppose $L\geq 2$ is such that $R_2(f^*) < \varepsilon r \left(1 + \frac{1}{r}\right)^{\frac{L}{2}}$.
Then $\epsfuncrank{\hat{f}_L} \leq r$.
\end{corollary}

The above corollary shows that for sufficiently large $L$, the minimum $R_L$-cost interpolant always has effective index rank bounded above by the index rank of the data generating function, independent of the number of training samples.
However, if the number of training samples is small, it is possible that an interpolant of index rank $s < r$ and small $R_2$ cost exists. In this case, the bounds in \Cref{thm:effective rank bound} imply $\epsfuncrank{\hat{f}_L} < r$ for sufficiently large $L$, i.e., $\hat{f}_L$ is rank deficient in the sense that its effective index rank is smaller than the index rank of the data generating function. In fact, \Cref{thm:effective rank bound} implies that for all sufficiently small values of $\varepsilon$ there exists a sufficiently large $L$ such that $\epsfuncrank{\hat{f}_L} = 1$, regardless of the index rank of the data generating function. 

Nevertheless, in our experiments training with standard gradient-based optimization techniques and using moderate values of $L$ (e.g., between 3 and 9), we never observed rank-deficient models; see \Cref{sec:experiments} for more details. 
Instead, we frequently observed that trained models had an effective index rank between the true rank of the task and the ambient dimension. Moreover, models trained with small amounts of label noise and a well-chosen depth $L$ almost always had effective index ranks exactly equal to the true rank; see \Cref{fig:active subspace err} for illustration.

\subsection{Index rank separation of $R_2$-cost minimizers and $R_L$-cost minimizers}
    The above results show that adding linear layers biases representation cost minimizers towards low-index-rank functions, and if the number of layers is sufficiently large, to an index-rank-one function. However, from these results alone it is unclear whether representation cost minimizers without additional linear layers (i.e., $R_2$-cost minimizers) will also exhibit this bias when the labels are generated by a low-index rank function. 
    Applying \cref{thm:relationship between RL R2 index rank and mixed var} with $L=2$ implies that the $R_2$-cost is bounded below by the mixed-variation of order 1. This suggests some amount of bias towards low-index-rank functions. Nevertheless, the examples below show that the bias induced by the $R_2$-cost is not always sufficiently strong to learn a low-index-rank function:
    there are datasets generated by an index-rank-one function such that the interpolating $R_2$-cost minimizer is not index-rank-one, while $R_L$-cost minimizers are nearly index-rank-one for large enough $L$.
    
    \begin{example}\label{ex:tworays}
    Consider the dataset $\gD$ consisting of three training pairs
    \[\gD = \{(\bm 0,0),(\vw_+,1),(\vw_-,1)\},\]
    where $\vw_+ =[\cos(\phi),\sin(\phi)]^\T$ and $\vw_- = [-\cos(\phi),\sin(\phi)]^\T$ with $0 < \phi < \pi/6$. Notice that $\gD$ is generated by the index-rank-one function $f^*(\vx) = |x_1|/\cos(\phi)$ with $R_2(f^*) = 2/\cos(\phi)$. However,  $\hat{f}_2(\vx) = [\vw_+^\T\vx]_+ + [\vw_-^\T\vx]_+$ is the unique minimal $R_2$-cost interpolant (see \Cref{sec:sm:tworays}, for proof), which has index rank 2. Additionally, if the domain $\gX \subseteq \R^2$ is a Euclidean ball centered at the origin or all of $\R^2$, and $\rho$ is any radially symmetric probability density function on $\gX$, direct calculations show $\sigma_2(\hat{f}_2) > \sin(\phi)$ (see \Cref{sec:sm:tworays}). Hence, for any $0 < \varepsilon \leq \sin(\phi)$ we have $\epsfuncrank{\hat{f}_2} = 2$, while the bound in \Cref{cor:lowrank} implies $\epsfuncrank{\hat{f}_L} = 1$ for all $L>3\log(\tfrac{1}{\varepsilon})+4$.
    \end{example}

    \begin{example} Another example is provided by results in \cite{ardeshir2023intrinsic} which studies $R_2$-cost of functions that interpolate samples of the \emph{parity function} $\chi:\{-1,1\}^d\rightarrow \R$ defined by $\chi(\vx) = \prod_i x_i$. The parity function is realizable as an index-rank-one shallow ReLU network of the form $\chi(\vx) = \phi(\bm 1^\T\vx)$ where $\bm 1 \in \R^d$ is the vector of all ones and $\phi \in \gN_2(\R)$ is a sawtooth function. In \cite{ardeshir2023intrinsic} it is proved that any index-rank-one interpolant of the parity dataset $\gD = \{(\vx,\chi(x)) : \vx \in \{-1,1\}^d\}$ must have $R_2$ cost scaling as $\Theta(d^{3/2})$, but there exist shallow ReLU networks interpolating the parity dataset with $R_2$ cost scaling as $\Theta(d)$. Therefore, for sufficiently large dimensions $d$, no interpolating $R_2$-cost minimizer $\hat{f}_2$ of the parity dataset can be index-rank-one. In particular, there exists an $\varepsilon_0 > 0$ such that for all $\varepsilon \leq \varepsilon_0$ we have $\epsfuncrank{\hat{f}_2} > 1$. On the other hand, \Cref{cor:lowrank} implies that any interpolating $R_L$-cost $ \hat{f}_L$ satisfies $\epsfuncrank{\hat{f}_L} = 1$ for sufficiently large $L$.
    \end{example}
    
\section{Numerical Experiments}
\label{sec:experiments}
To understand the practical consequences of the theoretical results in the previous section, we perform numerical experiments in which we train neural networks of the form \eqref{eq:L layers nn model} with varying values of $L$ on simulated data where the ground truth is a low-index-rank function. 
More specifically, we create an index-rank-$r$ function $f: \R^{20} \rightarrow \R$ by randomly generating $\va,\vb \in \R^{21}$ and a rank-$r$ matrix $\mW \in \R^{21\times20}$. 
Under this setup, the function 
$f(\vx) = \va^\T[\mW\vx + \vb]_+$ is an index-rank-$r$ function whose principal subspace is $\range(\mW^\top)$ (or, one could also say that $f$ is a single- or multi-index model with central subspace $\range(\mW^\top)$).
For $r=1, 2$ and $5$, we generate training datasets $\{(\vx_i, f(\vx_i) + \sigma \varepsilon_i)\}_{i=1}^n$ of size $n$ where $\vx_i \sim \uniform([-\frac{1}{2},\frac{1}{2}]^{20})$, $\varepsilon_i \sim N(0,1)$, and the label noise standard deviation $\sigma$ is either $0,0.25,0.5,$ or $1$. For several different values of training samples $n$, we train neural networks of the form \eqref{eq:L layers nn model} by minimizing the $\ell_2$-regularized empirical risk \eqref{eq:opt1} with a mean-squared error loss function $\ell(z,y) = |z-y|^2$. We tune the hyperparameters of depth ($L$) and $\ell_2$-regularization strength ($\lambda$) on a separate validation set. We compare against shallow ReLU networks without linear layers (i.e., $L=2$) trained in the same way and with the hyperparameter $\lambda$ tuned in the same way. See \Cref{app:experiment details} for more training details.

We test the performance of the trained networks on $m=2048$ new test samples of the form $(\vx_i, f(\vx_i) + \sigma \varepsilon_i)$ where either $\vx_i\sim \uniform([-\frac{1}{2},\frac{1}{2}]^{20})$ to measure in-distribution generalization (\Cref{fig:Generalization MSE}) or $\vx_i\sim \uniform([-1,1]^{20})$ to measure out-of-distribution generalization (\Cref{fig:OOD MSE}).
In \Cref{fig:Generalization MSE,fig:OOD MSE}, we see that the regularization induced by adding linear layers helps improve in- and out-of-distribution generalization in this setting; models with linear layers approach the irreducible error of $\sigma^2$ with fewer samples than models without linear layers.\footnote{Because of the label noise, the expected squared-error of any model will be at least $\sigma^2$.}

Models trained with extra linear layers are better able to adapt to the multi-index model underlying the data because they have a low effective index rank. 
We estimate the EGOP singular values of the trained networks $\hat f$ using the \emph{average} gradient outer product (AGOP) matrix computed over the in-distribution test set:
\begin{equation}
    \hat{\mC}_{\hat f} := \frac{1}{m} \sum_{i=1}^{m} \nabla \hat f(\vx_i) \nabla \hat f(\vx_i)^\T.
    \label{eq:AGOP}
\end{equation}
As shown in \cite{constantine2015active}, the AGOP is a good estimate of the EGOP with high probability. 
Thus, the singular values of $\hat f$ can be well approximated by the singular values of the square root of the AGOP.
The singular values for each model $\hat f$ are shown in \Cref{fig:trained singular values}. We observe that adding linear layers leads to trained networks with smaller singular values and lower effective index rank; the singular values $\sigma_k$  for larger $k$ of networks with extra linear layers are often many orders of magnitude smaller than their counterparts without linear layers. 

We also see that models with linear layers generalize better in our experiments because of improved alignment with the principal subspace of the ground truth function.
We use the AGOP to estimate the alignment between the principal subspace of the trained model and the true central subspace of $f$. We measure the alignment between two $r$-dimensional subspaces $\gU,\gV$ by their largest principal angle $\angle(\gU, \gV) = \arcsin(\|\mP_\gU-\mP_\gV\|_{op})$, where $\mP_\gU$ and $\mP_{\gV}$ denote the orthogonal projectors onto $\gU$ and $\gV$, respectively \cite{knyazev2002principal}.
In \Cref{fig:active subspace err} we show the largest principal angle between the principal subspace of $f$ and the principal subspace of the trained models, estimated as the span of the top $r$ eigenvectors of the AGOP. We also show the estimates of the effective index rank of the trained networks at the $\varepsilon = 10^{-3}$ tolerance level.
For models that have many singular values that are far from zero, including those trained without linear layers, the truncation to exactly $r$ eigenvectors in computing the principal angle can give an overly generous estimate of the agreement between the learned principal subspace and principal subspace of the function used to generate the data. Even using this generous estimate, models with linear layers demonstrate better alignment.

\begin{figure}[ht!]
    \centering
    \includegraphics[width=\columnwidth]{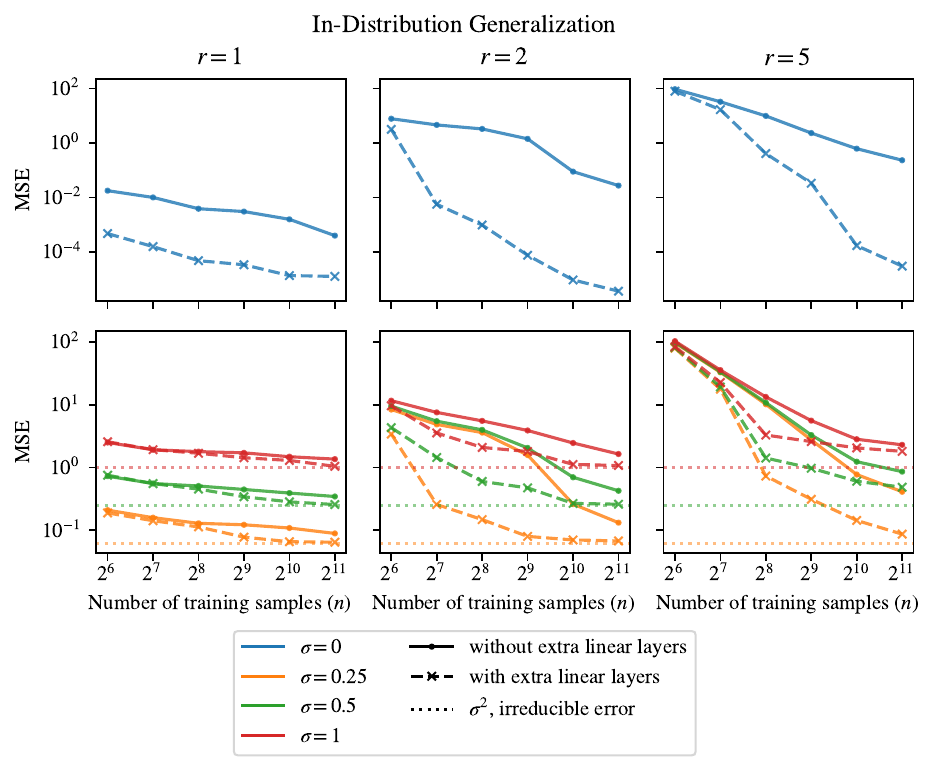}
    \caption{\textbf{Adding linear layers improves generalization on multi-index models.} In-distribution generalization performance of networks trained with or without extra linear layers on data from a single-index model (left) or multi-index model (center, right) with varying amounts of label noise. Models trained with extra linear layers demonstrate significantly improved generalization in this setting. (Bottom) Even in the presence of label noise ($\sigma > 0$), the generalization error of models with extra linear layers quickly approaches the irreducible error $\sigma^2$ as the number of training samples ($n$) increases. See \Cref{sec:experiments} and \Cref{app:experiment details} for training details.}
    \label{fig:Generalization MSE}
\end{figure}

\begin{figure}[ht!]
    \centering
    \includegraphics[width=\columnwidth]{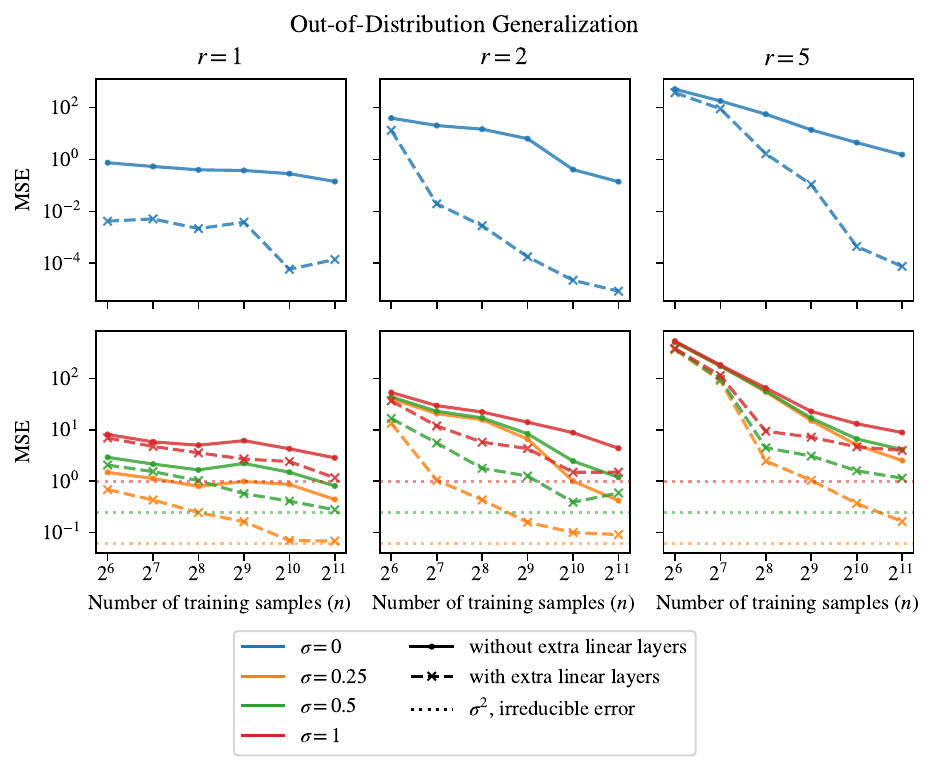}
    \caption{\textbf{Adding linear layers improves performance outside of the training distribution.} Out-of-distribution generalization performance of networks trained with or without extra linear layers on data from a single-index model (left) or multi-index model (center, right) with varying amounts of label noise. See \Cref{sec:experiments} and \Cref{app:experiment details} for training details.}
    \label{fig:OOD MSE}
\end{figure}

\begin{figure}[hp!]
    \centering
    \includegraphics[height=\dimexpr \textheight - 5\baselineskip\relax]{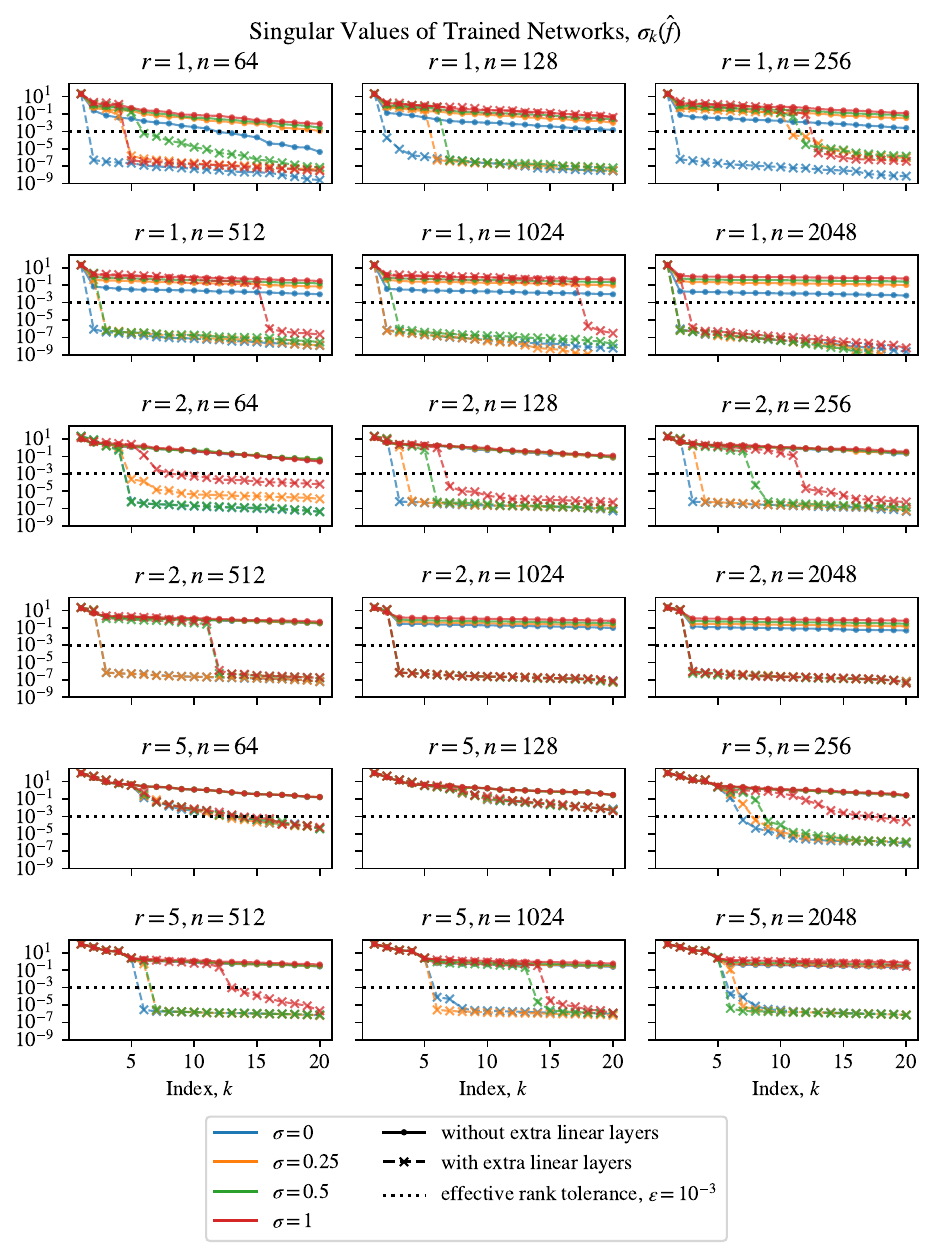}
    \caption{\textbf{Adding linear layers decreases the singular values of trained networks.} Singular values of trained networks trained with or without extra linear layers on data from a single-index model or multi-index model with varying amounts of label noise. Models with extra linear layers exhibit sharper singular value dropoff and have a smaller effective index rank at the $\varepsilon = 10^{-3}$ tolerance level than models without linear layers. See \Cref{sec:experiments} and \Cref{app:experiment details} for training details.}
    \label{fig:trained singular values}
\end{figure}

\begin{figure}[ht!]
    \centering
    \includegraphics[width=\columnwidth]{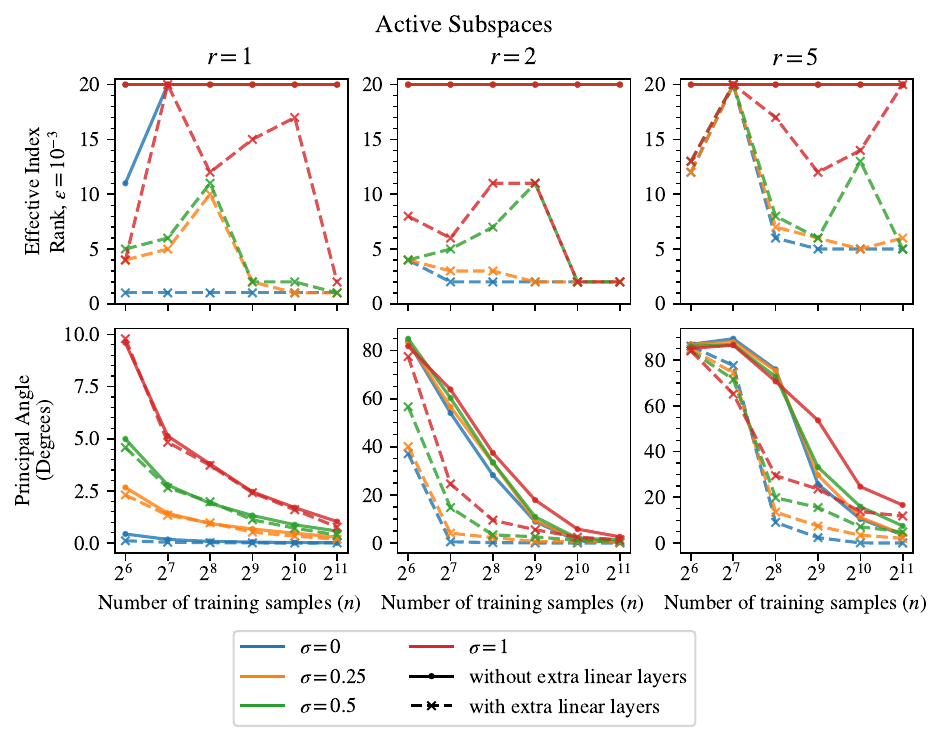}
    \caption{\textbf{Adding linear layers helps find networks with low effective index rank that are aligned with the true principal subspace.} Estimates of the effective index rank and principal subspace alignment of networks trained with or without extra linear layers on data from a single-index model (left) or multi-index model (center, right) with varying amounts of label noise. (Top) The effective index rank using a tolerance of $\varepsilon = 10^{-3}$. (Bottom) The largest principal angle between the principal subspace of $f$ and the span of the top $r$ eigenvectors of the AGOP of the trained model. See \Cref{sec:experiments} and \Cref{app:experiment details} for training details.}
    \label{fig:active subspace err}
\end{figure}

\section{Discussion, Limitations, and Future Directions}
\label{sec:discussion}
The representation cost expressions we derive offer new, quantitative insights into multi-layer networks trained using $\ell_2$-regularization. 
Specifically, we show that training a ReLU network with additional linear layers on the input side with $\ell_2$-regularization implicitly seeks a \textit{low-dimensional} subspace such that after projecting the training data into this subspace it can be fit with a \textit{smooth} function (in the sense of having a low two-layer representation cost). To characterize the representation cost in function space, we provide a formal definition of mixed variation
(\Cref{def:mv}) consistent with past usage \citep{donoho2000high,parhi2021banach}.

While we do not provide generalization bounds, our numerical experiments suggest that if low-index-rank structure is present in the data, adding linear layers induces a bias that is helpful for generalization, particularly with small sample sizes when two-layer networks have larger generalization errors. 
As Bach \citep{bach2017breaking} showed, shallow networks minimizing the $L=2$ representation cost can achieve the minimax generalization rate, which depends principally on the dimension of the latent central subspace
even in high-dimensional settings \citep{liu2020learning}.
An interesting direction for future work is to 
see if networks minimizing the $L>2$ representation cost can achieve improved generalization over shallow networks without constraining the network architecture (as in \citep{bietti2022learning}) or training (as in \citep{mousavi2022neural}) to explicitly seek the central subspace. While adding linear layers cannot improve rates as the sample size tends to infinity (since $L=2$ is already minimax optimal), it is possible that linear layers improve constants in generalization rates. Such improvements can have a substantial impact in practice, especially when the sample size is moderate, as in our experiments.
An additional benefit of this ability to adapt to latent single- and multi-index structure is that networks with low-index-rank are inherently compressible \citep{mousavi2022neural}.

It is important to note that the number of linear layers to add should be treated as a tunable hyperparameter;
\Cref{thm:effective rank bound} implies that adding too many linear layers with a fixed number of training samples can result in global minimizers that underestimate the index rank of the ground truth function. However, the number of linear layers at which such rank-deficient solutions occur may be large. In our experiments, we never observed rank-deficient solutions when training with a moderate number of linear layers ($L \le 9$) and using standard initialization schemes and optimization techniques.

One limitation of our theoretical analysis is the focus on properties of global or near-global 
minimizers. We do not analyze the dynamics of specific optimization algorithms, in contrast with \cite{Damian_Lee_Soltanolkotabi_2022}, which provides generalization bounds in terms of sample complexity of a shallow network trained with a modified form of gradient descent. An interesting extension of this work would be to analyze whether adding linear layers allows specific optimization algorithms to converge to functions with small $R_L$ cost, as we observe in experiments. In that case, \Cref{thm:effective rank bound} suggests that these solutions would have small effective index rank. 

Finally, a key limitation of the current work is that our analysis framework does not extend easily to deep networks with multiple nonlinear layers. 
The inductive bias studied in this work is not directly indicative of the inductive bias of deep ReLU networks. Specifically, the inductive bias of deep ReLU networks does not appear to inherently favor functions with low mixed-variation; see \Cref{sec:different architecture experiments} for a numerical study of training deep ReLU networks on data from a single-index model. These experiments show that adding ReLU layers does not produce functions with low mixed variation and does not enhance generalization in this setting.
Progress towards understanding the inductive bias of deep ReLU networks is found in \cite{jacot2022implicit,jacot2024bottleneck}, but more fully understanding the representation costs of general nonlinear deep networks remains a significant open problem for the community.

\section*{Acknowledgements}
R. Willett gratefully acknowledges the support of AFOSR grant FA9550-18-1-0166 and NSF grant DMS-2023109.
G. Ongie was supported by NSF CRII award CCF-2153371.
S. Parkinson was supported by the National Science Foundation Graduate Research Fellowship under Grant No. DGE:2140001. Any opinions, findings, and conclusions or recommendations expressed in this material are those of the authors and do not necessarily reflect the views of the National Science Foundation.

\newpage
\bibliographystyle{apalike}
\bibliography{refs}

\begin{thebibliography}{}

\bibitem[Ardeshir et~al., 2023]{ardeshir2023intrinsic}
Ardeshir, N., Hsu, D.~J., and Sanford, C.~H. (2023).
\newblock Intrinsic dimensionality and generalization properties of the r-norm inductive bias.
\newblock In {\em The Thirty Sixth Annual Conference on Learning Theory}, pages 3264--3303. PMLR.

\bibitem[Arora et~al., 2018]{arora2018optimization}
Arora, S., Cohen, N., and Hazan, E. (2018).
\newblock On the optimization of deep networks: Implicit acceleration by overparameterization.
\newblock In {\em International Conference on Machine Learning}, pages 244--253. PMLR.

\bibitem[Arora et~al., 2019]{arora2019implicit}
Arora, S., Cohen, N., Hu, W., and Luo, Y. (2019).
\newblock Implicit regularization in deep matrix factorization.
\newblock {\em Advances in Neural Information Processing Systems}, 32:7413--7424.

\bibitem[Ba and Caruana, 2014]{ba2013deep}
Ba, J. and Caruana, R. (2014).
\newblock Do deep nets really need to be deep?
\newblock In {\em Advances in Neural Information Processing Systems}, volume~27.

\bibitem[Bach, 2017]{bach2017breaking}
Bach, F. (2017).
\newblock Breaking the curse of dimensionality with convex neural networks.
\newblock {\em Journal of Machine Learning Research}, 18(19):1--53.

\bibitem[Bartlett, 1997]{Bartlett1997}
Bartlett, P.~L. (1997).
\newblock For valid generalization the size of the weights is more important than the size of the network.
\newblock In {\em Advances in Neural Information Processing Systems}, pages 134--140.

\bibitem[Bartolucci et~al., 2023]{bartolucci2023understanding}
Bartolucci, F., De~Vito, E., Rosasco, L., and Vigogna, S. (2023).
\newblock Understanding neural networks with reproducing kernel banach spaces.
\newblock {\em Applied and Computational Harmonic Analysis}, 62:194--236.

\bibitem[Beaglehole et~al., 2024]{beaglehole2024average}
Beaglehole, D., S{\'u}ken{\'\i}k, P., Mondelli, M., and Belkin, M. (2024).
\newblock Average gradient outer product as a mechanism for deep neural collapse.
\newblock {\em arXiv preprint arXiv:2402.13728}.

\bibitem[Bietti et~al., 2022]{bietti2022learning}
Bietti, A., Bruna, J., Sanford, C., and Song, M.~J. (2022).
\newblock Learning single-index models with shallow neural networks.
\newblock In {\em Advances in Neural Information Processing Systems}.

\bibitem[Boursier and Flammarion, 2023]{Boursier_Flammarion_2023}
Boursier, E. and Flammarion, N. (2023).
\newblock Penalising the biases in norm regularisation enforces sparsity.
\newblock {\em Advances in Neural Information Processing Systems}, 36:57795--57824.

\bibitem[Cand{\`e}s, 2014]{candes2014mathematics}
Cand{\`e}s, E.~J. (2014).
\newblock Mathematics of sparsity (and a few other things).
\newblock In {\em Proceedings of the International Congress of Mathematicians, Seoul, South Korea}, volume 123, pages 235--258. Citeseer.

\bibitem[Chen, 2024]{chen2024neural}
Chen, Z. (2024).
\newblock Neural hilbert ladders: Multi-layer neural networks in function space.
\newblock {\em Journal of Machine Learning Research}, 25(109):1--65.

\bibitem[Cohen et~al., 2012]{cohen2012capturing}
Cohen, A., Daubechies, I., DeVore, R., Kerkyacharian, G., and Picard, D. (2012).
\newblock Capturing ridge functions in high dimensions from point queries.
\newblock {\em Constructive Approximation}, 35:225--243.

\bibitem[Constantine, 2015]{constantine2015active}
Constantine, P.~G. (2015).
\newblock {\em Active subspaces: Emerging ideas for dimension reduction in parameter studies}.
\newblock SIAM.

\bibitem[Constantine et~al., 2014]{constantine2014active}
Constantine, P.~G., Dow, E., and Wang, Q. (2014).
\newblock Active subspace methods in theory and practice: {A}pplications to kriging surfaces.
\newblock {\em SIAM Journal on Scientific Computing}, 36(4):A1500--A1524.

\bibitem[Dai et~al., 2021]{dai2021representation}
Dai, Z., Karzand, M., and Srebro, N. (2021).
\newblock Representation costs of linear neural networks: Analysis and design.
\newblock {\em Advances in Neural Information Processing Systems}, 34.

\bibitem[Damian et~al., 2022]{Damian_Lee_Soltanolkotabi_2022}
Damian, A., Lee, J., and Soltanolkotabi, M. (2022).
\newblock Neural networks can learn representations with gradient descent.
\newblock In {\em Conference on Learning Theory}, pages 5413--5452. PMLR.

\bibitem[Donoho, 2000]{donoho2000high}
Donoho, D.~L. (2000).
\newblock High-dimensional data analysis: The curses and blessings of dimensionality.
\newblock {\em AMS math challenges lecture}, 1(2000):32.

\bibitem[E et~al., 2022]{e2022barron}
E, W., Ma, C., and Wu, L. (2022).
\newblock The barron space and the flow-induced function spaces for neural network models.
\newblock {\em Constructive Approximation}, 55(1):369--406.

\bibitem[E and Wojtowytsch, 2022]{e2022representation}
E, W. and Wojtowytsch, S. (2022).
\newblock Representation formulas and pointwise properties for barron functions.
\newblock {\em Calculus of Variations and Partial Differential Equations}, 61(2):46.

\bibitem[Ergen and Pilanci, 2020]{ergen2020implicit}
Ergen, T. and Pilanci, M. (2020).
\newblock Implicit convex regularizers of {CNN} architectures: Convex optimization of two-and three-layer networks in polynomial time.
\newblock In {\em International Conference on Learning Representations}.

\bibitem[Ergen and Pilanci, 2021]{ergen2021revealing}
Ergen, T. and Pilanci, M. (2021).
\newblock Revealing the structure of deep neural networks via convex duality.
\newblock In {\em International Conference on Machine Learning}, pages 3004--3014. PMLR.

\bibitem[Ergen and Pilanci, 2024]{ergen2024path}
Ergen, T. and Pilanci, M. (2024).
\newblock Path regularization: A convexity and sparsity inducing regularization for parallel relu networks.
\newblock {\em Advances in Neural Information Processing Systems}, 36.

\bibitem[Evans, 2010]{evans2010partial}
Evans, L.~C. (2010).
\newblock {\em Partial differential equations}, volume~19.
\newblock American Mathematical Soc., 2nd edition.

\bibitem[Even et~al., 2023]{even2023sgd}
Even, M., Pesme, S., Gunasekar, S., and Flammarion, N. (2023).
\newblock (s)gd over diagonal linear networks: Implicit bias, large stepsizes and edge of stability.
\newblock {\em Advances in Neural Information Processing Systems}, 36:29406--29448.

\bibitem[Ganti et~al., 2017]{ganti2017learning}
Ganti, R., Rao, N., Balzano, L., Willett, R., and Nowak, R. (2017).
\newblock On learning high dimensional structured single index models.
\newblock In {\em Proceedings of the AAAI Conference on Artificial Intelligence}, volume~31.

\bibitem[Ganti et~al., 2015]{ganti2015matrix}
Ganti, R.~S., Balzano, L., and Willett, R. (2015).
\newblock Matrix completion under monotonic single index models.
\newblock {\em Advances in neural information processing systems}, 28.

\bibitem[Gollakota et~al., 2024]{gollakota2024agnostically}
Gollakota, A., Gopalan, P., Klivans, A., and Stavropoulos, K. (2024).
\newblock Agnostically learning single-index models using omnipredictors.
\newblock {\em Advances in Neural Information Processing Systems}, 36.

\bibitem[Golubeva et~al., 2021]{golubeva2020wider}
Golubeva, A., Gur-Ari, G., and Neyshabur, B. (2021).
\newblock Are wider nets better given the same number of parameters?
\newblock In {\em International Conference on Learning Representations}.

\bibitem[Gunasekar et~al., 2018]{gunasekar2018implicit}
Gunasekar, S., Woodworth, B., Bhojanapalli, S., Neyshabur, B., and Srebro, N. (2018).
\newblock Implicit regularization in matrix factorization.
\newblock In {\em 2018 Information Theory and Applications Workshop (ITA)}, pages 1--10. IEEE.

\bibitem[Hanson and Pratt, 1988]{hanson1988comparing}
Hanson, S. and Pratt, L. (1988).
\newblock Comparing biases for minimal network construction with back-propagation.
\newblock {\em Advances in neural information processing systems}, 1:177--185.

\bibitem[Hristache et~al., 2001]{hristache2001structure}
Hristache, M., Juditsky, A., Polzehl, J., and Spokoiny, V. (2001).
\newblock Structure adaptive approach for dimension reduction.
\newblock {\em Annals of Statistics}, pages 1537--1566.

\bibitem[Hu et~al., 2022]{hu2022lora}
Hu, E.~J., yelong shen, Wallis, P., Allen-Zhu, Z., Li, Y., Wang, S., Wang, L., and Chen, W. (2022).
\newblock Lo{RA}: Low-rank adaptation of large language models.
\newblock In {\em International Conference on Learning Representations}.

\bibitem[Huh et~al., 2022]{huh2022low}
Huh, M., Mobahi, H., Zhang, R., Cheung, B., Agrawal, P., and Isola, P. (2022).
\newblock The low-rank simplicity bias in deep networks.
\newblock {\em Transactions on Machine Learning Research}.

\bibitem[Jacot, 2023]{jacot2022implicit}
Jacot, A. (2023).
\newblock Implicit bias of large depth networks: {A} notion of rank for nonlinear functions.
\newblock {\em International Conference on Learning Representations}.

\bibitem[Jacot, 2024]{jacot2024bottleneck}
Jacot, A. (2024).
\newblock Bottleneck structure in learned features: Low-dimension vs regularity tradeoff.
\newblock {\em Advances in Neural Information Processing Systems}, 36.

\bibitem[Ji and Telgarsky, 2019]{ji2018gradient}
Ji, Z. and Telgarsky, M. (2019).
\newblock Gradient descent aligns the layers of deep linear networks.
\newblock In {\em International Conference on Learning Representations}.

\bibitem[Kakade et~al., 2011]{kakade2011efficient}
Kakade, S.~M., Kanade, V., Shamir, O., and Kalai, A. (2011).
\newblock Efficient learning of generalized linear and single index models with isotonic regression.
\newblock {\em Advances in Neural Information Processing Systems}, 24.

\bibitem[Khodak et~al., 2020]{khodak2020initialization}
Khodak, M., Tenenholtz, N.~A., Mackey, L., and Fusi, N. (2020).
\newblock Initialization and regularization of factorized neural layers.
\newblock In {\em International Conference on Learning Representations}.

\bibitem[Knyazev and Argentati, 2002]{knyazev2002principal}
Knyazev, A.~V. and Argentati, M.~E. (2002).
\newblock Principal angles between subspaces in an {A}-based scalar product: algorithms and perturbation estimates.
\newblock {\em SIAM Journal on Scientific Computing}, 23(6):2008--2040.

\bibitem[Liu and Liao, 2024]{liu2020learning}
Liu, H. and Liao, W. (2024).
\newblock Learning functions varying along a central subspace.
\newblock {\em SIAM Journal on Mathematics of Data Science}, 6(2):343--371.

\bibitem[Loshchilov and Hutter, 2019]{loshchilov2017decoupled}
Loshchilov, I. and Hutter, F. (2019).
\newblock Decoupled weight decay regularization.
\newblock In {\em International Conference on Learning Representations}.

\bibitem[Lyu and Li, 2020]{lyu2019gradient}
Lyu, K. and Li, J. (2020).
\newblock Gradient descent maximizes the margin of homogeneous neural networks.
\newblock In {\em International Conference on Learning Representations}.

\bibitem[Ma et~al., 2019]{ma2019priori}
Ma, C., Wu, L., et~al. (2019).
\newblock A priori estimates of the population risk for two-layer neural networks.
\newblock {\em Communications in Mathematical Sciences}, 17(5):1407--1425.

\bibitem[Ma et~al., 2022]{ma2022uniform}
Ma, L., Siegel, J.~W., and Xu, J. (2022).
\newblock Uniform approximation rates and metric entropy of shallow neural networks.
\newblock {\em Research in the Mathematical Sciences}, 9(3):46.

\bibitem[Mousavi-Hosseini et~al., 2022]{mousavi2022neural}
Mousavi-Hosseini, A., Park, S., Girotti, M., Mitliagkas, I., and Erdogdu, M.~A. (2022).
\newblock Neural networks efficiently learn low-dimensional representations with {SGD}.
\newblock In {\em The Eleventh International Conference on Learning Representations}.

\bibitem[Mulayoff et~al., 2021]{mulayoff2021implicit}
Mulayoff, R., Michaeli, T., and Soudry, D. (2021).
\newblock The implicit bias of minima stability: A view from function space.
\newblock {\em Advances in Neural Information Processing Systems}, 34.

\bibitem[Nacson et~al., 2019]{nacson2019lexicographic}
Nacson, M.~S., Gunasekar, S., Lee, J.~D., Srebro, N., and Soudry, D. (2019).
\newblock Lexicographic and depth-sensitive margins in homogeneous and non-homogeneous deep models.
\newblock {\em International Conference on Machine Learning}.

\bibitem[Nacson et~al., 2022]{nacson2022implicit}
Nacson, M.~S., Mulayoff, R., Ongie, G., Michaeli, T., and Soudry, D. (2022).
\newblock The implicit bias of minima stability in multivariate shallow relu networks.
\newblock In {\em The Eleventh International Conference on Learning Representations}.

\bibitem[Neyshabur et~al., 2014]{Neyshabur14}
Neyshabur, B., Tomioka, R., and Srebro, N. (2014).
\newblock In search of the real inductive bias: {O}n the role of implicit regularization in deep learning.
\newblock {\em arXiv preprint arXiv:1412.6614}.

\bibitem[Neyshabur et~al., 2015]{neyshabur2015norm}
Neyshabur, B., Tomioka, R., and Srebro, N. (2015).
\newblock Norm-based capacity control in neural networks.
\newblock In {\em Conference on Learning Theory}, pages 1376--1401. PMLR.

\bibitem[Nie et~al., 2012]{nie2012low}
Nie, F., Huang, H., and Ding, C. (2012).
\newblock Low-rank matrix recovery via efficient schatten p-norm minimization.
\newblock In {\em Proceedings of the AAAI Conference on Artificial Intelligence}, volume~26, pages 655--661.

\bibitem[Ongie et~al., 2020]{ongie2019function}
Ongie, G., Willett, R., Soudry, D., and Srebro, N. (2020).
\newblock A function space view of bounded norm infinite width {ReLU} nets: The multivariate case.
\newblock In {\em International Conference on Learning Representations}.

\bibitem[Parhi and Nowak, 2021]{parhi2021banach}
Parhi, R. and Nowak, R.~D. (2021).
\newblock Banach space representer theorems for neural networks and ridge splines.
\newblock {\em J. Mach. Learn. Res.}, 22(43):1--40.

\bibitem[Parhi and Nowak, 2022a]{parhi2022near}
Parhi, R. and Nowak, R.~D. (2022a).
\newblock Near-minimax optimal estimation with shallow {ReLU} neural networks.
\newblock {\em IEEE Transactions on Information Theory}.

\bibitem[Parhi and Nowak, 2022b]{parhi2022kinds}
Parhi, R. and Nowak, R.~D. (2022b).
\newblock What kinds of functions do deep neural networks learn? {I}nsights from variational spline theory.
\newblock {\em SIAM Journal on Mathematics of Data Science}, 4(2):464--489.

\bibitem[Pesme et~al., 2021]{pesme2021implicit}
Pesme, S., Pillaud-Vivien, L., and Flammarion, N. (2021).
\newblock Implicit bias of sgd for diagonal linear networks: a provable benefit of stochasticity.
\newblock In Ranzato, M., Beygelzimer, A., Dauphin, Y., Liang, P., and Vaughan, J.~W., editors, {\em Advances in Neural Information Processing Systems}, volume~34, pages 29218--29230. Curran Associates, Inc.

\bibitem[Pilanci and Ergen, 2020]{pilanci2020neural}
Pilanci, M. and Ergen, T. (2020).
\newblock Neural networks are convex regularizers: Exact polynomial-time convex optimization formulations for two-layer networks.
\newblock In {\em International Conference on Machine Learning}, pages 7695--7705. PMLR.

\bibitem[Radhakrishnan et~al., 2024a]{radhakrishnan2024mechanism}
Radhakrishnan, A., Beaglehole, D., Pandit, P., and Belkin, M. (2024a).
\newblock Mechanism for feature learning in neural networks and backpropagation-free machine learning models.
\newblock {\em Science}, 383(6690):1461--1467.

\bibitem[Radhakrishnan et~al., 2024b]{radhakrishnan2024linear}
Radhakrishnan, A., Belkin, M., and Drusvyatskiy, D. (2024b).
\newblock Linear recursive feature machines provably recover low-rank matrices.
\newblock {\em arXiv preprint arXiv:2401.04553}.

\bibitem[Razin and Cohen, 2020]{razin2020implicit}
Razin, N. and Cohen, N. (2020).
\newblock Implicit regularization in deep learning may not be explainable by norms.
\newblock {\em Advances in Neural Information Processing Systems}, 33:21174--21187.

\bibitem[Razin et~al., 2021]{razin2021implicit}
Razin, N., Maman, A., and Cohen, N. (2021).
\newblock Implicit regularization in tensor factorization.
\newblock In {\em International Conference on Machine Learning}, pages 8913--8924.

\bibitem[Samarov, 1993]{samarov1993exploring}
Samarov, A.~M. (1993).
\newblock Exploring regression structure using nonparametric functional estimation.
\newblock {\em Journal of the American Statistical Association}, 88(423):836--847.

\bibitem[Savarese et~al., 2019]{savarese2019infinite}
Savarese, P., Evron, I., Soudry, D., and Srebro, N. (2019).
\newblock How do infinite width bounded norm networks look in function space?
\newblock In {\em Conference on Learning Theory}, pages 2667--2690.

\bibitem[Shang et~al., 2016]{shang2017bilinear}
Shang, F., Liu, Y., and Cheng, J. (2016).
\newblock Scalable algorithms for tractable schatten quasi-norm minimization.
\newblock In {\em Proceedings of the AAAI Conference on Artificial Intelligence}, volume~30.

\bibitem[Shang et~al., 2020]{shang2020unified}
Shang, F., Liu, Y., Shang, F., Liu, H., Kong, L., and Jiao, L. (2020).
\newblock A unified scalable equivalent formulation for {S}chatten quasi-norms.
\newblock {\em Mathematics}, 8(8):1325.

\bibitem[Shenouda et~al., 2023]{shenouda2023vector}
Shenouda, J., Parhi, R., Lee, K., and Nowak, R.~D. (2023).
\newblock Vector-valued variation spaces and width bounds for {DNN}s: Insights on weight decay regularization.
\newblock {\em arXiv preprint arXiv:2305.16534}.

\bibitem[Siegel and Xu, 2023]{siegel2023characterization}
Siegel, J.~W. and Xu, J. (2023).
\newblock Characterization of the variation spaces corresponding to shallow neural networks.
\newblock {\em Constructive Approximation}, 57(3):1109--1132.

\bibitem[Siegel and Xu, 2024]{siegel2024sharp}
Siegel, J.~W. and Xu, J. (2024).
\newblock Sharp bounds on the approximation rates, metric entropy, and n-widths of shallow neural networks.
\newblock {\em Foundations of Computational Mathematics}, 24(2):481--537.

\bibitem[Srebro et~al., 2004]{srebro2004maximum}
Srebro, N., Rennie, J., and Jaakkola, T. (2004).
\newblock Maximum-margin matrix factorization.
\newblock {\em Advances in Neural Information Processing Systems}, 17.

\bibitem[Trivedi et~al., 2014]{trivedi2014consistent}
Trivedi, S., Wang, J., Kpotufe, S., and Shakhnarovich, G. (2014).
\newblock A consistent estimator of the expected gradient outerproduct.
\newblock In {\em UAI}, pages 819--828.

\bibitem[Unser, 2023]{unser2023ridges}
Unser, M. (2023).
\newblock Ridges, neural networks, and the radon transform.
\newblock {\em Journal of Machine Learning Research}, 24(37):1--33.

\bibitem[Urban et~al., 2016]{urban2016deep}
Urban, G., Geras, K.~J., Kahou, S.~E., Aslan, O., Wang, S., Mohamed, A., Philipose, M., Richardson, M., and Caruana, R. (2016).
\newblock Do deep convolutional nets really need to be deep and convolutional?
\newblock In {\em International Conference on Learning Representations}.

\bibitem[Vaswani et~al., 2017]{vaswani2017attention}
Vaswani, A., Shazeer, N., Parmar, N., Uszkoreit, J., Jones, L., Gomez, A.~N., Kaiser, L.~u., and Polosukhin, I. (2017).
\newblock Attention is all you need.
\newblock In Guyon, I., Luxburg, U.~V., Bengio, S., Wallach, H., Fergus, R., Vishwanathan, S., and Garnett, R., editors, {\em Advances in Neural Information Processing Systems}, volume~30. Curran Associates, Inc.

\bibitem[Wang and Xi, 1997]{wang1997}
Wang, B.-Y. and Xi, B.-Y. (1997).
\newblock Some inequalities for singular values of matrix products.
\newblock {\em Linear algebra and its applications}, 264:109--115.

\bibitem[Wang et~al., 2022]{wang2022parallel}
Wang, Y., Ergen, T., and Pilanci, M. (2022).
\newblock Parallel deep neural networks have zero duality gap.
\newblock In {\em The Eleventh International Conference on Learning Representations}.

\bibitem[Wang and Jacot, 2024]{wang2024implicit}
Wang, Z. and Jacot, A. (2024).
\newblock Implicit bias of sgd in l2-regularized linear dnns: One-way jumps from high to low rank.
\newblock In {\em The Twelfth International Conference on Learning Representations}.

\bibitem[Wei et~al., 2019]{wei2018margin}
Wei, C., Lee, J.~D., Liu, Q., and Ma, T. (2019).
\newblock Regularization matters: Generalization and optimization of neural nets vs their induced kernel.
\newblock {\em Advances in Neural Information Processing Systems}, 32.

\bibitem[Wojtowytsch, 2024]{wojtowytsch2024optimal}
Wojtowytsch, S. (2024).
\newblock Optimal bump functions for shallow {ReLU} networks: Weight decay, depth separation, curse of dimensionality.
\newblock {\em Journal of Machine Learning Research}, 25(27):1--49.

\bibitem[Wu et~al., 2010]{wu2010learning}
Wu, Q., Guinney, J., Maggioni, M., and Mukherjee, S. (2010).
\newblock Learning gradients: predictive models that infer geometry and statistical dependence.
\newblock {\em The Journal of Machine Learning Research}, 11:2175--2198.

\bibitem[Xia, 2008]{xia2008multiple}
Xia, Y. (2008).
\newblock A multiple-index model and dimension reduction.
\newblock {\em Journal of the American Statistical Association}, 103(484):1631--1640.

\bibitem[Yaras et~al., 2024]{yaras2024compressible}
Yaras, C., Wang, P., Balzano, L., and Qu, Q. (2024).
\newblock Compressible dynamics in deep overparameterized low-rank learning \&; adaptation.
\newblock In Salakhutdinov, R., Kolter, Z., Heller, K., Weller, A., Oliver, N., Scarlett, J., and Berkenkamp, F., editors, {\em Proceedings of the 41st International Conference on Machine Learning}, volume 235 of {\em Proceedings of Machine Learning Research}, pages 56946--56965. PMLR.

\bibitem[Yin et~al., 2008]{yin2008successive}
Yin, X., Li, B., and Cook, R.~D. (2008).
\newblock Successive direction extraction for estimating the central subspace in a multiple-index regression.
\newblock {\em Journal of Multivariate Analysis}, 99(8):1733--1757.

\bibitem[Yuan et~al., 2023]{yuan2023efficient}
Yuan, G., Xu, M., Kpotufe, S., and Hsu, D. (2023).
\newblock Efficient estimation of the central mean subspace via smoothed gradient outer products.
\newblock {\em arXiv preprint arXiv:2312.15469}.

\bibitem[Zeger et~al., 2024]{zeger2024library}
Zeger, E., Wang, Y., Mishkin, A., Ergen, T., Cand{\`e}s, E., and Pilanci, M. (2024).
\newblock A library of mirrors: Deep neural nets in low dimensions are convex lasso models with reflection features.
\newblock {\em arXiv preprint arXiv:2403.01046}.

\bibitem[Zeng et~al., 2023]{zeng2023autoencoders}
Zeng, K., Perez De~Jesus, C.~E., Fox, A.~J., and Graham, M.~D. (2023).
\newblock Autoencoders for discovering manifold dimension and coordinates in data from complex dynamical systems.
\newblock {\em Machine Learning: Science and Technology}.

\bibitem[Zhang et~al., 2021]{Zhang16}
Zhang, C., Bengio, S., Hardt, M., Recht, B., and Vinyals, O. (2021).
\newblock Understanding deep learning (still) requires rethinking generalization.
\newblock {\em Communications of the ACM}, 64(3):107--115.

\bibitem[Zhu and Zeng, 2006]{zhu2006fourier}
Zhu, Y. and Zeng, P. (2006).
\newblock Fourier methods for estimating the central subspace and the central mean subspace in regression.
\newblock {\em Journal of the American Statistical Association}, 101(476):1638--1651.

\end{thebibliography}

\newpage
\appendix
\section{Rescaling invariant form of the representation cost}\label{sec:simplify}

Part of the difficulty in interpreting the expression for the $R_L$ cost in \eqref{eq:nucnormex} is that it varies under different sets of parameters realizing the same function. In particular, consider the following rescaling of parameters:  for any vector $\vlambda \in \R^K$ with positive entries, by the 1-homogeneity of the ReLU activation we have  
\begin{equation}\label{eq:rescaling trick}
\va^\top [\mW\vx + \vb]_+ + c= (\mD_{\vlambda}^{-1}\va)^\top[\mD_{\vlambda}\mW\vx + \mD_{\vlambda}\vb]_+ + c.
\end{equation}
However, the value of the objective in \eqref{eq:nucnormex} may vary between the two parameter sets realizing the same function. To account for this scaling invariance, we define a new loss function $\Phi_L$ by optimizing over all such ``diagonal'' rescalings of units.
Using the AM-GM inequality and a change of variables, one can prove that $\Phi_L$ depends only on $\mW$ and $\va$ only through the $K \times d$ matrix $\mD_{\va}\mW$. 
This leads us to the following result.
\begin{lemma}\label{lem:simplified expression for RL}
For any $f \in \setofnns{\densitysupp}$, we have
\begin{equation}
\label{eq:RL expressed w PhiL}
    R_L(f) = \inf_{\theta \in \Theta_2} \Phi_L(\mD_{\va}\mW) \st f = h_\theta^{(2)}|_\densitysupp.
\end{equation}
where the function $\Phi_L$ given a matrix $\mM$ is defined as
\begin{equation}
    \Phi_L(\mM) = \inf_{\substack{\|\vlambda\|_2= 1 \\ \lambda_k > 0,\forall k}} \|\mD_{\vlambda}^{-1}\mM\|_{\Sc^{2/(L-1)}}^{2/L}.
    \label{eq:phiL}
\end{equation}
\end{lemma}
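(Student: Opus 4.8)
The plan is to begin from the Schatten reformulation \eqref{eq:nucnormex} in \cref{lem:schatten} and turn the sum of two norms into a single scale-invariant quantity using the unit-rescaling freedom \eqref{eq:rescaling trick}. Write $G(\theta) := \tfrac1L\|\va\|_2^2 + \tfrac{L-1}{L}\|\mW\|_{\Sc^q}^q$ with $q = 2/(L-1)$, and for $\bm\lambda > 0$ set $\theta_{\bm\lambda} := (\mD_{\bm\lambda}\mW,\, \mD_{\bm\lambda}^{-1}\va,\, \mD_{\bm\lambda}\vb,\, c)$. By \eqref{eq:rescaling trick}, $h^{(2)}_{\theta_{\bm\lambda}} = h^{(2)}_\theta$, and $\theta \mapsto \theta_{\bm\lambda}$ is a bijection of $\Theta_2$; hence \cref{lem:schatten} can be rewritten as $R_L(f) = \inf_{\theta:\, f = h^{(2)}_\theta}\ \inf_{\bm\lambda>0} G(\theta_{\bm\lambda})$. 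Deleting hidden units with $a_k = 0$ changes neither $f$, nor the value of $G$ (removing a row of $\mW$ cannot increase a Schatten-$q$ quasi-norm for $q \le 1$), nor the singular values of $\mD_{\va}\mW$; and the asserted identity is trivial when $\mD_{\va}\mW = \vzero$. So I may assume $a_k \ne 0$ for all $k$ and $\mW \ne \vzero$, which makes $\mD_{\va}$ invertible and $A(\bm\lambda) := \|\mD_{\bm\lambda}^{-1}\va\|_2^2$, $B(\bm\lambda) := \|\mD_{\bm\lambda}\mW\|_{\Sc^q}^q$ both strictly positive.

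Next I would evaluate the inner infimum for a fixed such $\theta$. Weighted AM--GM gives $G(\theta_{\bm\lambda}) = \tfrac1L A(\bm\lambda) + \tfrac{L-1}{L}B(\bm\lambda) \ge A(\bm\lambda)^{1/L} B(\bm\lambda)^{(L-1)/L}$, with equality iff $A(\bm\lambda) = B(\bm\lambda)$. Under a global rescaling $\bm\lambda \mapsto t\bm\lambda$ one has $A \mapsto t^{-2}A$ and $B \mapsto t^q B$, so using $q(L-1) = 2$ the product $A^{1/L}B^{(L-1)/L}$ is invariant, whereas $t$ may be chosen with $t^{q+2} = A/B$ so that $t^{-2}A = t^q B$ and AM--GM becomes an equality. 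Therefore $\inf_{\bm\lambda>0} G(\theta_{\bm\lambda}) = \inf_{\bm\lambda>0} A(\bm\lambda)^{1/L}B(\bm\lambda)^{(L-1)/L}$, and since $q(L-1)/L = 2/L$ this equals $\inf_{\bm\lambda>0}\big(\|\mD_{\bm\lambda}^{-1}\va\|_2\,\|\mD_{\bm\lambda}\mW\|_{\Sc^q}\big)^{2/L} = \big(\inf_{\bm\lambda>0}\|\mD_{\bm\lambda}^{-1}\va\|_2\,\|\mD_{\bm\lambda}\mW\|_{\Sc^q}\big)^{2/L}$, the last equality because $u \mapsto u^{2/L}$ is increasing.

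The final step is a change of variables absorbing $\mD_{\va}$. Setting $\mD_{\bm\lambda} = \mD_{\bm\mu}^{-1}\mD_{\va}$, i.e.\ $\lambda_k = a_k/\mu_k$, commutativity of diagonal matrices gives $\mD_{\bm\lambda}\mW = \mD_{\bm\mu}^{-1}\mD_{\va}\mW$ and $\mD_{\bm\lambda}^{-1}\va = \mD_{\bm\mu}\vone = \bm\mu$, so the objective becomes $\|\bm\mu\|_2\,\|\mD_{\bm\mu}^{-1}\mD_{\va}\mW\|_{\Sc^q}$. This quantity is invariant under flipping the sign of any single $\mu_k$ (the $\ell^2$ norm is even, and the sign flip only negates the $k$-th row of $\mD_{\bm\mu}^{-1}\mD_{\va}\mW$, leaving its singular values unchanged), so the infimum over the sign orthant cut out by $\bm\lambda > 0$ equals the infimum over $\bm\mu > 0$; and it is invariant under $\bm\mu \mapsto s\bm\mu$, $s>0$, so one may impose $\|\bm\mu\|_2 = 1$, whereupon the objective is just $\|\mD_{\bm\mu}^{-1}\mD_{\va}\mW\|_{\Sc^q}$. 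Raising to the power $2/L$ and comparing with \eqref{eq:phiL} (recall $q = 2/(L-1)$) identifies $\inf_{\bm\lambda>0}G(\theta_{\bm\lambda})$ with $\Phi_L(\mD_{\va}\mW)$; taking $\inf$ over all $\theta$ realizing $f$ then yields the claim, and the stated dependence of $\Phi_L$ on $(\mW,\va)$ only through $\mD_{\va}\mW$ is immediate from the formula.

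The step I expect to require the most care is establishing that the AM--GM bound is attained within the inner infimum --- recognizing that a single global scalar rescaling of $\bm\lambda$ suffices to enforce $A(\bm\lambda) = B(\bm\lambda)$ and checking that this rescaling leaves the value $A^{1/L}B^{(L-1)/L}$ unchanged --- together with the bookkeeping for the degenerate cases ($a_k = 0$, $\mW = \vzero$, and the fact that none of the infima need be attained). The change-of-variables, sign-flip, and scale-normalization manipulations are routine once the AM--GM reduction is in hand.
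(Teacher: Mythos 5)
Your proposal is correct and follows essentially the same route as the paper's proof: reduce via \cref{lem:schatten}, exploit the diagonal rescaling freedom, apply weighted AM--GM with a global scalar rescaling to achieve equality, then change variables to absorb $\mD_{\va}$ (handling signs via the invariance of singular values and the $\ell^2$ norm) and normalize $\|\bm\lambda\|_2=1$. Your treatment is, if anything, slightly more explicit than the paper's about the degenerate cases ($a_k=0$, $\mD_{\va}\mW=\vzero$) and the sign bookkeeping, which the paper dispatches with a ``without loss of generality'' and the substitution $\lambda_k' = |a_k|\lambda_k$.
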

\begin{proof}
Fix any parameterization $f = h_\theta^{(2)}|_\densitysupp$ with $\theta = (\mW,\va,\vb,c)$. 
Without loss of generality, assume $\va$ has all nonzero entries. By positive homogeneity of the ReLU, for any vector $\vlam \in \R^K$ with positive entries (which we denote by $\vlam > 0$) the rescaled parameters $\theta' = (\mD_\vlam^{-1}\mW,\mD_\vlam\va,\mD_\vlam^{-1}\vb,c)$ also satisfy $f = h_{\theta'}^{(2)}|_\densitysupp$. Therefore, by \Cref{lem:schatten} we have
\begin{align}
    R_L(f) & = \inf_{\theta \in \Theta_2} \tfrac{1}{L}\|\va\|_2^2 
+ \tfrac{L-1}{L}\|\mW\|^{2/(L-1)}_{\Sc^{2/(L-1)}} \st f = h_\theta^{(2)}|_\densitysupp\\
    & = \inf_{\theta \in \Theta_2} \inf_{\vlam > 0} \tfrac{1}{L}\|\mD_{\bm\lambda}\va\|_2^2 
    + \tfrac{L-1}{L}\|\mD_{\bm\lambda}^{-1}\mW\|^{2/(L-1)}_{\Sc^{2/(L-1)}} \st f = h_\theta^{(2)}|_\densitysupp.
\end{align}
Additionally, for any fixed $\bm\lambda > 0$, we may separately minimize over all scalar multiples $c\bm\lambda$ where $c>0$, to get 
\begin{align}
\inf_{\vlam > 0} \tfrac{1}{L}\|\mD_{\bm\lambda}\va\|_2^2 
+ & \tfrac{L-1}{L}\|\mD_{\bm\lambda}^{-1}\mW\|^{2/(L-1)}_{\Sc^{2/(L-1)}} \\
&  = \inf_{\bm\lambda > 0} 
\left( 
\inf_{c > 0}  
c^2 \tfrac{1}{L}\|\mD_{\bm\lambda}\va\|_2^2 
+ c^{-2/(L-1)}\tfrac{L-1}{L}\|\mD_{\bm\lambda}^{-1}\mW\|^{2/(L-1)}_{\Sc^{2/(L-1)}}
\right)\\
& = \inf_{\bm\lambda > 0} \left(\|\mD_{\bm\lambda}\va\|_2\|\mD_{\bm\lambda}^{-1}\mW\|_{\Sc^{2/(L-1)}}\right)^{2/L}
\end{align}
where the final equality follows by the weighted AM-GM inequality: for all $a,b>0$, it holds that $\tfrac{1}{L} a + \tfrac{L-1}{L} b \geq (ab^{L-1})^{1/L}$, which holds with equality when $a = b$. Here we have $a = (c \|\mD_{\bm\lambda}\va\|_2)^2$ and $b = \left(c^{-1}\|\mD_{\bm\lambda}^{-1}\mW\|_{\Sc^{2/(L-1)}}\right)^{2/(L-1)}$, and there exists a $c>0$ for which $a = b$, hence we obtain the lower bound.

Finally, performing the invertible change of variables $\vlam \mapsto \vlam'$ defined by $\lambda_{k}' = |a_k|\lambda_{k}$ for all $k=1,...,K$, we have 
\begin{align}
\inf_{\bm\lambda > 0} \left(\|\mD_{\bm\lambda}\va\|_2\|\mD_{\bm\lambda}^{-1}\mW\|_{\Sc^{2/(L-1)}}\right)^{2/L} & = \inf_{\bm\lambda' > 0} \left(\|\bm\lambda'\|_2\|\mD_{\bm\lambda'}^{-1}\mD_{\va}\mW\|_{\Sc^{2/(L-1)}}\right)^{2/L}\\
& = \inf_{\substack{\bm\lambda' > 0\\\|\bm\lambda'\|_2 = 1}} \|\mD_{\bm\lambda'}^{-1}\mD_{\va}\mW\|_{\Sc^{2/(L-1)}}^{2/L}
\end{align}
where we are able to constrain $\bm\lambda'$ to be unit norm since $\|\bm\lambda'\|_2\|\mD_{\bm\lambda'}^{-1}\mD_{\va}\mW\|_{\Sc^{2/(L-1)}}$ is invariant to scaling  $\bm\lambda'$ by positive constants.
\end{proof}

In the case of $L=2$, the infimum in \eqref{eq:phiL} can be computed explicitly as $\Phi_2(\mM) = \sum_{k=1}^K \|\vm_k\|_2$, where $\vm_k$ is the $k$th row of $\mM$. 
Notice that 
$
\Phi_2(\mD_{\va}\mW) = \sum_{k=1}^K |a_k|\|\vw_k\|_2,
$
which agrees with the expression in \eqref{eq:R2} after rescaling so that $\|\vw_k\|_2 = 1$ for all $k$.

When $L > 2$, we are unable to find a closed-form expression for $\Phi_L$. However, the characterization of $\Phi_L$ in \eqref{eq:phiL} still gives some insight into the kinds of functions that have small $R_L$ costs.  Intuitively, due to the presence of the Schatten-$q$ norm, functions with small $R_L$ cost have a low-rank inner-layer weight matrix $\mW$. Additionally, since the Schatten-$q$ norm for all $0 < q \leq 1$ dominates the Frobenius norm, functions with small $R_L$-cost will also have small $R_2$-cost.  These claims are formally strengthened in the following lemma.
\begin{lemma} 
\label{lem:relationship between PhiL and Phi2}
For all $L\geq 2$ and all matrices $\mM$, we have
    \begin{equation}
    \label{eq:bound on Phi_L via 1-2 norm}
        \Phi_2(\mM)^{2/L}
        \le \Phi_L(\mM)
        \le \rank(\mM)^{{(L-2)}/L}\Phi_2(\mM)^{2/L}.
    \end{equation}
    Additionally,
    \begin{equation}
        \label{eq:lb bound on Phi_L that reveals rank}
        \|\mM\|_{S^{2/L}}^{2/L} 
        \le \Phi_L(\mM)
    \end{equation}
\end{lemma}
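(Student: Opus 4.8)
The plan is to work entirely with the matrix $\mM := \mD_{\va}\mW$, writing $\vm_k^\top$ for its $k$-th row and $q := 2/(L-1)$, so that $\Phi_L(\mM) = \inf\{\|\mD_{\bm\lambda}^{-1}\mM\|_{S^q}^{2/L} : \|\bm\lambda\|_2 = 1,\ \lambda_k > 0\ \forall k\}$ while $\Phi_2(\mM) = \|\mM\|_{2,1} = \sum_k \|\vm_k\|_2$. I would combine four elementary facts about singular values: (i) comparison of Schatten quasi-norms, namely $\|\mA\|_{S^2} \le \|\mA\|_{S^q}$ when $q \le 2$ and, conversely, $\|\mA\|_{S^q} \le \rank(\mA)^{1/q - 1/2}\|\mA\|_{S^2}$ (Hölder applied to the $\rank(\mA)$ nonzero singular values); (ii) the Hölder inequality for Schatten (quasi-)norms, $\|\mA\mB\|_{S^r} \le \|\mA\|_{S^p}\|\mB\|_{S^q}$ whenever $1/r = 1/p + 1/q$; (iii) $\rank(\mD_{\bm\lambda}^{-1}\mM) = \rank(\mM)$, since $\mD_{\bm\lambda}$ is invertible for admissible $\bm\lambda$; and (iv) Cauchy--Schwarz.

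For the two lower bounds, fix any admissible $\bm\lambda$. The $k$-th row of $\mD_{\bm\lambda}^{-1}\mM$ is $\lambda_k^{-1}\vm_k$, so by (iv) and $\|\bm\lambda\|_2 = 1$,
\[
\sum_k \|\vm_k\|_2 \;=\; \sum_k \lambda_k\,\|\lambda_k^{-1}\vm_k\|_2 \;\le\; \|\bm\lambda\|_2\,\|\mD_{\bm\lambda}^{-1}\mM\|_F \;=\; \|\mD_{\bm\lambda}^{-1}\mM\|_F,
\]
and since $q \le 2$, fact (i) gives $\|\mD_{\bm\lambda}^{-1}\mM\|_F \le \|\mD_{\bm\lambda}^{-1}\mM\|_{S^q}$. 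Chaining these, raising to the power $2/L$, and taking the infimum over $\bm\lambda$ gives $\Phi_2(\mM)^{2/L} \le \Phi_L(\mM)$, the left inequality of \eqref{eq:bound on Phi_L via 1-2 norm}. For \eqref{eq:lb bound on Phi_L that reveals rank}, factor $\mM = \mD_{\bm\lambda}(\mD_{\bm\lambda}^{-1}\mM)$ and apply (ii) with $p = 2$, the given $q$, and $r = 2/L$, which is legitimate because $\tfrac12 + \tfrac{L-1}{2} = \tfrac{L}{2}$; since $\|\mD_{\bm\lambda}\|_{S^2} = \|\bm\lambda\|_2 = 1$, this yields $\|\mM\|_{S^{2/L}} \le \|\mD_{\bm\lambda}^{-1}\mM\|_{S^q}$, and raising to $2/L$ and taking the infimum over $\bm\lambda$ completes it.

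For the right inequality of \eqref{eq:bound on Phi_L via 1-2 norm} it suffices to evaluate the objective at one cleverly chosen $\bm\lambda$. Assuming first that no row of $\mM$ vanishes, set $\lambda_k := (\|\vm_k\|_2/\Phi_2(\mM))^{1/2}$; then $\|\bm\lambda\|_2 = 1$ and the $k$-th row of $\mD_{\bm\lambda}^{-1}\mM$ has squared norm $\|\vm_k\|_2^2/\lambda_k^2 = \Phi_2(\mM)\|\vm_k\|_2$, whose sum over $k$ equals $\Phi_2(\mM)^2$, so $\|\mD_{\bm\lambda}^{-1}\mM\|_F = \Phi_2(\mM)$. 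Now (i) together with (iii) and the identity $1/q - 1/2 = (L-2)/2$ gives $\|\mD_{\bm\lambda}^{-1}\mM\|_{S^q} \le \rank(\mM)^{(L-2)/2}\Phi_2(\mM)$, and raising to $2/L$ bounds $\Phi_L(\mM)$ by $\rank(\mM)^{(L-2)/L}\Phi_2(\mM)^{2/L}$. Vanishing rows are handled by a limiting argument: assign them weight $\delta \to 0^{+}$ and renormalize the remaining entries; such rows contribute nothing to $\mD_{\bm\lambda}^{-1}\mM$ and do not change its rank, so the same bound survives in the limit.

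The only ingredient that is not a one-line manipulation is the Schatten--Hölder inequality (ii) in the quasi-norm range $r = 2/L < 1$ and $q = 2/(L-1) \le 1$ that arises for $L \ge 3$. Rather than reprove it, I would cite a standard reference, recalling that it follows from the singular-value product inequalities $s_{i+j-1}(\mA\mB) \le s_i(\mA)\,s_j(\mB)$ combined with a scalar rearrangement argument that is insensitive to the size of the exponents. Everything else reduces to Cauchy--Schwarz and the monotonicity of $\ell^p$ quasi-norms of the singular-value sequence, so I expect this to be the main (minor) technical point to pin down.
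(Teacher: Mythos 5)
Your proposal is correct. The first inequality chain \eqref{eq:bound on Phi_L via 1-2 norm} follows essentially the paper's own route: both arguments rest on the comparison $\|\mX\|_F \le \|\mX\|_{S^q} \le \rank(\mX)^{(L-2)/2}\|\mX\|_F$ for $q = 2/(L-1)$, together with the fact that $\inf_{\vlam}\|\mD_{\vlam}^{-1}\mD_{\va}\mW\|_F = \Phi_2(\mD_{\va}\mW)$; the paper simply takes infima over $\vlam$ on all three terms of the sandwich (using that $\mD_{\vlam}$ is invertible so the rank is $\vlam$-independent), whereas you make the optimal $\vlam$ explicit via Cauchy--Schwarz and its equality case, and patch vanishing rows by a limit. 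That is a cosmetic difference.

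For \eqref{eq:lb bound on Phi_L that reveals rank} you take a genuinely different and arguably cleaner path. The paper invokes a singular-value product inequality of Wang (1997), $\sum_k \sigma_k^\alpha(\mA\mB) \ge \sum_k \sigma_k^\alpha(\mA)\,\sigma_{K-k+1}^\alpha(\mB)$, reduces to an explicit minimization over ordered unit vectors $\vmu$ of $\sum_k \sigma_k^{q}(\mD_{\va}\mW)\mu_k^{-q}$, and solves it by Lagrange multipliers to land exactly on $\|\mD_{\va}\mW\|_{S^{2/L}}^{2/(L-1)}$. You instead factor $\mD_{\va}\mW = \mD_{\vlam}\,(\mD_{\vlam}^{-1}\mD_{\va}\mW)$ and apply the Schatten--H\"older inequality with exponents $\tfrac{1}{r}=\tfrac{1}{p}+\tfrac{1}{q}$, $p=2$, $q=2/(L-1)$, $r=2/L$, using $\|\mD_{\vlam}\|_{S^2}=\|\vlam\|_2=1$. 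The exponent bookkeeping checks out, and the H\"older inequality does hold in this quasi-norm range (it follows from Horn's log-majorization $\prod_{i\le k}s_i(\mA\mB)\le\prod_{i\le k}s_i(\mA)s_i(\mB)$ plus scalar H\"older with conjugate exponents $p/r$ and $q/r$, both of which are $\ge 1$ here); citing a standard reference for it is appropriate, and it is of comparable depth to the Wang inequality the paper relies on. What your approach buys is the elimination of the explicit Lagrange-multiplier computation; what the paper's approach buys is that the extremizing $\vmu$ is exhibited, which shows the lower bound is attained in the inner optimization rather than merely valid. Either argument is complete.
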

Since both the upper bound from \eqref{eq:bound on Phi_L via 1-2 norm} and the lower bound from \eqref{eq:lb bound on Phi_L that reveals rank} tend toward the rank of $\mM$ as $L$ goes to infinity, so does $\Phi_L$.
The proof of \Cref{lem:relationship between PhiL and Phi2} is given in \Cref{app:repcost}.

\section{Proofs and Technical Details for Results in \Cref{sec:function_space}}
\label{app:proofs of function space results}
\subsection{Index Ranks of Neural Networks}
Observe that if $f(\vx) = a[\vw^\T\vx + b]_+$ then $\nabla f(\vx) = a \heaviside(\vw^\T\vx + b)\vw$ for almost all $\vx \in \densitysupp$ where $\heaviside$ is the unit step function. This implies that
$
\nabla f(\vx) \nabla f(\vx)^\T = a^2  \heaviside(\vw^\T\vx + b)\vw\vw^\T.
$
Likewise, if $f \in \setofnns{\densitysupp}$ and $f = h_\theta^{(2)}|_\densitysupp$ for some $\theta = (\mW,\va,\vb,c)$, 
then 
for almost all $\vx \in \densitysupp$,
\begin{align}
    \nabla f(\vx) \nabla f(\vx)^\T 
    = \sum_{k=1}^K \sum_{j=1}^K a_k a_j \heaviside(\vw_k^\T\vx + b_k) \heaviside(\vw_j^\T\vx + b_{j}) \vw_k \vw_j^\T
    = (\mD_{\va} \mW)^\top \mheaviside{\theta}(\vx) \mD_{\va} \mW
\end{align}
where $\mheaviside{\theta}(\vx)$ is the matrix of unit co-activations at $\vx$. That is, the entries of $\mheaviside{\theta}(\vx)$ are of the form $\heaviside(\vw_k^\T\vx + b_k) \heaviside(\vw_j^\T\vx + b_{j})$ and so will be one if and only if both unit $k$ and unit $j$ are active at $\vx$.
Taking expectations gives
\begin{equation}
\label{eq:grad cov of nn}
    \gradcov{f} = (\mD_{\va} \mW)^\top \mathbb{E}_X[\mheaviside{\theta}(X)] \mD_{\va} \mW.
\end{equation}
The expression above for $\gradcov{f}$ allows us to connect $\funcrank{f}$ to $\rank(\mD_{\va} \mW)$. 
We use the following technical lemma, proved in the \Cref{sec:proof of grad ae 0 means constant}.
\begin{lemma}
\label{lem:grad ae 0 means constant}
    Assume $\densitysupp$ is convex and has nonempty interior. 
    Let $f \in \setofnns{\densitysupp}$ and let  $\nabla f$ denote its weak gradient. Let $\vu \in \R^d$. If $\nabla f(\vx)^\T \vu = 0$ for almost all $\vx \in \densitysupp$, then $f(\vx + \vu) = f(\vx)$ for all $\vx \in \densitysupp$ such that $\vx + \vu \in \densitysupp$.
\end{lemma}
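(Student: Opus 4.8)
The plan is to reduce this multivariate statement to the one-dimensional fact that an absolutely continuous function with a.e.\ vanishing derivative is constant, applied along lines parallel to $\vu$, and then to handle the measure-zero set of ``bad'' lines (and the boundary of $\densitysupp$) using continuity of $f$. Throughout I would use that any $f \in \setofnns{\densitysupp}$ is the restriction of a globally Lipschitz, piecewise-linear function $h_\theta^{(2)}$ on $\R^d$, so $f$ is continuous on $\densitysupp$, differentiable Lebesgue-a.e.\ (Rademacher), and its weak gradient agrees with its classical gradient a.e.

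First I would prove the statement under the extra hypothesis that $\vx$ and $\vx+\vu$ both lie in the interior $\operatorname{int}\densitysupp$. Assume $\vu \neq 0$ (the case $\vu=0$ is trivial) and set $\hat\vu = \vu/\|\vu\|_2$. Decompose $\R^d = \hat\vu^\perp \oplus \R\hat\vu$, writing points as $\vz + t\hat\vu$ with $\vz \in \hat\vu^\perp$. Let $A \subseteq \densitysupp$ be the set of points at which $f$ is not differentiable or at which $\vu^\T \nabla f \neq 0$; then $|A|=0$. By Fubini's theorem, for a.e.\ $\vz \in \hat\vu^\perp$ the slice $\{t : \vz + t\hat\vu \in A\}$ has one-dimensional Lebesgue measure zero. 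For such a ``good'' $\vz$, the function $g(t) := f(\vz + t\hat\vu)$ is Lipschitz, hence absolutely continuous, on the interval $I_\vz := \{t : \vz + t\hat\vu \in \densitysupp\}$ (which is an interval by convexity of $\densitysupp$), and at each interior point $t$ of $I_\vz$ outside the null slice $f$ is differentiable at $\vz + t\hat\vu$, so $g'(t) = \hat\vu^\T \nabla f(\vz + t\hat\vu) = 0$; thus $g' = 0$ a.e.\ on $I_\vz$ and $g$ is constant there. Writing $\vx = \vz_0 + t_0 \hat\vu$ and $\vx + \vu = \vz_0 + (t_0 + \|\vu\|_2)\hat\vu$ with both points in the open set $\operatorname{int}\densitysupp$, pick $\epsilon>0$ so that the $\epsilon$-balls around these two points lie in $\densitysupp$; then for every $\vz \in \hat\vu^\perp$ with $\|\vz - \vz_0\|_2 < \epsilon$ the points $\vz + t_0\hat\vu$ and $\vz + (t_0 + \|\vu\|_2)\hat\vu$ both lie in $\densitysupp$. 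Since the good $\vz$ form a full-measure set, choose good $\vz_n \to \vz_0$; constancy of $g$ along each line gives $f(\vz_n + t_0\hat\vu) = f(\vz_n + (t_0 + \|\vu\|_2)\hat\vu)$, and letting $n \to \infty$ and using continuity of $f$ gives $f(\vx) = f(\vx + \vu)$.

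Finally I would remove the interior assumption. Fix any $\vp \in \operatorname{int}\densitysupp$, and for $\delta \in (0,1)$ set $\vx_\delta := (1-\delta)\vx + \delta\vp$ and $\vu_\delta := (1-\delta)\vu$, so $\vx_\delta + \vu_\delta = (1-\delta)(\vx + \vu) + \delta\vp$. A convex combination that assigns positive weight to an interior point of a convex set lies in the interior, so $\vx_\delta, \vx_\delta + \vu_\delta \in \operatorname{int}\densitysupp$; moreover $\vu_\delta^\T \nabla f = (1-\delta)\,\vu^\T \nabla f = 0$ a.e., so the interior case applies and $f(\vx_\delta + \vu_\delta) = f(\vx_\delta)$. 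Letting $\delta \to 0^+$ and using continuity of $f$ yields $f(\vx + \vu) = f(\vx)$.

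The main obstacle is the bookkeeping around the null set of exceptional lines and the domain boundary: one cannot simply integrate $\vu^\T \nabla f$ along the single prescribed segment $[\vx, \vx+\vu]$, because that segment may lie entirely within the (measure-zero) set where differentiability fails, so the Fubini-plus-continuity two-step is essential. Convexity of $\densitysupp$ enters twice—first to guarantee the relevant segments stay inside the domain, and second so that interior points are dense and reachable via convex combinations—while the remaining arguments are routine.
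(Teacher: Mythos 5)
Your proof is correct, but it takes a genuinely different route from the paper's. The paper exploits the piecewise-linear structure of $f \in \setofnns{\densitysupp}$ head-on: it writes $f = \sum_j (\bm\eta_j^\T\vx + c_j)\chi_{\Omega_j}(\vx)$ over finitely many open regions of positive measure, deduces $\vu^\T\bm\eta_j = 0$ for every $j$ directly from the a.e.\ hypothesis (no Fubini needed, since each region has positive $d$-dimensional measure), and then observes that the continuous function $t \mapsto f(\vx + t\vu)$ on $[0,1]$ has finite range and hence is constant. Your argument instead treats $f$ merely as a Lipschitz function: Rademacher plus Fubini to get a.e.\ good lines, absolute continuity along those lines to get constancy there, and two continuity-based limiting arguments to reach the prescribed segment and the boundary of $\densitysupp$. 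What each buys: the paper's proof is shorter and avoids the delicate bookkeeping around null sets of lines entirely (the ``finite range on a connected domain'' trick substitutes for integrating the derivative), but it is tied to piecewise linearity; your proof needs more measure-theoretic machinery but proves the strictly stronger statement that the conclusion holds for any Lipschitz (indeed any continuous $W^{1,1}_{\mathrm{loc}}$) function on a convex domain, which is the ``right'' level of generality for this kind of lemma. Both proofs use convexity in the same two places (segments stay in $\densitysupp$; interior points are reachable by convex combination), and I see no gaps in yours.
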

This lemma allows us to take the infimum in \eqref{eq:RL expressed w PhiL} over parameterizations of $f$ with the same rank as $f$, as stated in the next lemma.
\begin{lemma}
\label{lem:rank of f is rank of DaW}
    Assume $\densitysupp \subseteq \R^d$ is either a bounded convex set with nonempty interior or else $\densitysupp = \R^d$. Let $f \in \setofnns{\densitysupp}$. Then 
    \begin{equation}
        R_L(f) = \inf_{\theta \in \Theta_2} \Phi_L(\mD_{\va}\mW) \st f = h_\theta^{(2)}|_\densitysupp \mathand \funcrank{f} = \rank(\mD_{\va}\mW).
    \end{equation}
\end{lemma}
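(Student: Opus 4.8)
\textit{Proof plan.} By \cref{lem:simplified expression for RL} we have $R_L(f)=\inf_{\theta\in\Theta_2}\{\Phi_L(\mD_{\va}\mW):f=h_\theta^{(2)}|_\densitysupp\}$, and the asserted identity only \emph{restricts} the feasible set, so ``$\ge$'' is automatic. For ``$\le$'' it suffices to show that every $\theta\in\Theta_2$ realizing $f$ admits a replacement $\theta'\in\Theta_2$ with $f=h_{\theta'}^{(2)}|_\densitysupp$, $\Phi_L(\mD_{\va'}\mW')\le\Phi_L(\mD_{\va}\mW)$, and $\rank(\mD_{\va'}\mW')\le\funcrank f$. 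Indeed, \eqref{eq:grad cov of nn} gives $\gradcov f=(\mD_{\va'}\mW')^\top\mheaviside{\theta'}\mD_{\va'}\mW'$, whose range lies in the row space of $\mD_{\va'}\mW'$, so $\funcrank f=\rank(\gradcov f)\le\rank(\mD_{\va'}\mW')$ for \emph{any} parameterization; hence $\rank(\mD_{\va'}\mW')=\funcrank f$, and taking the infimum over $\theta$ yields the claim. The whole proof thus reduces to constructing such a $\theta'$.

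To build $\theta'$ I would reduce the two-layer representation of $f$ to a canonical form by finitely many reparameterizations that fix $f|_\densitysupp$: (i) discard every unit with $[\vw_k^\top\vx+b_k]_+\equiv 0$ on $\densitysupp$; (ii) by positive homogeneity of the ReLU, merge all like-oriented units lying on a common hyperplane into one, and, via $[-y]_+=[y]_+-y$, convert each oppositely-oriented unit on that hyperplane into a like-oriented unit plus an affine function; (iii) since $\densitysupp$ is convex, any surviving unit whose hyperplane misses the interior of $\densitysupp$ is active on all of $\densitysupp$, hence affine there, so absorb it into the affine part; (iv) collect all affine pieces into one affine function $\vv_0^\top\vx+c_0$ and realize it by a single ReLU unit $[\vv_0^\top\vx+B]_+$ with $B$ large enough that it is active on $\densitysupp$ (or by $[\vv_0^\top\vx]_+-[-\vv_0^\top\vx]_+$ when $\densitysupp=\R^d$), folding the residual constant into $c$. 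After (i)--(iv), $f(\vx)=\sum_{i=1}^p a_i'[\vw_i'^\top\vx+b_i']_++\vv_0^\top\vx+c_0$ on $\densitysupp$ with all $a_i'\ne 0$ and with distinct hyperplanes $\{\vw_i'^\top\vx+b_i'=0\}$ each meeting the interior of $\densitysupp$, so \cref{lem:weights of canonical form} applies and gives $\vw_1',\dots,\vw_p',\vv_0\in\range(\gradcov f)$. Every row of $\mD_{\va'}\mW'$ is a scalar multiple of one of these vectors, whence $\rank(\mD_{\va'}\mW')\le\dim\range(\gradcov f)=\funcrank f$.

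The remaining point, which I expect to be the main obstacle, is that the canonicalization does not increase $\Phi_L$. The right tool is the rescaled characterization $\Phi_L(\mD_{\va}\mW)^{L/2}=\inf_{\vlam>0}\|\vlam\|_2\,\|\mD_{\vlam}^{-1}\mD_{\va}\mW\|_{\Sc^{2/(L-1)}}$ obtained inside the proof of \cref{lem:simplified expression for RL}: given a near-optimal $\vlam$ for the old matrix one exhibits a rescaling $\vlam'$ for the new matrix with $\|\vlam'\|_2\le\|\vlam\|_2$ and Schatten quasi-norm no larger. Deleting a row only decreases singular values (left multiplication by a row selector has operator norm one) and one just drops the corresponding coordinate of $\vlam$; negating a row is exactly $\Phi_L$-preserving (diagonals commute, sign diagonals are orthogonal); and replacing a block $J$ of rows of $\mD_{\va}\mW$ by their sum is handled by applying to the $J$-block of $\mD_{\vlam}^{-1}\mD_{\va}\mW$ an orthogonal matrix whose first row is proportional to $(\lambda_k)_{k\in J}$ — this preserves singular values, makes the first $J$-row equal to $\|(\lambda_k)_{k\in J}\|_2^{-1}\sum_{k\in J}(\mD_\va\mW)_{k,:}$, lets the other $|J|-1$ rows be deleted, and taking the new coordinate of $\vlam'$ to be $\|(\lambda_k)_{k\in J}\|_2$ keeps $\|\vlam'\|_2=\|\vlam\|_2$. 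The delicate part is that (ii) is not literally a sequence of these elementary moves (the ReLU row and the affine contribution produced on a hyperplane are not separately sums of the original rows there), so one must instead bound the \emph{net} effect of (i)--(iv) on $\Phi_L$ directly, grouping the rows of $\mD_{\vlam}^{-1}\mD_{\va}\mW$ by hyperplane direction, rotating and deleting within each group, and carefully tracking signs and how the affine pieces aggregate into $\vv_0$; carrying this estimate out in full is the technical heart, while the structural reduction via \cref{lem:weights of canonical form} above is routine.
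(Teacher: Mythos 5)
Your reduction of the lemma to constructing a single replacement parameterization $\theta'$ with $\rank(\mD_{\va'}\mW')\le\funcrank{f}$ and $\Phi_L(\mD_{\va'}\mW')\le\Phi_L(\mD_{\va}\mW)$ is exactly right, as is the use of \cref{lem:weights of canonical form} to place the surviving weight vectors in $\range(\gradcov{f})$. But the step you yourself flag as ``the technical heart'' --- that your canonicalization (i)--(iv) does not increase $\Phi_L$ --- is a genuine gap, not a routine verification, and your elementary-moves toolkit does not close it as stated. The obstruction is precisely the one you identify: converting an oppositely-oriented unit via $[-t]_+=[t]_+-t$ turns \emph{one} row $a_j\vw_j$ into \emph{two} rows (a negated ReLU row plus an affine contribution), and row duplication is not among your singular-value-nonincreasing moves; already for a single such unit the map $c\mapsto(-c,c)$ on the coefficient of $\vw_k$ inflates the relevant Schatten quasi-norm by a factor of $\sqrt{2}$ before any downstream merging, so it is not clear the net estimate goes through without loss. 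Your rotation trick for summing a block of \emph{like-oriented} rows is correct, but it does not cover this case, and the subsequent aggregation of all affine leftovers into a single $\vv_0$ compounds the accounting problem.

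The paper avoids this difficulty entirely by never merging units. The identities $[-t]_+=[t]_+-t$ and the equivalence classes of hyperplanes are used only to put $f$ into the form required by \cref{lem:weights of canonical form} and thereby deduce which vectors lie in $\range(\gradcov{f})$; the actual replacement $\theta'$ keeps each originally relevant unit with its own weight vector, right-multiplies the inner weights by the orthogonal projection $\mP$ onto $\range(\gradcov{f})$ (which leaves $f$ unchanged on $\densitysupp$ by the range-membership facts, with the always-active and cancelling units shifted to have large bias $B$), and deletes the never-active rows. Consequently $\mD_{\va'}\mW'$ is literally a row-submatrix of $\mD_{\va}\mW\mP$, so $\rank(\mD_{\va'}\mW')\le\rank(\mP)=\funcrank{f}$ is immediate, and $\|\mD_{\bm\lambda}^{-1}\mD_{\va'}\mW'\|_{\Sc^{2/(L-1)}}\le\|\mD_{\bm\lambda}^{-1}\mD_{\va}\mW\|_{\Sc^{2/(L-1)}}$ follows in one line from the singular-value product inequality (Lemma 2 of the cited reference), giving $\Phi_L(\mD_{\va'}\mW')\le\Phi_L(\mD_{\va}\mW)$ without any of the sign- and aggregation-tracking your plan requires. (The case $\densitysupp=\R^d$ is handled even more simply by taking $\theta'=(\mW\mP,\va,\vb,c)$ outright.) To repair your proof, either adopt this projection-plus-row-deletion construction or supply the missing direct estimate on the net effect of your steps (i)--(iv); the latter is where I would expect the argument to break.
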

\begin{proof}
    By \eqref{eq:grad cov of nn}, any parameterization $\theta= (\mW,\va,\vb,c) \in \Theta_2$ of $f$ satisfies $\funcrank{f} \le \rank(\mD_{\va}\mW)$. 
    It suffices to show that for all $\theta = (\mW,\va,\vb,c) \in \Theta_2$ such that $f = h_\theta^{(2)}|_\densitysupp$, there is some $\theta' = (\mW',\va',\vb',c') \in \Theta_2$ such that $f = h_{\theta'}^{(2)}|_\densitysupp$, $\funcrank{f} \ge \rank(\mD_{\va'}\mW')$, and $\Phi_L(\mD_{\va'}\mW') \le \Phi_L(\mD_{\va}\mW)$.
    
    Fix a parameterization $\theta = (\mW,\va,\vb,c)$ of $f$ so that $f = h_\theta^{(2)}|_\densitysupp$.
    Let $\mP$ denote the orthogonal projector onto the range of $\gradcov{f}$. If $\densitysupp = \R^d$, then choosing $\theta' = (\mW\mP,\va,\vb,c)$ suffices; for any $\vx \in \R^d$, we have 
    \begin{equation}
        h_{\theta'}^{(2)}(\vx)
        = h_{\theta}^{(2)}(\mP\vx)
        = f(\mP\vx)
        = f(\vx)
    \end{equation}
    because \Cref{lem:grad ae 0 means constant} implies that $f$ is constant along the nullspace of $\gradcov{f}$.
    Additionally, notice that $\funcrank{f} = \rank(\mP) \ge \rank(\mD_{\va}\mW\mP)$.
    Finally, because multiplying by a projection matrix can only decrease singular values, we get
    $
        \Phi_L(\mD_{\va}\mW)
        \ge \Phi_L(\mD_{\va}\mW\mP).
    $
    If $\densitysupp$ is a bounded convex set, the choice of $\theta'$ becomes more delicate and is reserved for \Cref{sec:bounded cvx set technical details}.
\end{proof}
\subsection{Proof of \Cref{thm:relationship between RL R2 index rank and mixed var}}
\label{sec:proof of central lemma}
\begin{proof}
    Let $f\in \setofnns{\densitysupp}$ and $L \ge 2$. From the characterization of $R_L$ in terms of $\Phi_L$ from \Cref{lem:simplified expression for RL} and the bounds on $\Phi_L$ from \Cref{lem:relationship between PhiL and Phi2}, we get
    \begin{equation}
    \label{eq:initial R2 RL bound}
        R_2(f)^{2/L} \le R_L(f) \le
        \inf_{\theta : f = h_\theta^{(2)}|_\densitysupp} \rank(\mD_{\va}\mW)^{(L-2)/L} \Phi_2(\mD_{\va}\mW)^{2/L}.
    \end{equation}
    By \Cref{lem:rank of f is rank of DaW}, the characterization of $R_L$ in terms of $\Phi_L$ can be considered over just those parameterizations of $f$ where $\rank(\mD_{\va}\mW)$ matches $\funcrank{f}$. This allows us to upper bound the right-hand side in \eqref{eq:initial R2 RL bound} as follows:
    \begin{align}
        \funcrank{f}&^{(L-2)/L} R_2(f)^{2/L} \\
        &= \funcrank{f}^{(L-2)/L} \inf_{\substack{\theta : f = h_\theta^{(2)}|_\densitysupp\\\funcrank{f} = \rank(\mD_{\va}\mW)}} \Phi_2(\mD_{\va}\mW)^{2/L} \\
        &= \inf_{\substack{\theta : f = h_\theta^{(2)}|_\densitysupp\\\funcrank{f} = \rank(\mD_{\va}\mW)}} \rank(\mD_{\va}\mW)^{(L-2)/L} \Phi_2(\mD_{\va}\mW)^{2/L} \\
        &\ge \inf_{\theta : f = h_\theta^{(2)}|_\densitysupp} \rank(\mD_{\va}\mW)^{(L-2)/L} \Phi_2(\mD_{\va}\mW)^{2/L}.
    \end{align}
    Therefore 
    \begin{equation}
         R_L(f) \le \funcrank{f}^{(L-2)/L} R_2(f)^{2/L}.
    \end{equation}

Now we prove 
\begin{equation}
    \mixedvar{f}{\tfrac{2}{L}}^{2/L} \leq R_L(f).
\end{equation}
Assume $f = h_\theta^{(2)}|_\densitysupp$ for some $\theta = (\mW,\va,\vb,c)$. 
Let $\mathbb{E}_X[\mheaviside{\theta}(X)]^{1/2}$ be a matrix square root of $\mathbb{E}_X[\mheaviside{\theta}(X)]$. 
By \eqref{eq:grad cov of nn}, a nonsymmetric square root of $\gradcov{f}$ is given by $\gradcov{f}^{1/2} = \mathbb{E}_X[\mheaviside{\theta}(X)]^{1/2}\mD_{\va} \mW$, and so we have $\mixedvar{f}{q} = \|\mathbb{E}_X[\mheaviside{\theta}(X)]^{1/2}\mD_{\va} \mW\|_{S^q}$.
Fix any vector $\vlambda > 0$ such that $\|\vlambda\|_2 = 1$. Then we have 
\begin{align}
\mixedvar{f}{\frac{2}{L}} 
& = \|\mathbb{E}_X[\mheaviside{\theta}(X)]^{1/2}\mD_{\va} \mW\|_{S^\frac{2}{L}}\\
& = \|\mathbb{E}_X[\mheaviside{\theta}(X)]^{1/2}\mD_{\vlambda}\mD_{\vlambda}^{-1}\mD_{\va} \mW\|_{S^\frac{2}{L}}\\
& \leq \|\mathbb{E}_X[\mheaviside{\theta}(X)]^{1/2}\mD_{\vlambda}\|_{F}\|\mD_{\vlambda}^{-1}\mD_{\va} \mW\|_{S^\frac{2}{L-1}}, \label{eq:mv bound by prod of schatten norms}
\end{align}
where in the final step we used 
the fact that
for any matrices $\mA$ and $\mB$ with compatible dimensions, any $0 < p \le 1$, and any $p_1, p_2 > 0$ such that $1/p = 1/p_1 + 1/p_2$, we have $\|\mA \mB\|_{S^p} \le \|\mA\|_{S^{p_1}} \|\mB\|_{S^{p_2}}$; this is a direct consequence of \cite[Theorem 1]{shang2020unified}, here applied with $p = \frac{2}{L}, p_1 = 2, p_2 = \frac{2}{L-1}$.
Next, observe that
\begin{align}
\|\mathbb{E}_X[\mheaviside{\theta}(X)]^{1/2}\mD_{\vlambda}\|_{F}^2
&= \Tr(\mD_{\vlambda}\mathbb{E}_X[\mheaviside{\theta}(X)]\mD_{\vlambda}) \\
&= \sum_{k=1}^K \lambda_k^2 (\mathbb{E}_X[\mheaviside{\theta}(X)])_{kk} \\
&\le \sum_{k=1}^K \lambda_k^2
= 1, \label{eq:bound for mv schatten norm}
\end{align}
because the entries in $\mheaviside{\theta}(X)$ are at most one and $\vlambda$ has unit norm.
Combining \cref{eq:bound for mv schatten norm} with \cref{eq:mv bound by prod of schatten norms}, we see that
\begin{equation}
\mixedvar{f}{\frac{2}{L}} \leq \|\mD_{\vlambda}^{-1} \mD_{\va} \mW\|_{S^{\frac{2}{L-1}}}.
\end{equation}
Since this inequality is independent of the choice of $\vlambda$, we have that 
\begin{equation}
\mixedvar{f}{\frac{2}{L}} \leq \inf_{\substack{\|\vlambda\|_2 = 1 \\ \vlambda > 0}}\|\mD_{\vlambda}^{-1} \mD_{\va} \mW\|_{S^{\frac{2}{L-1}}} = \Phi_L(\mD_{\va}\mW)^{L/2}.
\end{equation}
Finally, since the above inequality holds independent of the choice of parameters $\theta$ realizing $f$, we have
\begin{equation}
\mixedvar{f}{\frac{2}{L}} \leq \inf_{\theta : f = h_\theta^{(2)}|_\densitysupp}\Phi_L(\mD_{\va} \mW)^{L/2} =  R_L(f)^{L/2},
\end{equation}
and taking $(2/L)$-powers of both sides of this inequality gives the claim.
\end{proof}

\subsection{Proof of Corollaries to \Cref{thm:relationship between RL R2 index rank and mixed var}}
\begin{proof}[Proof of \Cref{cor:limiting cost theorem}]
    For ease of notation, denote $\funcrank{f}$ by $r$. By definition $\mixedvarsv{f}{r} > 0$.
    For any $L \ge 2$,
    \begin{equation*}
        \mixedvar{f}{\frac{2}{L}}^{2/L}
        = \sum_{k=1}^d \mixedvarsv{f}{k}^{\frac{2}{L}}
        \geq r \mixedvarsv{f}{r}^{\frac{2}{L}}.
    \end{equation*}
    By \Cref{thm:relationship between RL R2 index rank and mixed var}, it follows that
    \begin{equation}
    \label{eq:ub and lb on RL cost in terms of rank sv and R2 cost}
        r \mixedvarsv{f}{r}^{\frac{2}{L}}
        \leq R_L(f) 
        \leq r^{\frac{L-2}{L}} R_2(f)^{2/L}.
    \end{equation}
    The upper and lower bounds from \Cref{eq:ub and lb on RL cost in terms of rank sv and R2 cost} both tend to $r$ as $L \rightarrow \infty$, so $R_L(f) \rightarrow \funcrank{f}$.
\end{proof}

\begin{proof}[Proof of \Cref{cor:low high theorem}]
    Let $r_l$ and $r_h$ denote the index ranks of $f_l$ and $f_h$, respectively. Choose
    \begin{equation}
    L_0 :=  1 + 2 \frac{\log R_2(f_l) - \log r_l - \log \sigma_{r_h}(f_h)}{\log r_h - \log r_l}.
    \end{equation}
    Then $L > L_0$ implies
    \begin{equation}
    r_l^{\frac{L-2}{2}} R_2(f_l) < r_h^{\frac{L}{2}}\sigma_{r_h}(f_h) \le 
    \mixedvar{f_h}{\tfrac{2}{L}}.
    \end{equation}
    By \Cref{thm:relationship between RL R2 index rank and mixed var}, it follows that $R_L(f_l) < R_L(f_h)$.
\end{proof}

\subsection{Existence of Index Rank-$1$ Interpolants}
\label{sec:rank-r interpolant always exists}
\begin{lemma}
    Given training pairs $\{(\vx_i,y_i)\}_{i=1}^n$ with $\vx_i \in \densitysupp$ and $y_i \in \R$, assume that $\vx_i \ne \vx_j$ whenever $i \ne j$. Then there exists a function $f \in \setofnns{\densitysupp}$ such that $f(\vx_i) = y_i$ for all $i \in [n]$ and $\funcrank{f} = 1$.
\end{lemma}
\begin{proof}
    Let $\gW$ denote the set of all $\vw\in \R^d$ such that $\vw^\T \vx_i = \vw^\T \vx_j$ for some $i \ne j$.
    Let $\vz_1,\ldots,\vz_N$ be an enumeration of all difference vectors $\vx_i-\vx_j$, $i \ne j$. We can write $\gW$ as the set of all $\vw \in \R^d$ such that $\vw^\T \vz_k = 0$ for some $k \in 1,\ldots,N$. Thus, $\gW$ is the union of $N$ different hyperplanes $\gW_k$, where $\gW_k = \{\vw : \vw^\T \vz_k = 0\}$ is the hyperplane normal to $\vz_k$. Each $\gW_k$ is a $d-1$ dimensional hyperplane in $\R^d$ and therefore has Lebesgue measure zero. Hence, their finite union (i.e., $\gW$) must have measure zero as well. We conclude that there is some $\vw_* \in \R^d \setminus \gW$ such that  $\vw_*^\T \vx_i \ne \vw_*^\T \vx_j$ whenever $i \ne j$.

    Consider a univariate function $g:\R \rightarrow \R$ in $\setofnns{\R}$ that interpolates the projected data pairs $\{(\vw_*^\T \vx_i,y_i)\}_{i=1}^n$. For example, we can choose $g(t)$ to be the piecewise linear spline interpolant with knots at $t_i = \vw_*^\T\vx_i$ that is constant for $t < \min_i t_i$ and $t > \max_i t_i$. This function $g$ can be written as a sum of finitely many ReLU units and so belongs to $\setofnns{\R}$. 
    Define $f \in \setofnns{\densitysupp}$ by $f(\vx) := g(\vw_*^\T \vx)$. Then $f$ interpolates the original training pairs $\{(\vx_i,y_i)\}_{i=1}^n$. Further, the weak gradient of $f$ is $\nabla f(\vx) = g'(\vw_*^\T \vx)\vw_*$ where $g'$ is the weak derivative of $g$. This means that the expected gradient outer product of $f$ is equal to the rank one matrix $\mathbb{E}_X[g'(\vw_*^\T X)^2] \vw_* \vw_*^\T$. Therefore $\funcrank{f} = 1$.
\end{proof}

\subsection{Proof of \Cref{thm:effective rank bound}}
We begin with a lemma about the singular value decay of $\hat f_L$ which is straightforward to prove using algebraic manipulations of \Cref{thm:relationship between RL R2 index rank and mixed var}; see \Cref{sec:proof of singular value decay}.
\begin{lemma}
\label{lem:singular value decay}
    Assume that $\hat f_L$ is an $R_L$-minimal interpolant. Then
    for all $t \in [d]$, 
    \begin{equation}
        \mixedvarsv{\hat{f}_L}{t} \le 
        \min_{s \in [d]} \frac{\interpcost_s(\mathcal{D})}{s} \left(\frac{s}{t}\right)^{\frac{L}{2}}.
    \end{equation}
\end{lemma}
Using this lemma, we now prove \Cref{thm:effective rank bound}.
\begin{proof}
    Assume to the contrary that
    \begin{equation}
        \epsfuncrank{\hat f_L} > 
        \min_{s \in [d]}
        s 
        \left(\frac{\interpcost_s(\mathcal D)}{\varepsilon s}\right)^{\frac{2}{L}}.
    \end{equation}
    Then there is some integer $t$ with
    \begin{equation}
        t > \min_{s \in [d]}
        s 
        \left(\frac{\interpcost_s(\mathcal D)}{\varepsilon s}\right)^{\frac{2}{L}}
        \label{eq:lower bound on t}
    \end{equation}
    such that $\mixedvarsv{\hat{f}_L}{t} > \varepsilon$. 
    Rearranging \Cref{eq:lower bound on t} and applying \Cref{lem:singular value decay}, we conclude that
    \begin{equation}
        \varepsilon 
        >
        \min_{s \in [d]} \frac{\interpcost_s(\mathcal{D})}{s} \left(\frac{s}{t}\right)^{\frac{L}{2}}
        \ge \mixedvarsv{\hat{f}_L}{t}.
    \end{equation}
    This is a contradiction, so 
    \begin{equation}
        \epsfuncrank{\hat f_L} \le 
        \min_{s \in [d]}
        s 
        \left(\frac{\interpcost_s(\mathcal D)}{\varepsilon s}\right)^{\frac{2}{L}}.
    \end{equation}
    Finally, the floor function can be put inside the minimum because $\epsfuncrank{\hat f_L}$ is an integer. 
\end{proof}

Finally, to prove the lower bound on the effective rank given in \Cref{thm:effective rank bound}, we provide the following lemma, which shows that under mild conditions the sum of squared singular values of a minimum $R_L$-cost interpolant for any $L\geq 2$ is uniformly bounded below by a constant depending only on the data. In particular, this result implies that the top singular value of a sequence of minimum $R_L$-cost interpolants cannot vanish as $L\rightarrow \infty$, and so the $\varepsilon$-effective index rank is always at least one for sufficiently small $\varepsilon$. The proof can be found in \Cref{sec:lower bound on singular values proof}.

\begin{lemma}\label{lem:singval_lower_bnd}
Assume that $\hat f_L$ is an $R_L$-minimal interpolant.
Suppose that $\Omega \subseteq \mathcal X$ is any open bounded set with $C^1$ boundary such that $\rho$ is uniformly bounded away from zero on $\Omega$.
Then 
\begin{equation}
\sum_{k=1}^d \sigma_k(\hat f_L)^2 
\geq C 
\frac{(\min_{c\in\mathbb{R}} \max_{i: \vx_i \in \Omega} |y_i-c|)^{d+2}}
{\interpcost_1(\gD)^d}
\end{equation}
where $C>0$ is a constant depending only on $\Omega$, $\rho$ and $d$. In particular, if $\Omega$ contains two points $\vx_i,\vx_j$ whose corresponding labels $y_i,y_j$ are not equal, the lower bound is non-zero.
\end{lemma}

\section{Generalizing to Vector-Valued Functions}
\label{sec:vector valued functions}

While we focus on functions $f: \densitysupp \rightarrow \R$, our results can be naturally generalized to vector-valued functions $f: \densitysupp \rightarrow \R^D$ with $D > 1$. 
In this setting, the $L$-layer representation cost $R_L(f)$ is the minimal cost 
$
    C_L(\theta) = \frac{1}{L}\left(\|\mA\|_F^2 + \sum_{\ell=1}^{L-1}\|\mW_{\ell}\|_F^2\right)
$
required to parameterize $f$ over $\densitysupp$ as 
$
    f(\vx) = \mA^\T[\mW_{L-1}\cdots\mW_2\mW_1 \vx + \vb]_+ + \vc
$
where now $\mA$ is a $D \times K$ matrix and $\vc$ is a vector in $\R^D$. 

Given $f: \densitysupp \rightarrow \R^D$, consider a generalization of the EGOP matrix where $d \times 1$ gradient vectors are replaced by $D \times d$ Jacobian matrices: 
\begin{equation}
\gradcov{f} := \mathbb{E}_X[Jf(X)^\T Jf(X)] = \int_\densitysupp Jf(\vx)^\T Jf(\vx) \rho(\vx)\, d\vx
\label{eq:EJGM}.
\end{equation}
We refer to this as the expected Jacobian Gram matrix (EJGM).
We use the EJGM instead of the EGOP to define the index rank, principal subspace, singular values, and mixed variation of vector-valued functions $f$. 

For example, consider $f = [f_1, f_2]^\T$ where both component functions $f_1, f_2 : \densitysupp \rightarrow \R$ have index-rank 1
with distinct principal subspaces $\vspan(\vv_1)$ and $\vspan(\vv_2)$, respectively. It is straightforward to verify that $\gradcov{f} = \gradcov{f_1} + \gradcov{f_2}$. Using this fact, we can see that the principal subspace of $f$ (i.e., range of $\gradcov{f}$) is $\vspan(\vv_1,\vv_2)$ and the index-rank is 2; note that the active subspace of $f$ is the \emph{sum} of the principal subspaces of $f_1$ and $f_2$ instead of their \emph{union}. 

Using these modified definitions, all the results in \Cref{sec:function_space} hold with only minor changes in their proofs. 
We conclude that minimizing the $R_L$ cost in this setting promotes learning functions $f$ where each component $f_j$, for $j = 1, \ldots, D$, is nearly constant orthogonal to a universal low-dimensional subspace (universal in the sense that the subspace does not depend on $j$) and is smooth along that subspace.
\section{Extensions of \Cref{thm:effective rank bound}}
\label{sec:extension of rank bounds}
In this section, we extend \Cref{thm:effective rank bound} to interpolants that nearly minimize the $R_L$ cost and to functions that nearly minimize the $R_L$-regularized empirical risk. The proofs of these extensions are only slight modifications of the proof of \Cref{thm:effective rank bound}. 
\begin{corollary}[Effective index ranks of near-minimal interpolants.]\label{cor:effective rank bound of near minimal interpolants}
    Assume that $\hat f \in \setofnns{\densitysupp}$ is nearly an $R_L$-minimal interpolant. That is, $\hat f(\vx_i) = y_i$ for all $i \in [n]$, and 
    for some small constant $\alpha \ge 0$, 
    \begin{equation}
        \label{eq:near minimal RL interpolant}
        R_L(\hat f) \le (1+\alpha) \left(\inf_{f \in \setofnns{\densitysupp}}  R_L(f) \st  f(\vx_i) = y_i \;\forall i \in [n]\right).
    \end{equation}
    Then given $\varepsilon > 0$, we have the following bound on the $\varepsilon$-effective index rank of $\hat f$:
    \begin{equation}
        \epsfuncrank{\hat f} \le 
        \min_{1 \le s \le d}
        \left\lfloor 
        (1+\alpha)
        s 
        \left(\frac{\interpcost_s(\mathcal D)}{\varepsilon s}\right)^{\frac{2}{L}}
        \right\rfloor.
    \end{equation}
\end{corollary}
The parameter $\alpha$ in \Cref{cor:effective rank bound for near RERM} controls how close $\hat f$ is to being $R_L$-minimal; if $\alpha = 0$, then $\hat f$ is exactly an $R_L$-minimal interpolant. In the next result, $\alpha$ plays a similar role; it controls how close $\hat f$ is to minimizing the regularized empirical risk. 
\begin{corollary}[Effective index ranks of near-minimizers of the regularized risk.]\label{cor:effective rank bound for near RERM}
    Assume that $\hat f \in \setofnns{\densitysupp}$ (nearly) minimizes the $R_L$-regularized empirical $\ell^2$ risk. That is, for some regularization parameter $\lambda > 0$ and some small constant $\alpha \ge 0$
    \begin{equation}
        \label{eq:near regularized empirical risk minimizer}
        \frac{1}{n}\sum_{i=1}^n |y_i - \hat f(\vx_i)|^2 + \lambda R_L(\hat f) 
        \le (1+\alpha) \left(\inf_{f \in \setofnns{\densitysupp}}\frac{1}{n}\sum_{i=1}^n |y_i - f(\vx_i)|^2 + \lambda R_L(f)\right).
    \end{equation}
    Then given $\varepsilon > 0$, we have the following bound on the $\varepsilon$-effective index rank of $\hat f$:
    \begin{equation}
        \epsfuncrank{\hat f} \le 
        \min_{1 \le s \le d}
        \left\lfloor 
        (1+\alpha)
        s 
        \left(\frac{\interpcost_s(\mathcal D)}{\varepsilon s}\right)^{\frac{2}{L}}
        \right\rfloor.
    \end{equation}
\end{corollary}
The proofs of \Cref{cor:effective rank bound of near minimal interpolants,cor:effective rank bound for near RERM} are essentially identical to the proof of \Cref{thm:effective rank bound}, but use a slightly modified version of \Cref{lem:singular value decay}, as follows.
\begin{lemma}
\label{lem:modified singular value decay}
    Assume that $\hat f$ satisfies \Cref{eq:near minimal RL interpolant} or \Cref{eq:near regularized empirical risk minimizer}. Then
    for all $t \in [d]$, 
    \begin{equation}
        \mixedvarsv{\hat{f}}{t} \le 
        (1+\alpha)^{\frac{L}{2}}
        \min_{s \in [d]} \frac{\interpcost_s(\mathcal{D})}{s} \left(\frac{s}{t}\right)^{\frac{L}{2}}.
    \end{equation}
\end{lemma}
The proof of this lemma is shown in \Cref{sec:proof of singular value decay}.

\section{Proof of \Cref{lem:relationship between PhiL and Phi2}}\label{app:repcost}
\subsection{Proof of \Cref{lem:relationship between PhiL and Phi2}: \Cref{eq:bound on Phi_L via 1-2 norm}}
\begin{proof}
Let $q \in (0,1]$ and $\vsig \in \R^n$. Since the function $t \mapsto t^{2/q}$ is convex, we can use Jensen's inequality to see that
\begin{equation}
n^{-\frac{2}{q}} \|\vsig\|_q^2 
= \left(\frac{1}{n}\sum_{i=1}^n \sigma_i^q\right)^{\frac{2}{q}}
\le \frac{1}{n}\left(\sum_{i=1}^n \sigma_i^2\right)
= n^{-1} \|\vsig\|_2^2.
\end{equation}
Thus
\begin{equation}
    \|\vsig\|_2 \le \|\vsig\|_q \le n^{\frac{1}{q} - \frac{1}{2}}\|\vsig\|_2.
\end{equation}
When $q = \frac{2}{L-1}$, we have $\frac{1}{q} - \frac{1}{2} = \frac{L-2}{2}$. Extending this result to Schatten norms and raising all expressions to the $2/L$ power, we see that for any rank-$r$ matrix $\mM$,
\begin{equation}
    \|\mM\|_{F}^{2/L} \le \|\mM\|_{\gS^q}^{2/L} \le r^{\frac{L-2}{L}}\|\mM\|_{F}^{2/L}.
\end{equation}
Therefore,
\begin{equation}
\label{eq:PhiL in terms of Phi2 and rank}
\Phi_2(\mM)^{2/L}
\le \Phi_L(\mM)
\le (\rank{\mM})^\frac{L-2}{L} \Phi_2(\mM)^{2/L}.
\end{equation}
\end{proof}

\subsection{Proof of \Cref{lem:relationship between PhiL and Phi2}: \Cref{eq:lb bound on Phi_L that reveals rank}}
\begin{proof}
In \cite{wang1997} it is shown that given matrices $\mA \in \R^{d \times K}, \mB \in \R^{K \times K}$ and a constant $q > 0$, 
\begin{equation}
\|\mA\mB\|_{\gS^q}^q= 
\sum_{k=1}^K \sigma_{k}^q (\mA\mB)
\ge \sum_{k=1}^K \sigma_{k}^q (\mB) \ \sigma_{K-k+1}^q (\mA).
\end{equation}
We apply this result to $\mD_{\vlam}^{-1}\mM$ where $\vlam > 0$:
\begin{align}
 \|\mD_{\vlam}^{-1}\mM\|_{S^{\frac{2}{L-1}}}^{\frac{2}{L-1}}
 &\ge \sum_{k=1}^K \sigma_{k}^{\frac{2}{L-1}} (\mM) \ \sigma_{K-k+1}^{\frac{2}{L-1}} (\mD_{\vlam}^{-1}) \\
 &= \sum_{k=1}^K \sigma_{k}^{\frac{2}{L-1}} (\mM) \ \sigma_{k}^{-\frac{2}{L-1}} (\mD_{\vlam}).
\end{align}

Next, we take the infimum over both sides and replace $\vlambda$ with its ordered version, $\vmu$:
\begin{align}
\Phi_L(\mM)^{\frac{L}{L-1}} 
&\ge \inf_{\substack{\|\vlam\|_2 = 1,\\ \lambda_k > 0, \forall k}} \sum_{k=1}^K \sigma_{k}^{\frac{2}{L-1}} (\mM) \ \sigma_{k}^{-\frac{2}{L-1}} (\mD_{\vlam}).\\
&\ge \min_{\substack{\|\vmu\|_2 = 1,\\ \mu_1 \ge \mu_2 \ge \ldots \ge \mu_K \ge 0}} \sum_{k=1}^K \sigma_{k}^{\frac{2}{L-1}} (\mM) \ \mu_{k}^{-\frac{2}{L-1}}
\end{align}
Using Lagrange multipliers, we find that
\begin{equation}
    \min_{\substack{\|\vmu\|_2 = 1,\\ \mu_1 \ge \mu_2 \ge \ldots \ge \mu_K \ge 0}} \sum_{k=1}^K \sigma_{k}^{2/(L-1)} (\mM) \ \mu_{k}^{-2/(L-1)}
    = \|\mM\|_{S^{2/L}}^{2/(L-1)}
\end{equation}
Therefore, 
\begin{equation}
\label{eq:lower bound on Phi_L}
\Phi_L(\mM)
\ge \|\mM\|_{S^{2/L}}^{2/L}.
\end{equation}
\end{proof}

\section{Additional Proofs and Lemmas for Results in \Cref{sec:function_space}}
\subsection{Proof of \Cref{lem:grad ae 0 means constant}}
\label{sec:proof of grad ae 0 means constant}
\begin{proof}
If $f \in \setofnns{\densitysupp}$, then $f$ is a continuous piecewise linear function with finitely many linear regions. Let $\Omega_1,...,\Omega_N \subseteq \densitysupp$ denote a disjoint partition of $\densitysupp$ so that $f$ is piecewise linear over each $\Omega_j$ and each $\Omega_j$ has positive measure. Let $\chi_{\Omega_j}$ denote the indicator function for $\Omega_j$. There exist some $\vv_j\in\R^{d}$ and $c_j\in \R$ for $j = 1,\ldots N$ such that $f(\vx) = \sum_{j=1}^N (\vv^\T_j\vx + c_j) \chi_{\Omega_j}(\vx)$ for all $\vx\in\densitysupp$. Observe that $\nabla f(\vx) = \sum_j \vv_j \chi_{\Omega_j}(\vx)$ is the weak gradient of $f(\vx)$. Since each $\Omega_j$ has positive measure, we see that $\nabla f(\vx)^\T \vu = 0$ for almost all $\vx \in \Omega_j$ implies $\vv^\T_j \vu = 0$ for all $j$.

Now assume $\vx,\vx + \vu \in \densitysupp$. Since $\densitysupp$ is convex, for all $t \in [0,1]$ we have $\vx + t\vu \in \densitysupp$. Consider the cardinality of the range of the continuous function $t \mapsto f(\vx + t\vu)$. First, 
\begin{align*}
    \left|\left\{f(\vx + t\vu) : t \in [0,1]\right\}\right|
    &= \left|\left\{\sum_{j=1}^N (\vv^\T_j(\vx + t\vu) + c_j) \chi_{\Omega_j}(\vx + t\vu) : t \in [0,1]\right\}\right| \\
    &= \left|\left\{\sum_{j=1}^N (\vv^\T_j\vx + c_j) \chi_{\Omega_j}(\vx + t\vu) : t \in [0,1]\right\}\right|
\end{align*}
because $f$ is the continuous version of the expression in the right-hand side; on the boundaries between regions, the expression in the right-hand side is equal to zero. 
Next, observe that 
\begin{equation}
\left|\left\{\sum_{j=1}^N (\vv^\T_j\vx + c_j) \chi_{\Omega_j}(\vx + t\vu) : t \in [0,1]\right\}\right| \le 2^N
\end{equation}
because any term in the sum can take on one of two values. A continuous function with finite range and connected domain must be constant, so $f(\vx) = f(\vx + t\vu)$ for all $t \in [0,1]$. In particular, $f(\vx) = f(\vx + \vu)$.
\end{proof}
\subsection{Proof of \Cref{lem:rank of f is rank of DaW} when $\densitysupp$ is a bounded convex set}
\label{sec:bounded cvx set technical details}
    As before, we need to show that
    for all $\theta = (\mW,\va,\vb,c) \in \Theta_2$ such that $f = h_\theta^{(2)}|_\densitysupp$, there is some $\theta' = (\mW',\va',\vb',c') \in \Theta_2$ such that $f = h_{\theta'}^{(2)}|_\densitysupp$, $\funcrank{f} \ge \rank(\mD_{\va'}\mW')$, and $\Phi_L(\mD_{\va'}\mW') \le \Phi_L(\mD_{\va}\mW)$.
    When $\densitysupp = \R^d$, the new parameterization $\theta'$ is obtained by projecting the weight matrix $\mW$ onto the range of $\gradcov{f}$. This is not quite enough when $\densitysupp$ is a bounded convex set, primarily because of units whose active set boundaries are outside $\densitysupp$. 
    Instead, the strategy in creating $\theta'$ when $\densitysupp$ is a bounded convex set is to combine the problematic units into one affine piece and then apply the following technical lemma: 
    \begin{lemma}
    \label{lem:weights of canonical form}
        Assume $\densitysupp$ is convex and has nonempty interior. Suppose 
        \begin{equation}
            f(\vx) = \sum_{k = 1}^K a_k [\vw_k^\T\vx + b_k]_+ + \vv^\T\vx + c,~~\forall\vx \in \densitysupp.
        \end{equation}
        Assume that for every unit $k \in [K]$, $a_k \ne 0$ and the active set boundaries $H_k = \{\vx : \vw_k^\T\vx + b_k = 0\}$ are distinct and intersect the interior of $\densitysupp$.
        Then $\vv \in \range(\gradcov{f})$ and $\vw_k \in \range(\gradcov{f})$ for all $k \in [K]$.
    \end{lemma}
        \begin{proof}
        It suffices to show that $\vw_1,\ldots,\vw_K$ and $\vv$ lie in $ \nullspace(\gradcov{f})^\perp$, so we fix a vector $\vu \in \nullspace(\gradcov{f})$ and show that $\vu$ is orthogonal to $\vw_1,\ldots,\vw_K$ and $\vv$. 

        Fix a unit $k \in [K]$. First, we pick a point on the active set boundary $H_k$. Let $\densitysupp^\mathrm{o}$ denote the interior of $\densitysupp$.
        Since the active set boundaries all intersect $\densitysupp^\mathrm{o}$ and are distinct,
        there is an $\vx_k \in H_k \cap \densitysupp^\mathrm{o}$ such that $\vx \not \in H_j$ whenever $j \ne k$.
        
        Next, we consider small perturbations of $\vx_k$ in the direction of $\pm \vw_k$.
        Pick $\varepsilon > 0$ sufficiently small so that $\vx_k \pm \varepsilon \vw_k \in \densitysupp^\mathrm{o}$ and $\varepsilon |\vw_j^\T \vw_k| < |\vw_j^\top \vx_k + b_j|$ whenever $j \ne k$.
        This implies that 
        \begin{enumerate}\setlength{\itemindent}{0.4in}
            \item $\vw_k^\top (\vx_k + \varepsilon \vw_k) + b_k > 0$,
            \item $\vw_k^\top (\vx_k - \varepsilon \vw_k) + b_k < 0$, and 
            \item $\sign(\vw_j^\top (\vx_k \pm \varepsilon \vw_k) + b_j) = \sign(\vw_j^\top \vx_k + b_j)$ for all $j \ne k$.
        \end{enumerate} 
        Thus, the points $\vx_k \pm \varepsilon \vw_k$ lie on opposite sides of $H_k$, and for $j \ne k$, the points $\vx_k \pm \varepsilon \vw_k$ are on the same side of $H_j$ as $\vx_k$.
        
        We now consider small perturbations of $\vx_k \pm \varepsilon \vw_k$ in the direction of $\vu$. Choose $\delta > 0$ sufficiently small so that $\vx_k \pm \varepsilon \vw_k +\delta \vu \in \densitysupp^\mathrm{o}$, $\delta |\vw_k^\T\vu| < \varepsilon \|\vw_k\|_2^2$, and $\delta |\vw_j^\T \vu| < |\vw_j^\top (\vx_k \pm \varepsilon \vw_k) + b_j|$ whenever $j \ne k$.
        This guarantees that 
        \begin{enumerate}\setlength{\itemindent}{0.4in}
            \item $\vw_k^\top (\vx_k + \varepsilon \vw_k + \delta \vu) + b_k > 0$, 
            \item $\vw_k^\top (\vx_k - \varepsilon \vw_k + \delta \vu) + b_k < 0$, and 
            \item $\sign(\vw_j^\top (\vx_k \pm \varepsilon \vw_k + \delta \vu) + b_j) = \sign(\vw_j^\top \vx_k + b_j)$ for all $j \ne k$.
        \end{enumerate} 
        That is, for every unit $j\in [K]$, the points $\vx_k \pm \varepsilon \vw_k + \delta$ are on the same side of $H_j$ as $\vx_k \pm \varepsilon \vw_k$. Additionally, \Cref{lem:grad ae 0 means constant} implies that $f(\vx_k \pm \varepsilon \vw_k + \delta \vu) = f(\vx_k \pm \varepsilon \vw_k)$.

        Because of this, it is straightforward to verify that
        \begin{equation}
        \label{eq:negative side}
            0 = f(\vx_k - \varepsilon \vw_k + \delta \vu) - f(\vx_k - \varepsilon \vw_k)
            = \sum_{\substack{j \in [K] \\ \vw_j^\T\vx_k + b_j > 0}} \delta a_j \vw_j^\T \vu + \delta \vv^\T\vu.
        \end{equation}
        On the other hand, $\vx_k + \varepsilon \vw_k + \delta \vu$ and $\vx_k + \varepsilon \vw_k$ are also active on unit $k$, and so
        \begin{equation}
        \label{eq:positive side}
            0 = f(\vx_k + \varepsilon \vw_k + \delta \vu) - f(\vx_k + \varepsilon \vw_k)
            = \sum_{\substack{j \in [K] \\ \vw_j^\T\vx_k + b_j \ge 0}} \delta a_j \vw_j^\T \vu  + \delta \vv^\T\vu.
        \end{equation}
        Subtracting \Cref{eq:negative side} from \Cref{eq:positive side} yields $0 = \delta a_k \vw_k^\T \vu$. 
        Hence, $\vw_k^\T \vu = 0$. Since $\vu$ was arbitrary, we get $\vw_k \in \nullspace(\gradcov{f})^\perp$. 
        Since this holds for all $k \in [K]$, it follows from \eqref{eq:positive side} that $\vv$ lies in $\nullspace(\gradcov{f})^\perp$ as well. 
    \end{proof}
Using \Cref{lem:weights of canonical form}, we now finish the proof of \Cref{lem:rank of f is rank of DaW} when $\densitysupp$ is a bounded convex set by choosing a suitable $\theta'$.
\begin{proof}
    If $\densitysupp$ is a bounded convex set, we rewrite the parameterization
    \begin{equation}
    \label{eq:param of f}
        f(\vx) = h_\theta^{(2)}(\vx) = \sum_{k = 1}^K a_k [\vw_k^\T\vx + b_k]_+ + c,~~\forall\vx \in \densitysupp.
    \end{equation}
    in a way that allows us to apply \Cref{lem:weights of canonical form}. 
    For convenience, we assume without loss of generality that $\|\vw_k\|_2 = 1$ for all $k$. (We may always rescale $a_k$ and $\vw_k$ to ensure that this is true without changing the matrix $\mD_{\va} \mW$.)
    We consider several types of units in $\theta$, and partition $[K]$ accordingly as follows.
    \begin{itemize}\setlength{\itemindent}{0.4in}
        \item $\Gamma_1 = \{k\in [K]: H_k \cap \densitysupp^\mathrm{o} \ne \emptyset\}$: These units have active sets that intersect the interior of $\densitysupp$ and can be combined into units with distinct active set boundaries plus an affine term.
        \item $\Gamma_2 = \{k \in [K]: \vw_k^\T \vx + b_k \ge 0 \;\forall \vx \in \densitysupp\}$: These units are active on the entirety of $\densitysupp$ and so can be combined into an affine term.
        \item $\Gamma_3 = \{k \in [K]: \vw_k^\T \vx + b_k \le 0 \;\forall \vx \in \densitysupp\}$: These units are active on none of $\densitysupp$ and so are immediately discarded. 
    \end{itemize}
    
    We further distinguish between different units in $\Gamma_1$ based on which ones share an active set boundary, whether units that share an active set boundary cancel out, and which side of shared active set boundaries are active. Formally, define the equivalence relation $\sim$ on $\Gamma_1$ by $j \sim k$ if $H_k = H_j$. Each equivalence class modulo $\sim$ contains units that share an active set boundary. Define
    \begin{itemize}\setlength{\itemindent}{0.4in}
        \item $\Gamma_1^0 = \{k \in \Gamma_1  : \sum_{j \sim k} a_j = 0\}$
        \item $\Gamma_1^1 = \{k \in \Gamma_1  : \sum_{j \sim k} a_j \ne 0\}$
    \end{itemize}
    We denote the set of equivalence classes of $\Gamma_1^1$ modulo $\sim$ by  $\Gamma_1^1/\sim$. Let $T^1$ be a transversal of $\Gamma_1^1/\sim$. Since the weights $\vw_k$ are all normalized so that $\|\vw_k\|_2 = 1$, note that $j \sim k$ if and only if $(\vw_j,b_j) = \pm(\vw_k,b_k)$. To distinguish between the $(\vw_j,b_j) = (\vw_k,b_k)$ and $(\vw_j,b_j) = (-\vw_k,-b_k)$ cases, we write 
    \begin{itemize}\setlength{\itemindent}{0.4in}
        \item $\Gamma_1^{1+} = \{j \in \Gamma_1^1  : (\vw_j,b_j) = (\vw_k,b_k) \text{ for some } k \in T^1\}$
        \item $\Gamma_1^{1-} = \{j \in \Gamma_1^1  : (\vw_j,b_j) = (-\vw_k,-b_k) \text{ for some } k \in T^1\}$
    \end{itemize}
    We similarly define $T^0$, $\Gamma_1^{0+}$ and $\Gamma_1^{0-}$. 

    Now given $\vx \in \densitysupp$, we use the identity $[-t]_+ = [t]_+ - t$ to see that
    \begin{align}
        \sum_{k \in \Gamma_1^1} a_k [\vw_k^\T\vx + b_k]_+
        &= \sum_{k \in T^1} \sum_{j \sim k} a_j [\vw_j^\T\vx + b_j]_+ \label{eq:gamma11 term less simplified}\\
        &= \sum_{k \in T^1} \left(
              \sum_{\substack{j \sim k \\ \vw_j = \vw_k}} a_j [\vw_k^\T\vx + b_k]_+ 
            + \sum_{\substack{j \sim k \\ \vw_j = -\vw_k}} a_j [-\vw_k^\T\vx - b_k]_+ \right)\\
        &= \sum_{k \in T^1} \left(
              \sum_{\substack{j \sim k}} a_j [\vw_k^\T\vx + b_k]_+ 
            - \sum_{\substack{j \sim k \\ \vw_j = -\vw_k}} a_j (\vw_k^\T\vx + b_k) \right)\\
        &= \left(\sum_{k \in T^1} 
              \sum_{\substack{j \sim k}} a_j [\vw_k^\T\vx + b_k]_+ \right)
        + \sum_{j \in \Gamma_1^{1-}} a_j \vw_j^\T\vx + C \label{eq:gamma11 term}
    \end{align}
    where $+C$ denotes a term that is constant with respect to $\vx$.\footnote{Note that the value of $C$ may change from line to line in this proof.}
    A nearly identical derivation shows that 
    \begin{equation}
        \sum_{k \in \Gamma_1^0} a_k [\vw_k^\T\vx + b_k]_+ = \sum_{j \in \Gamma_1^{0-}} a_j \vw_j^\T\vx + C. \label{eq:gamma10 term}
    \end{equation}
    Additionally, since the units in $\Gamma_2$ are active on the entirety of $\densitysupp$, 
    \begin{equation}
        \sum_{k \in \Gamma_2} a_k [\vw_k^\T\vx + b_k]_+ = \sum_{k \in \Gamma_2} a_k \vw_k^\T\vx + C. \label{eq:gamma2 term}
    \end{equation}

    Using \Cref{eq:gamma10 term,eq:gamma11 term,eq:gamma2 term}, we rewrite \Cref{eq:param of f} as follows:
    \begin{equation}
    \label{eq:expression to which canonical form lemma}
        f(\vx) = 
        \sum_{k \in T^1} 
              \left(\sum_{\substack{j \sim k}} a_j\right) [\vw_k^\T\vx + b_k]_+ 
        + \sum_{k \in \Gamma_1^{0-} \cup \Gamma_1^{1-} \cup \Gamma_2} a_k \vw_k^\T\vx 
        + C,~~\forall\vx \in \densitysupp.
    \end{equation}
    \Cref{lem:weights of canonical form} applies to this form and tells us that the vectors $\vw_k$ for $k \in T^1$ lie in the range of $\gradcov{f}$. For any $j \in \Gamma_1^1$, the vector $\vw_j$ is co-linear with some vector $\vw_k$ with $k \in T^1$, and so $\vw_j$ lies in the range of $\gradcov{f}$ as well.
    \Cref{lem:weights of canonical form} also tells us that the vector $\sum_{k \in \Gamma_1^{0-} \cup \Gamma_1^{1-} \cup \Gamma_2} a_k \vw_k$ lies in the range of $\gradcov{f}$. Since the vectors $\vw_k$ corresponding to $\Gamma_1^{1-}$ are in the range of $\gradcov{f}$, we may subtract them from the sum. This allows us to conclude that $\sum_{k \in \Gamma_1^{0-} \cup \Gamma_2} a_k \vw_k$ is in the range of $\gradcov{f}$, though it is possible that some individual vectors in the sum are not. 

    The equation \Cref{eq:expression to which canonical form lemma} is very close to the parameterization $\theta'$ that we want. However, it is convenient to ensure that the matrix $\mD_{\va'} \mW'$ corresponds to a subset of the rows of $\mD_{\va} \mW \mP$ so that, similarly to the $\densitysupp = \R^d$ case, we can establish that $\rank(\mD_{\va'} \mW') \le \funcrank{f}$ and $\Phi_L(\mD_{\va'} \mW') \le \Phi_L(\mD_{\va}\mW)$. Additionally, the parameterization $h_{\theta'}^{(2)}$ must not include skip connections, so we need to convert the skip connection from \Cref{eq:expression to which canonical form lemma} into ReLU units. Since $\densitysupp$ is bounded, there is some $B \in \R$ such that $\|\vx\|_2 \le B$ for all $\vx \in \densitysupp$. Each $\vw_k$ term has norm 1, and so $\vw^\T P\vx + B \ge 0$ for all $\vx \in \densitysupp$.
    Putting this all together with our knowledge that certain vectors lie in the range of $\gradcov{f}$, we use \Cref{eq:gamma10 term,eq:gamma2 term} to observe that
    \begin{align}
        f(\vx) 
        &= \sum_{k \in \Gamma_1^1} a_k [\vw_k^\T\vx + b_k]_+
        + \sum_{k \in \Gamma_1^{0-} \cup \Gamma_2} a_k \vw_k^\T\vx 
        + C \\
        &= \sum_{k \in \Gamma_1^1} a_k [\vw_k^\T\mP\vx + b_k]_+
        + \sum_{k \in \Gamma_1^{0-} \cup \Gamma_2} a_k \vw_k^\top\mP\vx 
        + C \\
        &= \sum_{k \in \Gamma_1^1} a_k [\vw_k^\T\mP\vx + b_k]_+
        + \sum_{k \in \Gamma_1^{0-} \cup \Gamma_2} a_k[\vw_k^\top\mP\vx + B]_+
        + C \label{eq:final parameterization}
    \end{align}
    Choosing $\theta'$ be this final parameterization from \Cref{eq:final parameterization} means that the matrix $\mD_{\va'} \mW'$ corresponds to a subset of the rows of $\mD_{\va} \mW \mP$, so $\rank(\mD_{\va'} \mW') \le \funcrank{f}$ and $\Phi_L(\mD_{\va'} \mW') \le \Phi_L(\mD_{\va}\mW)$ just as in the proof of the $\densitysupp = \R^d$ case.
\end{proof}

\subsection{Proofs of \Cref{lem:singular value decay} and \Cref{lem:modified singular value decay}}
\label{sec:proof of singular value decay}
\begin{proof}
    Fix $s,t \in [d]$. The lower bound from \Cref{thm:relationship between RL R2 index rank and mixed var} tells us that for any $f \in \setofnns{\densitysupp}$, 
    \begin{equation}
        R_L(f) 
        \geq \mixedvar{f}{\frac{2}{L}}^{2/L}
        = 
            \sum_{i=1}^d \mixedvarsv{f}{i}^{\frac{2}{L}}
        \ge t \mixedvarsv{f}{t}^{\frac{2}{L}}. \label{eq:lower bound via sv}
    \end{equation}
    On the other hand,
    \begin{align}
        &\inf_{f \in \setofnns{\densitysupp}} R_L(f)\st~f(\vx_i) = y_i~\forall i \in [n] \\
        &\qquad\le \inf_{f \in \setofnns{\densitysupp}} R_L(f)\st~f(\vx_i) = y_i~\forall i \in [n] \text{~and~} \funcrank{f}\leq s\\
        &\qquad\le s^{\frac{L-2}{L}} \inf_{f \in \setofnns{\densitysupp}} R_2(f)^{2/L}\st~f(\vx_i) = y_i~\forall i \in [n] \text{~and~} \funcrank{f}\leq s\\
        &\qquad= s^{\frac{L-2}{L}} \interpcost_s(\mathcal{D})^{2/L} \label{eq:upper bound on cost of interpolant}
    \end{align}
    where the second inequality comes from the upper bound in \Cref{thm:relationship between RL R2 index rank and mixed var}.

    Now for \Cref{lem:singular value decay}, if $\hat f$ is an $R_L$-minimal interpolant, then 
    \begin{equation}
        R_L(\hat f) = \inf_{f \in \setofnns{\densitysupp}} R_L(f)\st~f(\vx_i) = y_i~\forall i \in [n].
    \end{equation} 
    Using \Cref{eq:lower bound via sv,eq:upper bound on cost of interpolant}, we may conclude that
    \begin{equation}
    \label{eq:singular value bound}
        t \mixedvarsv{\hat f}{t}^{\frac{2}{L}} \le s^{\frac{L-2}{L}} \interpcost_s(\mathcal{D})^{2/L}.
    \end{equation} 
    For \Cref{lem:modified singular value decay}, if $\hat f$ satisfies \Cref{eq:near minimal RL interpolant}, then similarly \Cref{eq:lower bound via sv,eq:upper bound on cost of interpolant} imply that
    \begin{equation}
    \label{eq:singular value bound with alpha}
        t \mixedvarsv{\hat f}{t}^{\frac{2}{L}} \le (1+\alpha) s^{\frac{L-2}{L}} \interpcost_s(\mathcal{D})^{2/L}.
    \end{equation} 
    If $\hat f$ satisfies \Cref{eq:near regularized empirical risk minimizer} then,
    \begin{align}
        \lambda R_L(\hat f) 
        &\le 
        \frac{1}{n}\sum_{i=1}^n |y_i - \hat f(\vx_i)|^2 + \lambda R_L(\hat f) \\
        &\le (1+\alpha) \left(\inf_{f \in \setofnns{\densitysupp}}\frac{1}{n}\sum_{i=1}^n |y_i - f(\vx_i)|^2 + \lambda R_L(f)\right) \\
        &\le (1+\alpha) \left(\inf_{\substack{f \in \setofnns{\densitysupp} \\ f(\vx_i) = y_i \forall i \in [n]}}\frac{1}{n}\sum_{i=1}^n |y_i - f(\vx_i)|^2 + \lambda R_L(f)\right) \\
        &= \lambda (1+\alpha) \left(\inf_{\substack{f \in \setofnns{\densitysupp} \\ f(\vx_i) = y_i \forall i \in [n]}} R_L(f)\right).
    \end{align}
    With \Cref{eq:lower bound via sv,eq:upper bound on cost of interpolant}, this implies \Cref{eq:singular value bound with alpha} holds as well in this case.
    In all cases, \Cref{lem:singular value decay,lem:modified singular value decay} follow from rearranging \Cref{eq:singular value bound,eq:singular value bound with alpha}, respectively, and minimizing over $s$.
\end{proof}

\subsection{Proof of \Cref{lem:singval_lower_bnd}}
\label{sec:lower bound on singular values proof}
The proof is a consequence of the following key lemma that lower bounds the sum of squares of the singular values of any Lipschitz continuous data interpolating function. Below we use $\mathrm{Lip}(f)$ to denote the minimum Lipschitz constant of $f$ on $\gX$, i.e., the infimum over all constants $c\geq 0$ such that $|f(\vx)-f(\vy)| \leq c\|\vx-\vy\|$ for all $\vx,\vy\in \gX$.
\begin{lemma}\label{lem:singval_lower_bnd_interpolant}
    Let $\Omega \subset \gX$ be as in \Cref{lem:singval_lower_bnd}. Then for any Lipschitz function $f:\gX\rightarrow \R$ that interpolates the data (i.e., $f(\vx_i) = y_i$ for all $i$) we have
    \begin{equation}
        \sum_{k=1}^d \sigma_k(f)^2 
        \geq C 
        \frac{(\min_{c\in\mathbb{R}} \max_{i: \vx_i \in \Omega} |y_i-c|)^{d+2}}
        {\mathrm{Lip}(f)^d},
    \end{equation}
    where $C>0$ is a universal constant depending on $\Omega$, $\rho$, and $d$, but independent of $f$ and the data.
\end{lemma}
\begin{proof}
    First, it is straightforward to verify that
    \begin{equation}
        \sum_{k=1}^d \sigma_k(f)^2 = tr(\mC_{f,\rho}) = \int_{\mathcal X} \|\nabla f(\vx)\|_2^2 \rho(\vx) d\vx,
    \end{equation}
    Also, by assumption, there exists a constant $C_1 > 0$ such that $\rho(\vx) \geq C_1$ for all $\vx \in \Omega$, and so
    \begin{equation}
        \int_{\mathcal X} \|\nabla f(\vx)\|_2^2 \rho(\vx) d\vx
    \ge C_1 \int_{\Omega} \|\nabla f(\vx)\|_2^2 d\vx.
    \end{equation}
    
    Hence, it suffices to lower bound $\|\nabla f\|_{L^2(\Omega)}^2 = \int_{\Omega} \|\nabla f(\vx)\|_2^2 d\vx$. 
    
    Towards this end, define $\overline{f}_\Omega = \frac{1}{|\Omega|}\int_\Omega f(\vx)d\vx$ where $|\Omega|$ denotes the Lebesgue measure of $\Omega$. By a Sobolev inequality (see, e.g., \cite{evans2010partial} Section 5.6.2) we have
    \begin{equation}
        \|f-\overline{f}\|_{L^\infty(\Omega)} \lesssim_{\Omega,d} \|f-\overline{f}\|_{L^{d+2}(\Omega)} + \|\nabla f\|_{L^{d+2}(\Omega)},
    \end{equation}
    where the notation $A \lesssim_{\Omega,d} B$ indicates $A \leq C B$ for a universal constant $C$ depending only on $\Omega$ and $d$.
    Furthermore, by Poincar\'e's inequality (see, e.g., \cite{evans2010partial} Section 5.8.1), we have
    \begin{equation}
        \|f-\overline{f}\|_{L^{d+2}(\Omega)} \lesssim_{\Omega,d} \|\nabla f\|_{L^{d+2}(\Omega)},
    \end{equation}
    and so combining the two inequalities above gives
    \begin{equation}
    \|f-\overline{f}\|_{L^\infty(\Omega)} \lesssim_{\Omega,d} \|\nabla f\|_{L^{d+2}(\Omega)}.
    \end{equation}
    Next, since $2 < d+2 < \infty$, an $L^p$-norm interpolation inequality gives
    \begin{equation}
        \|\nabla f\|_{L^{d+2}(\Omega)} \leq \|\nabla f\|_{L^2(\Omega)}^{\frac{2}{d+2}} \|\nabla f\|_{L^{\infty}(\Omega)}^{\frac{d}{d+2}}.
    \end{equation}
    Also, since $\Omega \subset \gX$ we have $\|\nabla f\|_{L^{\infty}(\Omega)} \leq \|\nabla f\|_{L^{\infty}(\gX)}$, while Rademacher's Theorem gives $\|\nabla f\|_{L^{\infty}(\gX)} = \textrm{Lip}(f)$. Therefore, we have shown
    \begin{equation}
    \|f-\overline{f}\|_{L^\infty(\Omega)} \lesssim_{\Omega,d} \textrm{Lip}(f)^{\frac{d}{d+2}}\|\nabla f\|_{L^2(\Omega)}^{\frac{2}{d+2}},
    \end{equation}
    which implies
    \begin{equation}
    \frac{\|f-\overline{f}\|_{L^\infty(\Omega)}^{d+2}}{\textrm{Lip}(f)^d} \lesssim_{\Omega,d} \|\nabla f\|_{L^2(\Omega)}^2.
    \end{equation}
    Finally, since $f$ satisfies $f(\vx_i) = y_i$ for all $i$, we have
    \begin{equation}
        \|f-\overline{f}\|_{L^\infty(\Omega)} \geq \max_{i: \vx_i \in \Omega} |y_i-\overline{f}| \geq \min_{c\in\R} \max_{i: \vx_i \in \Omega}|y_i-c|
    \end{equation}
    Combining this inequality with the one above gives the claim.
    \end{proof}

To finish the proof of \Cref{lem:singval_lower_bnd}, all that remains is to bound the Lipschitz constant of minimal $R_L$-cost interpolants. This is achieved with the next two lemmas.

\begin{lemma}\label{lem:lip_bnd} Suppose $f \in \gN_2(\gX)$. Then $\mathrm{Lip}(f) \leq R_2(f)$.
\end{lemma}
\begin{proof} Suppose $f(\vx) = \sum_{k=1}^K a_k[\vw_k^\T\vx + b_k]_+ + c$ is any parameterization of $f$ such that $\|\vw_k\|=1$ for all $k = 1,...,K$. Then a weak gradient of $f$ is given by
\begin{equation}
    \nabla f(\vx) = \sum_k \heaviside(\vw_k^\T\vx + b_k)a_k\vw_k
\end{equation}
where $\heaviside(\cdot)$ is the unit step function. Also, for any $\vx \in \gX$ we have
\begin{equation}
    \|\nabla f\|_{L^\infty(\gX)} \leq \sum_k \|H(\vw_k^\T\vx + b_k)a_k\vw_k\| \leq \sum_k |H(\vw_k^\T\vx + b_k)||a_k|\|\vw_k\| \leq \sum_k |a_k|
\end{equation}
Therefore, by taking the infimum over all such parameterizations of $f$, and using the characterization of the $R_2$-cost given in \eqref{eq:opt2}, we see that 
\begin{equation}
    \|\nabla f\|_{L^\infty(\gX)}\leq R_2(f),
\end{equation}
Finally, by Rademacher's Theorem, we have $\mathrm{Lip}(f) = \|\nabla f\|_{L^\infty(\gX)}$, which gives the claim.
\end{proof}

\begin{lemma}\label{lem:lip_bnd_RLmin} Let $\hat{f}$ be a minimum $R_L$-cost interpolant of the data $\gD$. Then $\mathrm{Lip}(\hat{f}) \leq \interpcost_1(\gD)$.
\end{lemma}
\begin{proof}
Let $f_1$ be a minimum $R_2$-cost index-rank-one interpolant of the data $\gD$, such that $\interpcost_1(\gD) = R_2(f_1)$. Since $\hat{f}$ is a $R_L$-cost minimizer, we have
\begin{equation}
R_L(\hat{f})^{L/2} \leq R_L(f_1)^{L/2} = R_2(f_1) = \interpcost_1(\gD)
\end{equation}
where the equality $R_L(f_1)^{L/2} = R_2(f_1)$ follows from \Cref{thm:relationship between RL R2 index rank and mixed var} and noting that $f_1$ has index-rank one. On the other hand, by \Cref{lem:lip_bnd,thm:relationship between RL R2 index rank and mixed var} we have 
\begin{equation}
\text{Lip}(\hat{f}) \leq R_2(\hat{f}) \leq R_L(\hat{f})^{L/2}.
\end{equation}
Combining the two inequalities above gives the desired result.
\end{proof}

\Cref{lem:singval_lower_bnd} now follows directly from \Cref{lem:singval_lower_bnd_interpolant} with $f = \hat{f}$ and using the bound $\mathrm{Lip}(\hat{f}) \leq \interpcost_1(\gD)$ given in \Cref{lem:lip_bnd_RLmin}.

\subsection{Proofs of claims in \Cref{ex:tworays}}\label{sec:sm:tworays}
\begin{figure}[ht!]
\includegraphics[width=\textwidth]{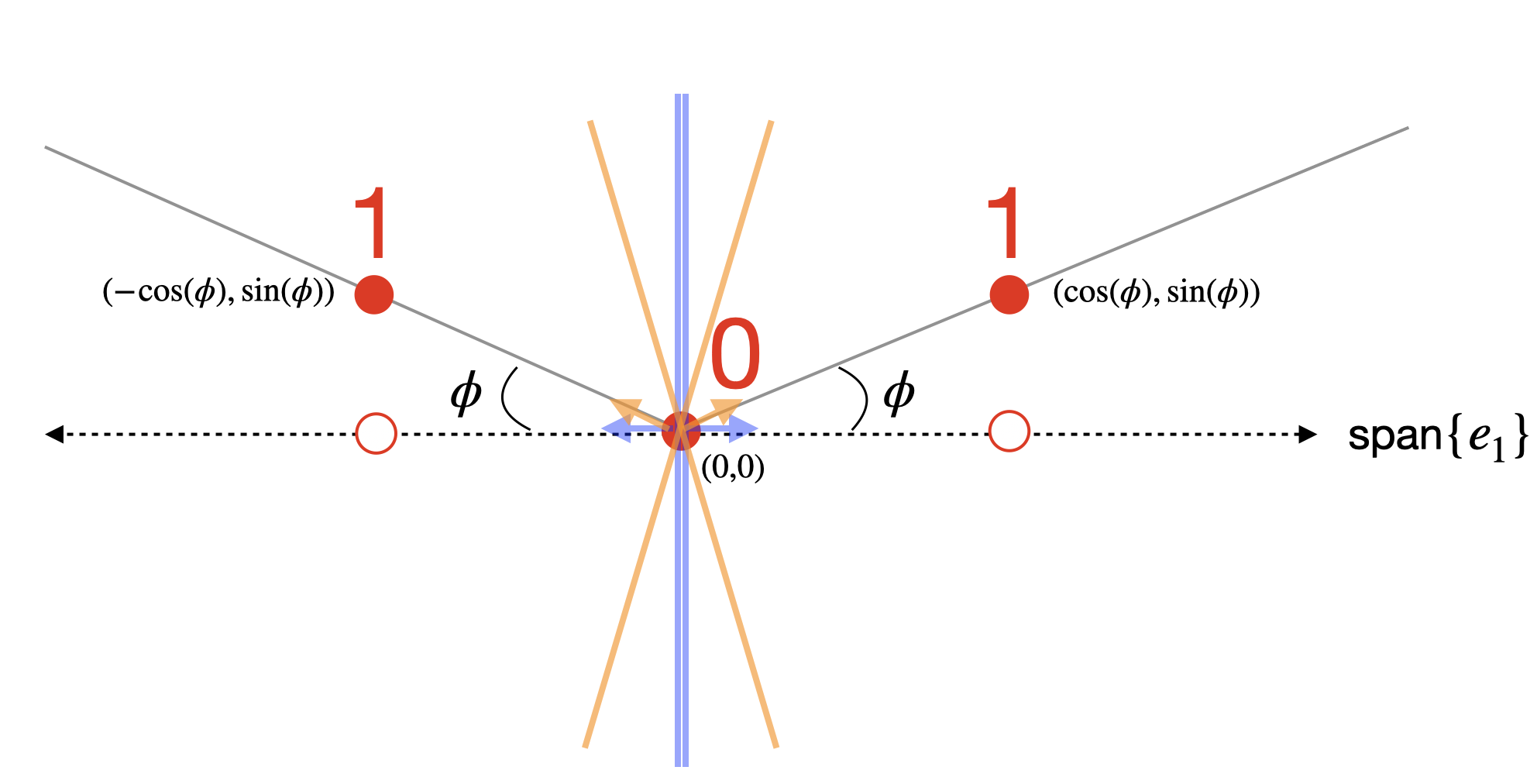}
\caption{Dataset $\gD$ consisting of three training points in $\R^2$. Closed red circles indicate the training features with the corresponding labels in red. Orange lines show the boundaries of the two ReLU units of the minimum $R_2$-cost interpolant. Blue lines indicate the ReLU boundaries of the index rank one data generating function $f^*(\vx) = \cos(\phi)^{-1}|x_1| =  \cos(\phi)^{-1}([\ve_1^\T \vx]_+ + [-\ve_1^\T \vx]_+)$.}\label{fig:tworays}
\end{figure}
Let $\phi$ be any angle in the range $(0,\pi/6)$, and define $\vw_+ = [\cos(\phi),\sin(\phi)]$ and $\vw_- = [-\cos(\phi),\sin(\phi)]$. Consider the dataset consisting of three training pairs: 
\[
\gD = \{(\bm 0, 0), (\vw_+,1), (\vw_-,1)\}
\]
as shown in \Cref{fig:tworays}. We prove the following:
\begin{prop}
 The function $\hat{f}_2(\vx) = [\vw_+^\T\vx]_+ + [\vw_-^\T\vx]_+$ is the unique minimum $R_2$-cost interpolant of the dataset $\gD$. Furthermore, assuming the domain $\gX$ is either a Euclidean ball centered at the origin or all of $\R^2$, and $\rho$ is a radially symmetric probability density function, we have $\sigma_2(\hat{f}_2) > \sin(\phi)$.
\end{prop}
\begin{proof}
Let $f$ be any $R_2$-cost minimizer that interpolates $\gD$. Then $f$ can be written as
\[
f(\vx) = \sum_{k=1}^K a_k[\vw_k^\top\vx-b_k]_+ + c
\]
where $\|\vw_k\|_2 = 1$ and $a_k \neq 0$ for all $k\in [K]$, with $R_2(f) = \sum_{k=1}^K |a_k|$.

First, we prove that $f$ being a $R_2$-cost minimizer implies certain geometric constraints on its ReLU units. Below, we use $u$ to denote a generic ReLU unit in $f$, i.e., $u(\vx) = a_k [\vw_k^\top\vx-b_k]_+$ for some $k$. We define the active set of $u$ to be the set of inputs $\vx$ such that $u(\vx) \neq 0$, and consider the following cases:

\vspace{0.5em}

\noindent\emph{Case 1: The active set of $u$ contains none of the training points.}

If this were the case, the unit $u$ could be removed while strictly reducing the $R_2$-cost while still satisfying the interpolation constraints, contradicting the fact that $f$ is an $R_2$-cost minimizer. Therefore, this case is impossible.

\vspace{0.5em}

\noindent\emph{Case 2: The active set of $u$ contains one training point.} 

Suppose the active set of $u$ contains only $\vw_+$. Let $u(\vw_+) = \alpha \ne 0$. Consider the ReLU unit $u_0(\vx) = \alpha[\vw_+^\top\vx]_+$, which also satisfies $u_0(\vw_+) = \alpha$ and vanishes over the other two training points. We prove that $u = u_0$. Recall that $u(\vx) = a[\vw^\top\vx - b]_+$ where $\|\vw\|_2=1$. The constraint $u(\vw_+) = \alpha$ implies $|a| = |\alpha|/(\vw^\T\vw_+-b)$. Also, since $0 = u(\bm 0) =[-b]_+$, we see that $b\geq0$, which implies $\vw^\T\vw_+>0$. By the Cauchy-Schwarz inequality, we have $0 < \vw^\T\vw_+-b \leq 1$ with equality if and only if $b=0$ and $\vw = \vw_+$. This shows $R_2(u) = |a| \geq |\alpha| = R_2(u_0)$ where equality holds if and only if $b=0$ and $\vw = \vw_+$, or equivalently, $u=u_0$. Therefore, if it were the case that $u\neq u_0$, then $R_2(f) > R_2(f-u+u_0)$, contrary to our assumption that $f$ was a $R_2$-cost minimizer, and so it must be the case that $u=u_0$.

An argument parallel to the above shows that if the active set of $u$ contains only $\vw_-$, and $u(\vw_-) = \alpha$, then $u(\vx) = \alpha[\vw_-^\top\vx]_+$.

The last case to consider is where the active set of $u$ contains only $\bm 0$. Let $u(\bm 0) = \alpha$. Consider the ReLU unit  $u_0(\vx) = \alpha (\sin\phi)^{-1} [-\ve_2^\top\vx + \sin\phi]_+$. Then $u_0(\bm 0) = \alpha$, while $u_0(\bm \vw_+) = u_0(\vw_-) = 0$. A similar argument to the above shows that $u_0$ is the unique $R_2$-cost minimizer under the constraints that the active set of $u$ contains only $\bm 0$ and $u(\bm 0) = \alpha$.

\vspace{0.5em}

\noindent\emph{Case 3: The active set of $u$ contains two training points.}

First, suppose the active set of $u$ contains both $\vw_+$ and $\vw_-$, but not $\bm 0$. We show this is not possible, since such a unit could be replaced with two units at a lower $R_2$-cost. In particular, let $u(\vw_+) = \alpha_+$ and $u(\vw_-) = \alpha_-$. Note that $\alpha_+$, $\alpha_-$ must have the same sign. Without loss of generality, we assume $\alpha_+,\alpha_->0$. Consider the units $u_+(\vx) = \alpha_+[\vw_+^\top\vx]_+$ and $u_-(\vx) = \alpha_-[\vw_-^\top\vx]_+$, and let $u_0 = u_+ + u_-$. Then $R_2(u_0) = \alpha_- + \alpha_+$, and $u_0$ matches the output of $u$ over the training points. We show the $R_2$-cost of $u$ must be greater than $u_0$. 

Recall $u(\vx) = a[\vw^\T\vx-b]_+$ with $\|\vw\|_2=1$. Define $\tilde{\vw} = a\vw$ and $\tilde{b} = ab$, so that $u(\vx)=[\tilde{\vw}^\T\vw-\tilde{b}]_+$ and $R_2(u) = \|\tilde{\vw}\|_2$. Then the constraints $u(\vw_+) = \alpha_+$ and $u(\vw_-) = \alpha_-$ imply
\begin{align*}
\tilde{\vw}^\top\vw_+ - \tilde{b} & = \alpha_+,\\
\tilde{\vw}^\top\vw_- - \tilde{b} & = \alpha_-,
\end{align*}
and adding the equations above gives 
\begin{align*}
\tilde{\vw}^\top(\vw_++\vw_-) - 2\tilde{b} = 2\tilde{w}_2\sin(\phi) - 2\tilde{b} = \alpha_+ + \alpha_- & \iff \tilde{w}_2 = \tfrac{\alpha_++\alpha_-+2\tilde{b}}{2\sin(\phi)}.
\end{align*}
This gives the lower bound
\[
    R_2(u) = \|\tilde{\vw}\|_2 \geq |\tilde{w}_2| = \frac{|\alpha_++\alpha_- + 2\tilde{b}|}{2\sin(\phi)} > \alpha_++\alpha_- +  2\tilde{b} \geq \alpha_++\alpha_-,
\]
where the strict inequality above follows from our assumption that $0 < \sin(\phi) < 1/2$, and the final inequality holds since $\tilde{b}\geq0$ because $u(\bm 0) = 0$. This shows the $R_2(u) > \alpha_+ + \alpha_- = R_2(u_0)$ contradicting the fact that $f$ is an $R_2$-cost minimizer.

Next, suppose the active set of $u$ contains both $\bm 0$ and $\vw_+$, but not $\vw_-$. Let $u(\bm 0) = \alpha_0$ and $u(\vw_+) = \alpha_+$. Again, without loss of generality, we assume $\alpha_0,\alpha_+>0$. We will prove that for $u(\vx) = a[\vw^\T\vx - b]_+$ must be the case that $\vw = \vw_+$ or $b = -\vw^\T\vw_-$.

Again, define $\tilde{\vw} = a\vw$ and $\tilde{b} = ab$, so that $u(\vx) = [\tilde{\vw}^\T\vx -\tilde{b}]_+$. The constraint $u(\bm 0) = \alpha_0$ implies $\tilde{b} = -\alpha_0$. We show that interpolation constraints determine $\vw$ up to a single free parameter $\delta \in (0,1]$. In particular, define $\delta = \|\vx_0\|_2$ where $\vx_0$ is the unique intersection point of the ReLU boundary $\{\vx : \tilde{\vw}^\top\vx = \tilde{b}\}$ and the ray $\{\beta \vw_- : \beta > 0\}$. Then we have $\beta \tilde{\vw}^\top\vw_- = \tilde{b}$, or equivalently $\beta = \tilde{b}/(\tilde{\vw}^\top\vw_-) = -\alpha_0/(\tilde{\vw}^\top\vw_-)$, and so $\vx_0 = -\frac{\alpha_0}{\tilde{\vw}^\top\vw_-} \vw_-$. This gives $\delta = \|\vx_0\|_2 = \frac{\alpha_0}{\tilde{\vw}^\top\vw_-}$, or equivalently,
\[
\tilde{\vw}^\top\vw_- = -\frac{\alpha_0}{\delta}.
\]
Also, from the constraint $u(\vw_+) = \alpha_+$ we have
\[
\tilde{\vw}^\top\vw_+ - \tilde{b}  =  \tilde{\vw}^\top\vw_+ + \alpha_0 = \alpha_+ \iff \tilde{\vw}^\top \vw_+ = \alpha_+ - \alpha_0.
\]
Adding and subtracting equations above gives
\begin{align*}
\tilde{\vw}^\top(\vw_+-\vw_-) = 2\tilde{w}_1\cos(\phi) = \alpha_+- \alpha_0 +\alpha_0/\delta & \iff \tilde{w}_1 = \tfrac{\alpha_+-\alpha_0 + \alpha_0/\delta}{2\cos(\phi)},\\
\tilde{\vw}^\top(\vw_++\vw_-) = 2\tilde{w}_2\sin(\phi) = \alpha_+-\alpha_0-\alpha_0/\delta & \iff \tilde{w}_2 = \tfrac{\alpha_+-\alpha_0-\alpha_0/\delta}{2\sin(\phi)}.
\end{align*}
Therefore, 
 \[
\phi(\delta) := R_2(u)^2 = \|\tilde{\vw}\|^2 = \tilde{\vw}_1^2 + \tilde{\vw}_2^2 = \left(\tfrac{\alpha_+-\alpha_0 + \alpha_0/\delta}{2\cos(\phi)}\right)^2 + \left(\tfrac{\alpha_+-\alpha_0 - \alpha_0/\delta}{2\cos(\phi)}\right)^2.
\]
Observe that $\phi$ is a smooth function of $\delta \in (0,\infty)$. Basic calculus shows that $\phi$ has a unique critical point $\delta^*$ given by
\[
\delta^* = \frac{\alpha_0}{(\alpha_+-\alpha_0)\left(\cos^2\phi-\sin^2\phi\right)}.
\]
In the event that $\delta^* \in (0,1]$, then is easy to prove $\delta^*$ is the unique minimizer of $\phi$. Plugging in the value $\delta = \delta^*$ into the expressions for $\tilde{w}_1$ and $\tilde{w}_2$, we have
\[
\tilde{w}_1 = (\alpha_+-\alpha_0)\frac{(1 + \cos^2\phi - \sin^2\phi)}{2\cos\phi} = (\alpha_+-\alpha_0)\cos(\phi)
\]
and 
\[
\tilde{w}_2 = (\alpha_+-\alpha_0)\frac{(1 - \cos^2\phi + \sin^2\phi)}{2\sin\phi} = (\alpha_+-\alpha_0)\sin(\phi).
\]
This shows $\tilde{\vw} = (\alpha_+-\alpha_0)\vw_+$. Therefore, $u$ has the form $u(\vx) = a[\vw_+^\top \vx -b]_+$ where $\vw_+^\top\vw_- < b<0$.

On the other hand, when $\delta^* > 1$, the minimum of $\phi(\delta)$ for $\delta \in (0,1]$ occurs at $\delta = 1$. This implies the ReLU boundary of $u$ contains the point $\vw_-$, and $u$ has the form $u(\vx) = a[\vw(\vx-\vw_-)]_+$.

A parallel argument shows that if the active set of $u$ contains both $\bm 0$ and $\vw_-$, but not $\vw_+$, then either $u(\vx) = a[\vw_-^\T\vx -b]_+$ with $-1 < \vw_+^\T\vw_- < b < 0$, or $u(\vx) = a[\vw^\T(\vx-\vw_+)]_+$.

\vspace{0.5em}

\noindent\emph{Case 4: The active set of $u$ contains all three training points.}

In this case, cannot make any further simplifications to the form of $u$.

\vspace{0.5em}

The cases above show that $f$ must have the form
\begin{equation}\label{eq:frep}
f(\vx) = a_1 u_1(\vx) + a_2 u_2(\vx) + a_3 u_3(\vx) + \sum_{k=1}^M a_{2,k}u_{2,k}(\vx) + \sum_{j=1}^N a_{3,j}u_{3,j}(\vx) + c, 
\end{equation}
where $u_1(\vx) = [\vw_+^\top \vx]_+$, $u_2(\vx) = [\vw_-^\top \vx]_+$, $u_3(\vx) = [-\ve_2^\top \vx+\sin(\phi)]_+$, and each $u_{2,k}$ is a distinct ReLU unit of the form $[\vw_k^\T\vx - b_k]_+$ with $\|\vw_k\|_2 = 1$ whose active set contains either $\{\bm 0,\vw_+\}$ or $\{\bm 0,\vw_-\}$ and has the form specified in Case 3 above, while each $u_{3,j}$ is a distinct ReLU unit of the form $[\vw_j^\T\vx- b_j]_+$ whose active set contains all three points $\{\bm 0,\vw_+,\vw_-\}$.

Additionally, the coefficients $\va = (a_1,a_2,a_3,a_{2,1},...,a_{2,M},a_{3,1},...,a_{3,N}) \in \R^K$ in \eqref{eq:frep} are a minimizer of the convex optimization problem:
\begin{equation}\label{eq:primal}
p^* = \min_{c\in\R} \left(\min_{\va \in \R^K} \|\va\|_1 ~~s.t.~~\mV\va = \vy-c\bm 1\right),
\end{equation}
where $\va \in \R^K$ is the vector of all outer-layer weights, $\vy = [1, 1, 0]^\top$, $\bm 1 = [1,1,1]^\top \in \R^3$ and $\mV \in \R^{3\times W}$ is the matrix whose columns are the evaluations of one of the units at the three training points, so that $\mV\va = [f(\vw_+)-c,f(\vw_-)-c,f(\bm 0)-c]^\top$. In particular, if we sort the columns of $\mV$ such that first three columns correspond to units $u_1$, $u_2$, $u_3$, and the next $M$ columns correspond to units active over two training points (denoted by $u_{2,k}$), and the final $N$ columns correspond to units active over three training points (denoted by $u_{3,j}$), we have
\[
\mV = 
\begin{bmatrix}
1 & 0 & 0 & u_{2,1}(\vw_+) & \cdots & u_{2,M}( \vw_+) & u_{3,1}(\vw_+) & \cdots & u_{3,N}( \vw_+)  \\
0 & 1 & 0 & u_{2,1}(\vw_-) & \cdots & u_{2,M}(\vw_-) & u_{3,1}(\vw_-) & \cdots & u_{3,N}(\vw_-) \\
0 & 0 & \sin(\phi) & u_{2,1}(\bm 0) & \cdots & u_{2,M}(\bm 0) & u_{3,1}(\bm 0) & \cdots & u_{3,N}(\bm 0) 
\end{bmatrix}.
\]
Consider the pair $\va_0 = [1, 1, 0, \cdots, 0]$, $c_0=0$, which corresponds to the interpolant $f_0(\vx) = [\vw_+^\T\vx]_+ + [\vw_-^\T\vx]_+$, hence is feasible for \eqref{eq:primal}. We prove that $(\va_0,c_0)$ is the unique minimizer of \eqref{eq:primal}, which implies $f=f_0$, i.e., $f_0$ is the unique interpolating $R_2$-cost minimizer.

To do so, we make use of the following lemma, which shows that the existence of a specific vector in the row space of $\mV$ (known as a \emph{dual certificate} in the compressed sensing literature \cite{candes2014mathematics}) is sufficient to guarantee a feasible pair $(\va_*,c_*)$ is the unique minimizer of \eqref{eq:primal}. Below, we use $\mathrm{supp}(\va) = \{i \in [K] : a_i \neq 0\}$ to denote the non-zero support of the vector $\va$. Also, given an index set $\gJ \subset [K]$ we define $\mV_\gJ$ to be the submatrix obtained by restricting $\mV$ to columns indexed by $\gJ$, and for any vector $\vh \in \R^K$ we let $\vh_\gJ \in \R^{|\gJ|}$ denote the restriction of $\vh$ to its entries indexed by $\gJ$.

\begin{lemma}\label{lem:dual_cert}
Suppose $(\va_*,c_*)$ is feasible for \eqref{eq:primal}, i.e., $\mV\va_* = \vy - c_*\bm 1$. Let $\gI = \mathrm{supp}(\va_*)$. Assume $\mV_\gI$ is full rank, and $\bm 1 \not\in \mathrm{range}(\mV_\gI)$. Further, suppose there exists a vector $\vz_* \in \R^3$ with $\vz_*^\top \bm 1 = 0$ such that $\vq = \mV^\T\vz_*$ satisfies $q_i = \mathrm{sign}(\va_{*,i})$ for all $i \in \gI$, and $|q_i| < 1$ for all $i \in \gI^C$. Then $(\va_*,c_*)$ is the unique minimizer of \eqref{eq:primal}.
\end{lemma}
\begin{proof}
Suppose $(\va,c) \neq (\va_*,c_*)$ is feasible for \eqref{eq:primal}, i.e., $\mV\va = \vy - c\bm 1$. Define $\vh = \va-\va_*$. First, we show that $\vh_{\gI^C} \neq \bm 0$. By way of contradiction, suppose $\vh_{\gI^C} = \bm 0$. Then we have
\[
\mV_I \vh_I = \mV\vh = (c^*-c)\bm 1
\]
but by the assumption $\bm 1 \not\in \mathrm{range}(\mV_I)$, the only possibility is that $(c^*-c)\bm 1 = 
\bm 0$, or equivalently $c^* = c$. And by the assumption that $\mV_I$ is full rank, we must have $\vh_I = \bm 0$, which implies $\vh = \bm 0$, or equivalently, $\va = \va_*$. Hence, $(\va,c) = (\va_*,c_*)$, a contradiction.

Next, we have
\begin{align*}
\|\va\|_1 & = \|\va_* + \vh_\gI\|_1 + \|\vh_{\gI^C}\|_1\\
& >  \langle \va_* + \vh_\gI, \vq\rangle + \langle\vh_{\gI^C},\vq\rangle\\
& = \|\va_*\|_1 + \langle \vh,\vq\rangle\\
& = \|\va_*\|_1 + \langle \mV\vh,\vz_*\rangle\\
& = \|\va_*\|_1 + (c_*-c)\langle \bm 1,\vz_*\rangle\\
& = \|\va_*\|_1
\end{align*}
where the strict inequality comes from the fact that $\|\vh_{\gI^C}\|_1 > 0$ and $\|\vh_{\gI^C}\|_1 > \langle\vh_{\gI^C},\vq\rangle$ since we assume $|q_i| < 1$ for all $i\in \gI^C$. Therefore, $\|\va\|_1 > \|\va_*\|_1$ for all feasible $\va \neq \va_*$. Finally, if $\va = \va_*$, then $\vh = \bm 0$, and so $\bm 0 = \mV \vh = (c_*-c)\bm 1$, which implies $c = c_*$, showing $(\va_*,c_*)$ is the unique minimizer.
\end{proof}

First, observe that for $\gI = \text{supp}(\va_0) = \{1,2\}$, the submatrix $\mV_\gI$ is full rank, and $\bm 1 \notin\mathrm{range}(\mV_\gI)$. Next, we identify a vector $\vz_0$ that satisfies the conditions of \Cref{lem:dual_cert}.

Let $\vz_0 = [1,  1,  -2]^\top$ and $\vq = \mV^\top\vz_0$. Then 
\[
q_1 = 1, q_2 = 1, q_3 = -2\sin(\phi)
\]
where $|q_3| < 1$ by our assumption that $0 < \sin(\phi) < 1/2$. The remaining entries of $\vq$ have the form
\[
u_{m,k}(\vw_+)+u_{m,k}(\vw_-)-2u_{m,k}(\bm 0)
\]
for $m =2,3$. Now we show each of these entries must have absolute value strictly less than one.

Consider the case $m=2$, corresponding to units active over exactly two training points (i.e., units active over $\vw_+$ and $\bm 0$, or $\vw_-$ and $\bm 0$). For simplicity, let us write $u = u_{2,k}$, which has the form $u(\vx) = [\vw^\top\vx - b]_+$ with $\|\vw\|_2 = 1$. Without loss of generality, assume $u$ is active over $\vw_+$ and $\bm 0$ only. Therefore, we need to bound the quantity
\[
q = u(\vw_+)-2u(\bm 0).
\]
Previously, we identified two cases for the unit $u$: either $u(\vx) = [\vw_+^\top \vx-b]_+$ with $-1 < \vw_+^\top\vw_- < b < 0$, or $u(\vx) = [\vw^\top(\vx-\vw_-)]_+$. In the first case $q = (1-b)-2(-b) = 1+b$, and so $|q| = |1+b| < 1$. In the second case, we have $q = \vw^\top(\vw_+-\vw_-) - 2(-\vw^\top\vw_-) = \vw^\top(\vw_++\vw_-) = 2 w_2 \sin(\phi)$, and so $|q| = 2|w_2|\sin(\phi) \leq 2\sin(\phi) < 1$, since we assume $\|\vw\|_2  = 1$ and $0 < \sin(\phi) < 1/2$.

Now consider the case $m=3$, corresponding to units active over all three training points. For simplicity, let us write $u = u_{3,k}$, which has the form $u = [\vw^\top\vx - b]_+$ with $\|\vw\|_2 = 1$. Since $u$ is active over all three training points we have
\[
q = (\vw^\top\vw_+ - b) + (\vw^\top\vw_- - b) - 2(-b) = \vw^\top(\vw_++\vw_-) = 2w_2\sin(\phi)
\]
and so we have $|q| < 1$ by the same argument as above. Therefore, $\vz_0$ satisfies the requirements of Lemma \ref{lem:dual_cert}, which proves $(\va_0,c_0)$ is the unique minimizer of \eqref{eq:primal}, as claimed.

Finally, we compute the singular values of $f(\vx) = [\vw_+^\T\vx]_+ + [\vw_-^\T\vx]_+$ assuming the domain $\gX$ is a ball centered at the origin $\gX = \{\vx\in\R^d : \|\vx\|_2 \leq R\}$ or $\gX = \R^d$ and $\rho:\gX\rightarrow \R$ is any radially symmetric probability density function. 

First, we have
\[
\nabla f(\vx) = H(\vw_+^\T\vx)\vw_+ + H(\vw_-^\T\vx)\vw_-
\]
and so 
\begin{multline*}
\nabla f(\vx) \nabla f(\vx)^\T\\\qquad = H(\vw_+^\T\vx)\vw_+\vw_+^\T +  H(\vw_-^\T\vx)\vw_-\vw_-^\T + H(\vw_+^\T\vx)H(\vw_-^\T\vx)( \vw_+\vw_-^\T + \vw_-\vw_+^\T).
\end{multline*}
By radial symmetry of $\rho$, we have
\[
\int_\gX H(\vw_+^\T\vx) \rho(\vx) d\vx = \int_\gX H(\vw_-^\T\vx) \rho(\vx) d\vx = \frac{1}{2},
\]
and
\[
\int_\gX H(\vw_+^\T\vx) H(\vw_-^\T\vx)\rho(\vx) d(\vx) = \frac{\phi}{\pi}.
\]
Therefore, the EGOP matrix $\mC_f$ is given by
\begin{align*}
\mC_f & = \int_\gX \nabla f(\vx) \nabla f(\vx)^\T \rho(\vx) d\vx\\ &  = \frac{1}{2}\left(\vw_+\vw_+^\T + \vw_-\vw_-^\T\right) + \frac{\phi}{\pi} ( \vw_+\vw_-^\T + \vw_-\vw_+^\T)\\
 & = \begin{bmatrix} \cos^2\phi & 0 \\ 0 & \sin^2 \phi \end{bmatrix} + \frac{2\phi}{\pi}\begin{bmatrix} -\cos^2\phi & 0 \\ 0 & \sin^2 \phi \end{bmatrix}\\
 & = \begin{bmatrix} (1-\frac{2\phi}{\pi})\cos^2\phi & 0 \\ 0 & (1+\frac{2\phi}{\pi}) \sin^2 \phi \end{bmatrix}.
\end{align*}
and so the singular values of $\mC_f^{1/2}$ are given by
\[
\sigma_1 = \sqrt{1-\frac{2\phi}{\pi}}\cos\phi,~~\sigma_2 = \sqrt{1+\frac{2\phi}{\pi}}\sin\phi.
\]
In particular, $\sigma_2 > \sin\phi$.

\end{proof}

\section{Details of Numerical Experiments}
\label{app:experiment details}
All code can be found 
at the following link: 
\begin{center}
\href{https://github.com/suzannastep/linear_layers_experiments}{https://github.com/suzannastep/linear\_layers\_experiments}.
\end{center}
\paragraph{Data generation process}
We choose a universal training superset $\{\vx_i\}_{i=1}^{2048}$ where each $\vx_i \sim \uniform([-\frac{1}{2},\frac{1}{2}])$. 
For each $r \in \{1,2, 5\}$, 
we create an index-rank-$r$ function $f$ as described in \Cref{sec:experiments} where 
\begin{itemize}
    \item $\mV \; (20 \times r)$ is the first $r$ columns of a random orthogonal matrix, 
    \item $\mU \; (21 \times r)$ is the first $r$ columns of a random orthogonal matrix, 
    \item $\mathbf{\Sigma} \; (r \times r)$ is a diagonal matrix with entries drawn from $\uniform([0,100])$, 
    \item $\mW = \mU \mathbf{\Sigma} \mV^\top \; (21 \times 20)$, and 
    \item $\va$ and $\vb \; (21 \times 1)$ are vectors with entries drawn from the standard normal distribution and $\uniform([-\frac{1}{2},\frac{1}{2}])$, respectively.
\end{itemize}
Then for each label noise standard deviation $\sigma \in \{0,0.25,0.5,1\}$, we create training pairs of the form $\{(\vx_i,f(\vx_i) + \varepsilon_i)\}_{i=1}^{2048}$ where $\varepsilon_i \sim N(0,\sigma^2)$.
We then create training sets of size  $n \in \{64, 128, \ldots, 2048\}$ consisting of $\{(\vx_i,f(\vx_i) + \varepsilon_i)\}_{i=1}^{n}$. This ensures that samples in the training set of size $64$ are a subset of the samples in the training set of size $128$, etc. 

\paragraph{Training and hyperparameter tuning}
For each index rank $r$, dataset size $n$, and label noise standard deviation $\sigma$, we train a model of the form \eqref{eq:L layers nn model} of depth $L$ and with hidden-layer widths all equal to $1000$, starting from PyTorch's default initialization using Adam with a fixed batch size of 64 and the mean-squared error loss. We train with a learning rate of $10^{-4}$ for 60,000 epochs with a weight decay ($\ell_2$-regularization) parameter of $\lambda$ followed by $100$ epochs with a learning rate of $10^{-5}$ and no weight decay. 
This final training period without weight decay ensures the trained networks have small mean-squared error; all models have a final training MSE of no more than $\sigma + 10^{-2}$. The values of the $\ell_2$-regularization term throughout training are plotted in \Cref{fig:traintime}.

\begin{figure}[hp!]
    \centering
    \includegraphics[height=\dimexpr \textheight - 3\baselineskip\relax]{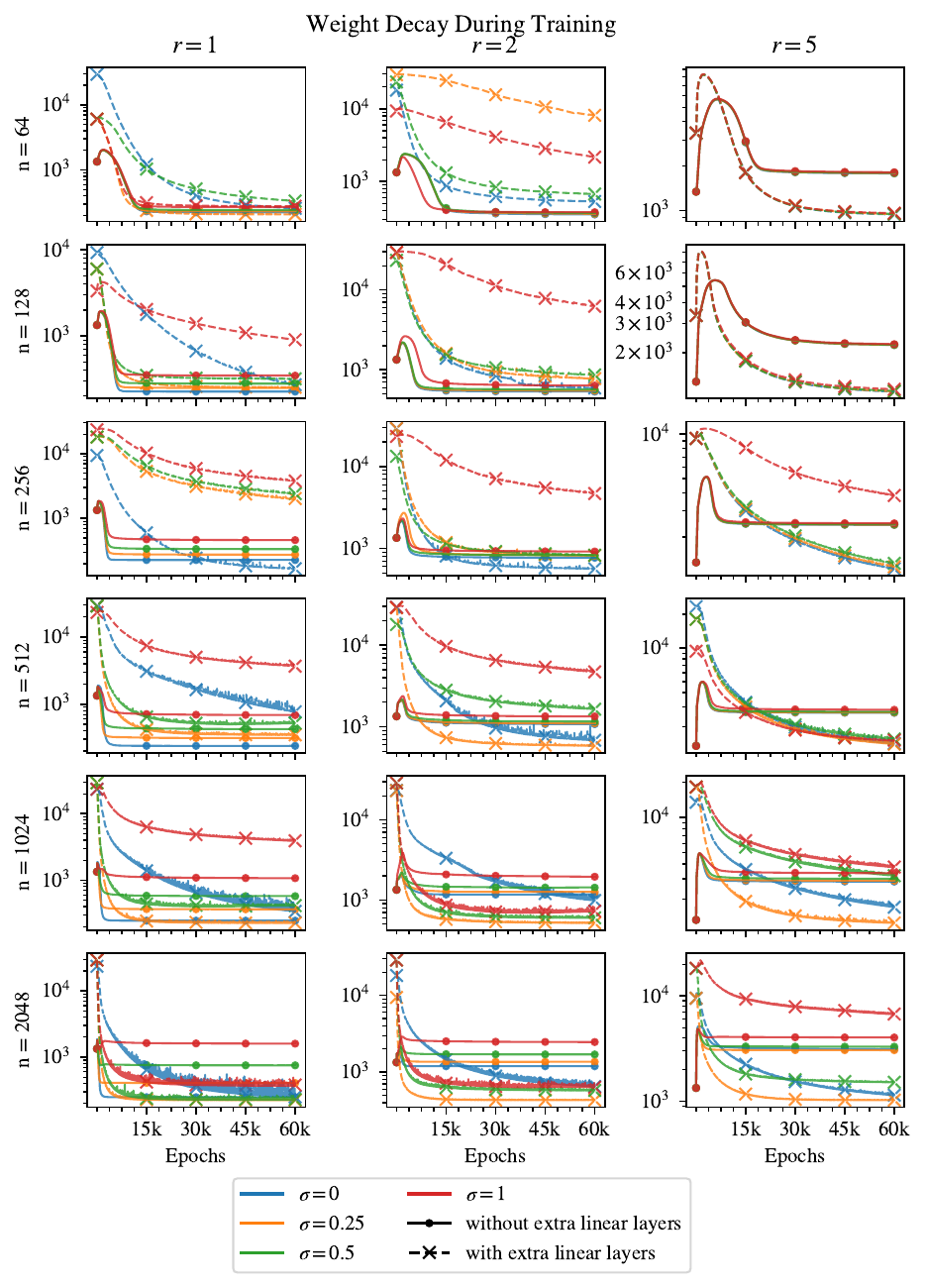}
    \caption{\textbf{Values of the $\ell_2$-regularization term throughout 60,100 training epochs.} Markers are shown every 15k epochs to clarify which lines correspond to models with/without extra linear layers. See \Cref{app:experiment details}.}
    \label{fig:traintime}
\end{figure}

We tune the hyperparameters of depth ($L$) and $\ell_2$-regularization strength ($\lambda$) on a validation set of size 2048 from the same distribution as the training set. 
We use hyperparameters ranges of $L \in \{3,\ldots,9\}$ and $\lambda \in \{10^{-3},10^{-4},10^{-5}\}$.
Models with no linear layers correspond to depth $L = 2$, for which we tune the hyperparameter $\lambda$ in the same way.

\section{Additional Numerical Experiments}

\subsection{Comparison to Training with SGD}
\label{sec:sgd training}
To validate that our empirical findings hold beyond a single training regime, we performed numerical experiments identical to those described in \Cref{sec:experiments} and \Cref{app:experiment details} but using SGD instead of Adam for training. 
We focused only on data from single-index models ($r = 1$) with little to no label noise ($\sigma \in \{0, 0.25\}$).
The same general conclusions hold; in this setting adding linear layers leads to improved generalization (\Cref{fig:SGD Generalization MSE}), a stark singular-value dropoff (\Cref{fig:SGD trained singular values}), and alignment between the principal subspace of the trained model and the true central subspace of $f$ (\Cref{fig:SGD active subspace err}). Interestingly, this is true even though SGD with weight decay does not seem to substantially decrease the $\ell_2$-norm of the parameters during training; see \Cref{fig:SGDtraintime}.

\begin{figure}[ht!]
    \centering
    \includegraphics[width=\columnwidth]{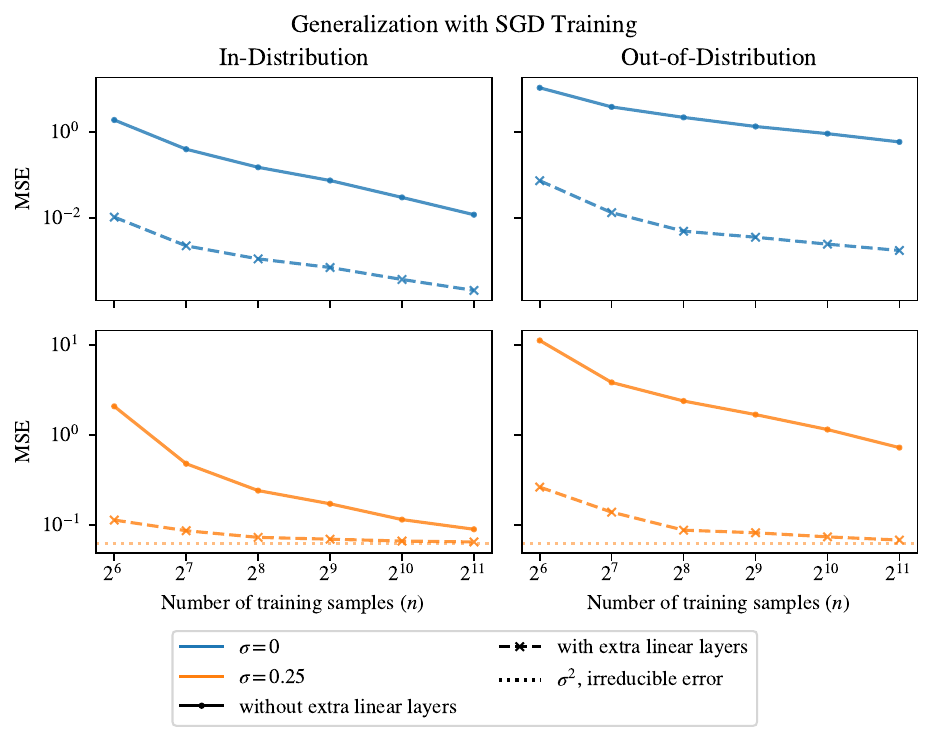}
    \caption{\textbf{Adding linear layers improves generalization on a single-index model when training with SGD.} In-distribution (left) and out-of-distribution (right) generalization performance of networks trained via SGD with or without extra linear layers on data from a single-index model with (bottom) and without (top) label noise. Models trained with extra linear layers demonstrate significantly improved generalization in this setting, even in the presence of label noise. See \Cref{sec:sgd training}.}
    \label{fig:SGD Generalization MSE}
\end{figure}

\begin{figure}[ht!]
    \centering
    \includegraphics[width=\columnwidth]{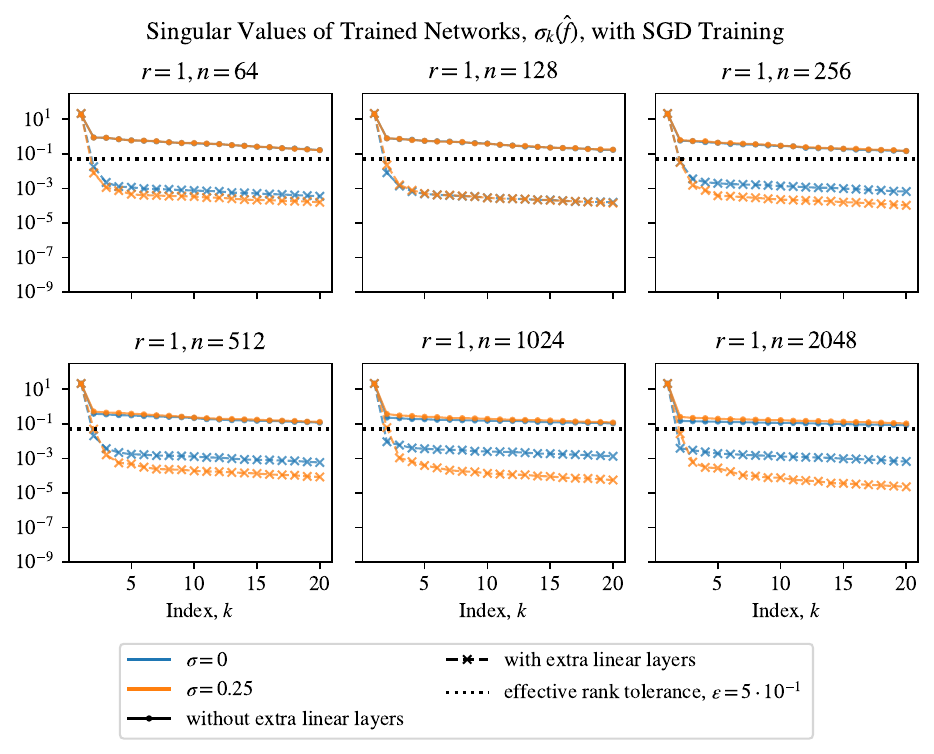}
    \caption{\textbf{Adding linear layers decreases the singular values of trained networks when training with SGD.} Singular values of networks trained via SGD with or without extra linear layers on data from a single-index model with (orange) or without (blue) label noise. Models with extra linear layers exhibit sharper singular value dropoff and have a smaller effective index rank at the $\varepsilon = 5\cdot 10^{-1}$ tolerance level than models without linear layers. Note that the singular value dropoff is less sharp than in models trained with Adam (c.f. \Cref{fig:trained singular values}), and accordingly we use a larger effective rank tolerance in this setting. See \Cref{sec:sgd training}.}
    \label{fig:SGD trained singular values}
\end{figure}

\begin{figure}[ht!]
    \centering
    \includegraphics[width=\columnwidth]{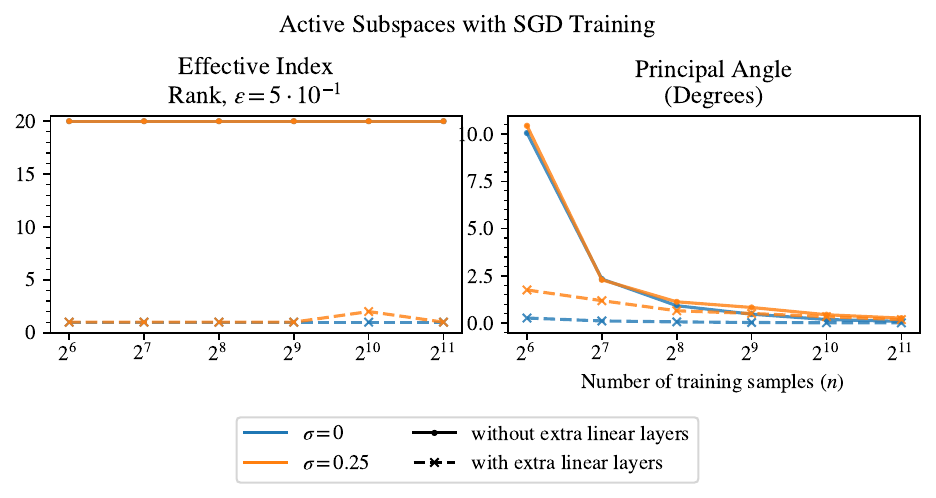}
    \caption{\textbf{When training with SGD, adding linear layers helps find networks with low effective index rank that are aligned with the true principal subspace.} Estimates of the effective index rank (left) and principal subspace alignment (right) of networks trained via SGD with or without extra linear layers on data from a single-index model with (orange) or without (blue) label noise. See \Cref{sec:sgd training}.}
    \label{fig:SGD active subspace err}
\end{figure}

\begin{figure}[ht!]
    \centering
    \includegraphics[width=\columnwidth]{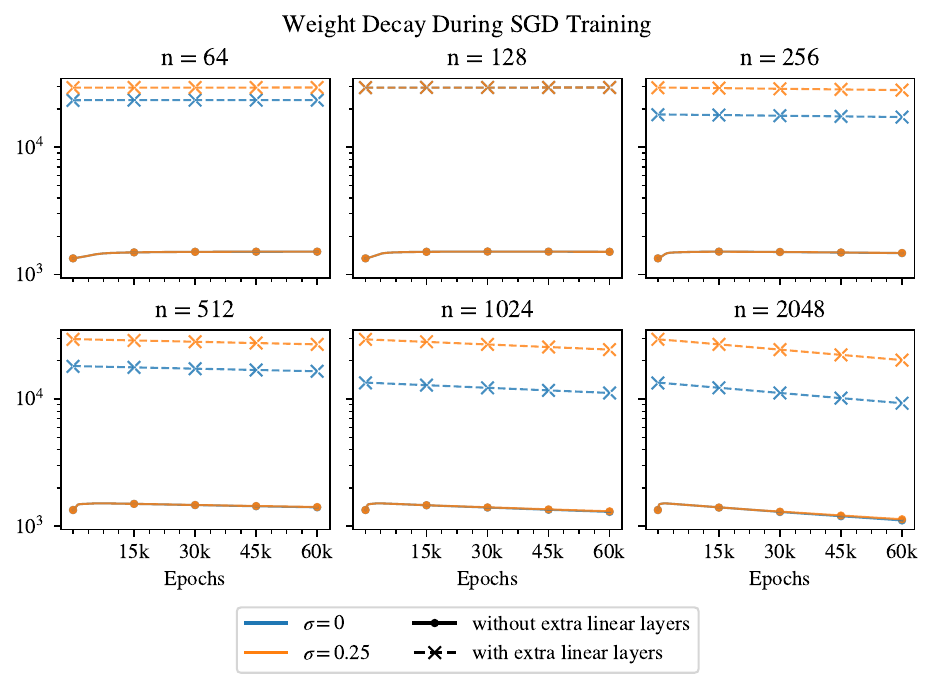}
    \caption{\textbf{Values of the $\ell_2$-regularization term throughout 60,100 training epochs of SGD.} Markers are shown every 15k epochs to clarify which lines correspond to models with/without extra linear layers. See \Cref{sec:sgd training}.}
    \label{fig:SGDtraintime}
\end{figure}

\subsection{Using Deep ReLU Networks on Data From a Single-Index Model}
\label{sec:different architecture experiments}
As discussed in \Cref{sec:discussion}, the inductive bias of adding linear layers to a shallow ReLU network is not directly indicative of the inductive bias of deep ReLU networks. In this section we explore how deep ReLU networks behave when trained on data from a single-index model. We followed the procedure described in \Cref{sec:experiments} and \Cref{app:experiment details} but compare shallow ($L=2$) models and ``linear layers then ReLU" models as studied in this work (i.e., \Cref{eq:L layers nn model}) with deep ReLU models of the form
\begin{align}
\label{eq:L layers deep ReLU model}
    \va^\T[\mW_{L-1}[\cdots [\mW_2[\mW_1 \vx]_+]_+\cdots]_+ + \vb]_+ + c.
\end{align}
We focused only on data from single-index models ($r = 1$) with little to no label noise ($\sigma \in \{0, 0.25\}$).
Interestingly, in this setting deep ReLU models do not experience improved generalization over shallow models (\Cref{fig:othermodels Generalization MSE}) or experience significant EGOP singular-value decay (\Cref{fig:othermodels trained singular values}). 
Though these experiments are fairly small scale, we tentatively conclude that deep ReLU networks do not inherently favor functions with low mixed variation.

However, related work by Jacot \cite{jacot2022implicit,jacot2024bottleneck} has shown that, as depth approaches infinity, the representation cost of deep ReLU networks converges to a distinct notion of non-linear function rank. Empirically, a low-rank structure emerges in such networks, though this low-rankness is not equivalent to the index rank. Understanding the representation costs of general nonlinear deep networks, especially at finite depths, is an open problem.

\begin{figure}[ht!]
    \centering
    \includegraphics[width=\columnwidth]{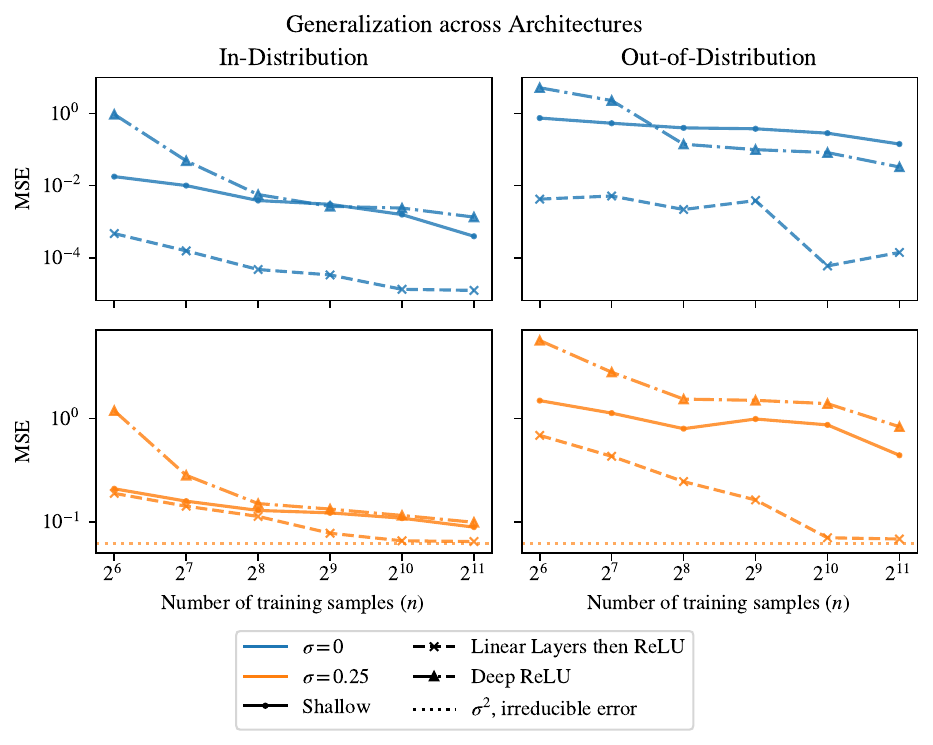}
    \caption{\textbf{Depth does not improve generalization of deep ReLU networks on data from a single-index model.} In-distribution (left) and out-of-distribution (right) generalization performance of a variety of model architectures trained on data from a single-index model with (bottom) and without (top) label noise. Deep ReLU models do not perform better than shallow networks, while the ``linear layers then ReLU" models studied in this work have significantly improved generalization in this setting, even in the presence of label noise. See \Cref{sec:different architecture experiments}.}
    \label{fig:othermodels Generalization MSE}
\end{figure}

\begin{figure}[ht!]
    \centering
    \includegraphics[width=\columnwidth]{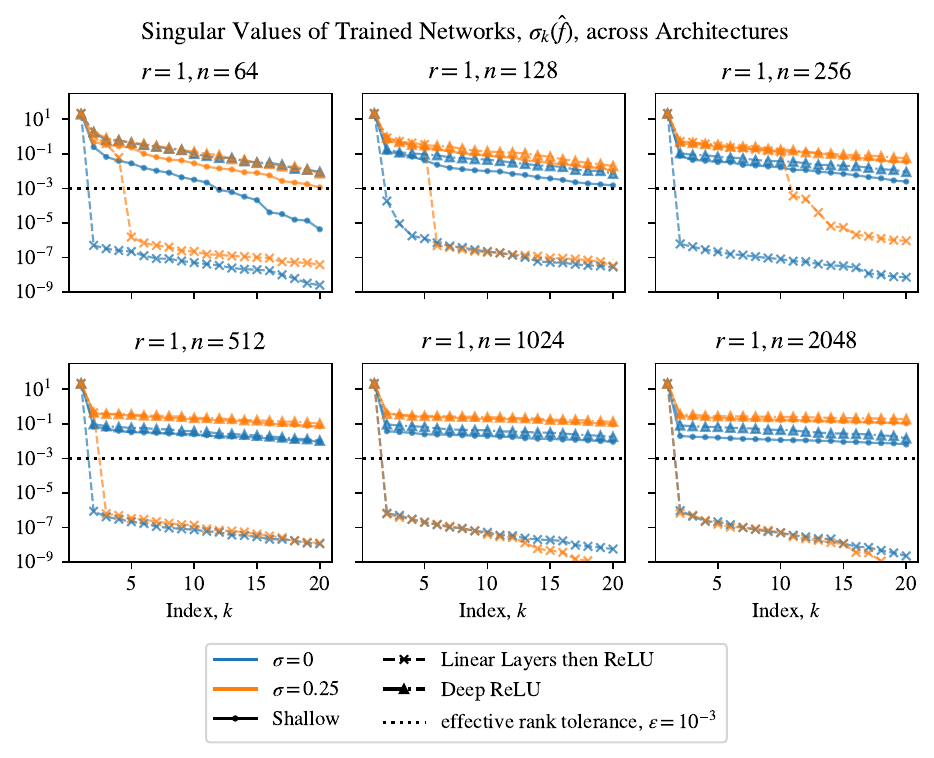}
    \caption{\textbf{Depth does not cause EGOP singular value decay in deep ReLU models.} Singular values of $\gradcov{\hat f}^{1/2}$ for a variety of model architectures trained on data from a single-index model with (orange) and without (blue) label noise. Deep ReLU models do not exhibit dramatic singular value dropoff, but models with extra linear layers do. See \Cref{sec:different architecture experiments}.}
    \label{fig:othermodels trained singular values}
\end{figure}

\begin{figure}[ht!]
    \centering
    \includegraphics[width=\columnwidth]{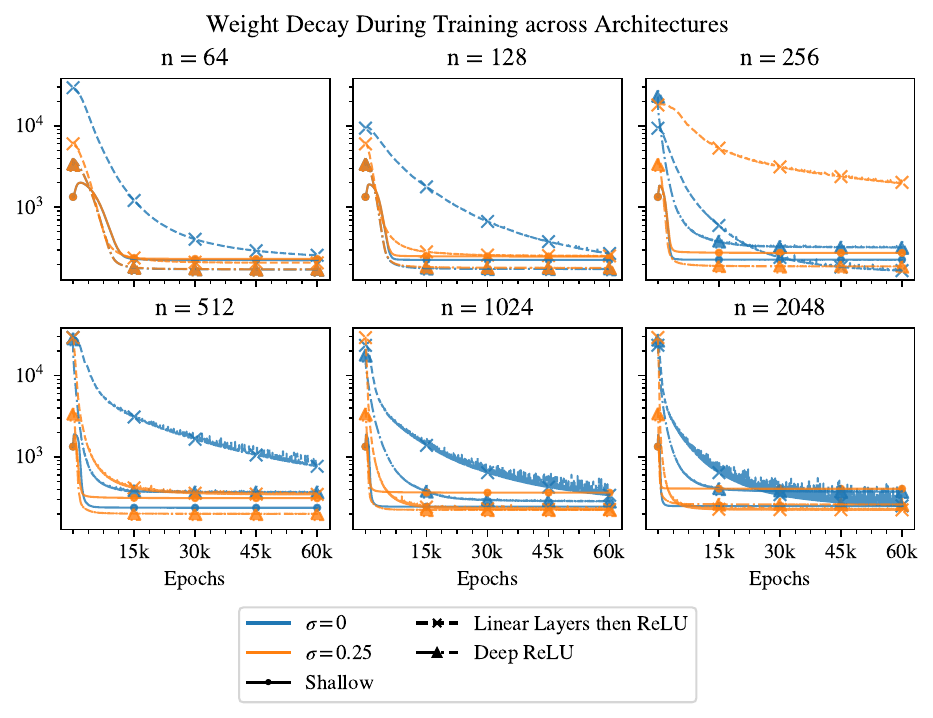}
    \caption{\textbf{Values of the $\ell_2$-regularization term throughout 60,100 training epochs of SGD.} Markers are shown every 15k epochs to clarify which lines correspond to which model architectures. See \Cref{sec:different architecture experiments}.}
    \label{fig:othermodelstraintime}
\end{figure}

\end{document}